\algrenewcommand\algorithmicrequire{\textbf{Input:}}
\algrenewcommand\algorithmicensure{\textbf{Output:}}
\algrenewcommand\algorithmicdo{} 
\algrenewcommand\alglinenumber[1]{}   %
\newcommand{\bR}{\mathbb{R}}
\newcommand{\bE}{\mathbb{E}}
\newcommand{\cN}{\mathcal{N}}
\newcommand{\prox}{\operatorname{prox}}
\newcommand{\opnorm}[1]{\left \| #1 \right \|_{\mathrm{op}}}
\newcommand{\mnorm}[1]{\left\| #1 \right\|}  %
\newcommand{\KL}[2]{\mathrm{KL} \left ( #1 \,\|\, #2 \right )}
\newtcbox{\mybox}{colback=blue!5,
	colframe=blue!30!black, center, enhanced, varwidth upper}
\definecolor{myblue}{RGB}{0, 50, 200} %
\definecolor{myorange}{RGB}{150, 50, 50} %
\newcommand{\PDM}{\texttt{ProxDM}\xspace}
\newcommand{\EE}{\mathbb{E}}
\newcommand{\oset}[3][0ex]{%
  \mathrel{\mathop{#3}\limits^{
    \vbox to#1{\kern-2\ex@
    \hbox{$\scriptstyle#2$}\vss}}}}
\newcommand{\smalloverleftarrow}[1]{\oset{\shortleftarrow}{#1}}
\newcommand{\pb}[1]{\smalloverleftarrow{p}_{#1}}
\newcommand{\ptb}{\pb{t}}
\newcommand{\pTb}{\pb{T}}
\newcommand{\pzb}{\pb{0}}
\newtheorem{theorem}{Theorem}
\newtheorem{corollary}{Corollary}
\newtheorem{lemma}{Lemma}
\newtheorem{proposition}{Proposition}
\newtheorem{assumption}{Assumption}
\theoremstyle{remark}
\crefname{assumption}{Assumption}{Assumptions}
\def\includeAppendix{1}
\newcommand{\camera}[1]{#1}
\title{Beyond Scores: Proximal Diffusion Models}
\author{%
  Zhenghan Fang\thanks{Mathematical Institute for Data Science, Johns Hopkins University, Baltimore, MD 21218, USA.} \and
  Mateo D\'iaz\footnotemark[1] \and Sam Buchanan\thanks{Toyota Technological Institute at Chicago, Chicago, IL 60637, USA.} \and Jeremias Sulam\footnotemark[1]
}
\date{}
\begin{document}

\maketitle

\begin{abstract}

Diffusion models have quickly become some of the most popular and powerful generative models for high-dimensional data. The key insight that enabled their development was the realization that access to the score---the gradient of the log-density at different noise levels
---allows for sampling from data distributions by solving a reverse-time stochastic differential equation (SDE) via forward discretization, and that popular denoisers allow for unbiased estimators of this score. In this paper, we demonstrate that an alternative, backward discretization of these SDEs, using proximal maps in place of the score, leads to theoretical and practical benefits. We leverage recent results in \textit{proximal matching} to learn proximal operators of the log-density and, with them, develop Proximal Diffusion Models (\PDM). Theoretically, we prove that $\widetilde{\mathcal O}(d/\sqrt{\varepsilon})$ steps suffice for the resulting discretization to generate an $\varepsilon$-accurate distribution w.r.t. the KL divergence. 
Empirically, we show that two variants of \PDM achieve significantly faster convergence within just a few sampling steps compared to conventional score-matching methods. \end{abstract}

\section{Introduction}

Within a remarkably brief period, diffusion models  \citep{song2019generative, ho2020denoising} have overtaken generative artificial intelligence, emerging as the \textit{de facto} solution across diverse domains including medical imaging \citep{Chung2022-bv}, video \citep{Kong2020-hb} and audio \citep{Ho2022-ab}, drug discovery \citep{Corso2022-wi}, and protein docking prediction \citep{Chu2024-cm}.
Diffusion models combine a forward diffusion process that converts the data distribution we wish to sample into noise and a reverse-time process that reconstructs data from noise. The forward process is easy to simulate by progressively adding noise to data. The reverse process, which allows for sampling data from noise, involves the so-called score function---the gradient of the log-density of the noisy data at any time point. Though unknown in general, such a score can be estimated by common denoising algorithms that approximate minimum mean-squared-error solutions. Various diffusion algorithms have been recently proposed \citep{song2020score,Lu2022-hr,wu2024stochastic}, all relying on forward discretizations of the reverse stochastic differential equation, and allowing for impressive generative modeling performance.

Despite these capabilities, current state-of-the-art diffusion-based methods face drawbacks: high-quality samples require a large number of model evaluations, which tends to be computationally expensive, and they are sensitive to hyperparameter choices and the lack of regularity of the data distribution \citep{salimans2022progressive,karras2022elucidating}.
Naturally, computational cost, and sensitivity to hyperparameters and data regularity are not unique to diffusion models. Many numerical analysis and optimization algorithms grounded on forward discretizations of continuous-time processes share similar limitations. For instance, when optimizing a smooth convex function $f$, the go-to algorithmic solution, gradient descent (GD), which updates $x_{+}\leftarrow x - h \nabla f(x)$, represents a forward Euler discretization of gradient flow $\dot x = -\nabla f(x)$ \citep{Ambrosio2008-xq}. GD is known to converge when the learning rate $h$ is on the order of $1/L$ where $L$ is the gradient Lipschitz constant. Consequently, when $f$ approaches nonsmoothness (i.e., $L$ is large), GD must proceed with very small steps, thus converging slowly.

Backward discretization schemes for continuous-time processes offer a potential remedy to these issues. For example, the backward Euler discretization for gradient flow updates iterates via the implicit equation $x_{+} = x - h \nabla f(x_{+})$. %
This algorithm is known as the Proximal Point Method, and, remarkably, it converges for any positive $h$ \citep{rockafellar1976monotone}. The tradeoff is that $x_{+}$ is defined implicitly, requiring solving an auxiliary sub-problem that is potentially as expensive as the original one. 
However, in the context of generative models, where $f$ represents the log-density of data distribution, one could learn such an update directly from data via neural networks (instead of learning the score), making the per-step complexity of a backward discretization comparable to that of a forward discretization.
This motivates the central question of this work.
\mybox{\centering Can backward discretization improve diffusion models?}
In this paper, we give a partial affirmative answer to this question. We develop a diffusion sampler based on an implicit, backward discretization scheme of the reverse-time stochastic differential equation (SDE). Our approach leverages recent results that allow us to train proximal operators from data
via \textit{proximal matching} \citep{Fang2024-vk}. We study two variants: one relying exclusively on backward discretization, and another hybrid approach that combines backward and forward terms.
Our methods are competitive with score-based diffusion models---including those based on probability flow---when using a large number of discretization steps, while significantly outperforming them with fewer steps. 
Additionally, we provide supporting convergence theory that yields state-of-the-art rates under the idealized assumption of having access to the exact implicit update or proximal operator.
Employing backward discretization of stochastic processes has been studied in both earlier \citep{pereyra2016proximal,Bernton2018-dd,durmus2018efficient,wibisono2019proximal,
Salim2019-bz,
Lee2020-op,
shen2020composite,
hodgkinson2021implicit} as well as more recent \citep{chen2022improvedb,liang2022proximal, 
fan2023improved,
liang2024proximal,
zhou2024proximal,
habring2024subgradient,
lau2024non,
liang2025characterizing,mitra2025fast,wibisono2025mixing} works, mostly in the context of proximal version of Langevin dynamics. To our knowledge, no backward discretization has yet been explored for diffusion models. We defer a more thorough discussion of related works to after the introduction of the necessary background. \camera{Code for reproducing the experiments in this work is available at \url{https://github.com/ZhenghanFang/ProxDM}.}

\section{Background}
\label{sec:background}

\paragraph{Diffusion and score-based generative models.} Diffusion models perform generative modeling by posing a reversed stochastic differential equation (SDE) of a process that transforms a sample from the data distribution, $X_0\sim p_0$, $X_0\in\mathbb R^d$, into Gaussian noise \citep{song2019generative, ho2020denoising}.
A widely adopted SDE for the forward process is given by 
\begin{align}\label{eq:general-forward}
    dX_t = -\tfrac{1}{2} \beta(t) X_t dt + \sqrt{\beta(t)} dW_t, \quad t \in [0, T].
\end{align}
where $W_t$ is a standard Wiener process in $\mathbb R^d$ (i.e. standard Brownian motion) \citep{song2020score}. 
At inference time, one can produce a new sample from $p_0$ by reversing the SDE above, resulting in the stochastic process %
\begin{align}\label{eq:general-reversed}
    dX_t = \left[ - \tfrac{1}{2} \beta(t) X_t - \beta(t) \nabla \ln p_t (X_t) \right] dt + \sqrt{\beta(t)} d\bar{W}_t,
\end{align}
where $\bar{W}_t$ is a reverse-time Brownian motion (with time flowing backwards from $T$ to $0$). This process can be simulated as long as the score function, $s_t(x) = \nabla \ln {p_t}(x)$, can be computed (or approximated for unknown real-world distributions).

The continuous-time process in \eqref{eq:general-reversed} must be discretized to be implemented. By far, the most common strategy is applying the forward Euler-Maruyama discretization, leading to the score-based sampling algorithm:
\begin{equation}
    X_{k-1}
    = X_{k} + \gamma_k \left[\tfrac{1}{2}  X_{k}  +  \nabla \ln p_{t_{k}}( X_{k} )\right] + \sqrt{ \gamma_k } z_k  \tag{Euler-Maruyama}, %
    \label{eq:euler-maruyama-update}
\end{equation}
where $\gamma_k = \beta(t_k) (t_k - t_{k-1})$ and $z_k \sim \cN(0, I)$. Many variations of these processes have been proposed. For instance, instead of the variance-preserving formulation of the process in \eqref{eq:general-forward}, one could instead consider a variance-exploding alternative (by removing the drift) \citep{song2019generative,song2020score}. Likewise, instead of considering stochastic processes, one can find deterministic ordinary differential equations (ODEs) that reverse the forward process and match the marginals of the reversed SDE, resulting in a probability-flow ODE \citep{song2020score,song2020denoising}. %

In order to apply these sampling algorithms for generative modeling on real-world data, corresponding score functions must be obtained for every distribution $p_t$. The popular \textit{score matching} approach \citep{hyvarinen2005estimation,song2019generative,song2019sliced,pang2020efficient} learns  parametric regression functions $s_\theta(x,t)$ (with parameters $\theta$) by minimizing the objective
\begin{align}\label{eq:Score-Matching}
    \underset{{t, X_0, \varepsilon}}{\bE} \left\{ \lambda(t) \left\| s_\theta(X_t, t) - \frac{-\varepsilon}{\sqrt{1 - \alpha_t}} \right\|_2^2 \right\},
\end{align}
where $X_0 \sim p_0$, $\varepsilon \sim \cN(0, I_d)$, $\alpha_t = \exp\left( -\int_0^t \beta(s) ds \right)$, $X_t = \sqrt{\alpha_t} X_0 + \sqrt{1 - \alpha_t} \varepsilon$, and $\lambda (t)$ is a function that weighs different terms (for different $t$) appropriately. It can be shown \citep{song2020score} that the solution to such an optimization problem results in the desired score, $s^\star(x,t) = \nabla \ln p_t(x)$. There are two important remarks about this approach: $(i)$ such a function amounts to a \textit{denoiser}---that is, it seeks to recover a signal corrupted by (Gaussian) noise\footnote{More precisely, 
\cref{eq:Score-Matching} 
seeks to equivalently estimate the \textit{noise} that has been added to the ground truth $X_t$.}---and $(ii)$ such a denoiser is a minimum mean-squared error (MMSE) estimate, as it minimizes the reconstruction error with respect to a squared Euclidean norm. This can also be seen alternatively by Tweedie's formula \citep{Efron2011-hp,Robbins1956}, which relates the score to the conditional expectation of the unknown, clean sample $\sqrt{\alpha_t} X_0$; namely $\bE[\sqrt{\alpha_t} X_0 \mid {X_t}] = X_t + \sigma^2 \nabla \ln p_{t}(X_t)$, where $\sigma = \sqrt{1-\alpha_t}$. Such a conditional mean is the minimizer of the MSE regression problem \citep{milanfar2024denoising}, showing that MMSE denoisers provide direct access to the score of the distribution $p_t$. Fortunately, the design and development of denoising algorithms is very mature, and strong algorithms exist \citep{milanfar2024denoising}. As an example, it is standard to employ convolutional networks with U-net architectures \citep{ronneberger2015unet} for score-matching in image generation \citep{ho2020denoising, song2020score}.

\paragraph{Proximal operators.} Proximal operators are ubiquitous in optimization, signal and image processing, and machine learning. For a functional $f\colon \bR^d \to \bR$, the proximal operator of $f$ is a mapping from $\bR^d$ to $\bR^d$ defined by
\begin{equation}\label{eq:ProxDef}
    \prox_{f} (x) = \arg\min_u f(u) + \tfrac{1}{2} \| u - x \|_2^2 .
\end{equation}
Intuitively, the proximal operator finds a point that is close to the anchor, $x$, and has low function value in $f$. It is convenient to introduce an explicit parameter $\lambda \in \bR^+$ to control the relative strength of the functional $f$, and consider $\prox_{\lambda f} (x)$ instead. In this way, $\prox_{\lambda f}(x) \to x$ when $\lambda \to 0$, whereas $\prox_{\lambda f}(x) \to \arg\min_u f(u)$ when $\lambda \to \infty$. Further, when $f$ is $L$-smooth and $\lambda \leq \frac{1}{L}$, first-order optimality conditions yield  that $x^+ = \prox_{\lambda f}(x)$ if, and only if, $x^+ = x - \lambda \nabla f(x^+),$ which illustrates its relation to backward discretizations.  

Importantly, proximal operators can be equivalently described as \textit{maximum-a-posteriori} (MAP) denoising algorithms under a Gaussian corruption model. Indeed, it is easy to verify\footnote{\eqref{eq:prox-ex} follows since $p_0(X|y) \propto p(y|X)p_0(X) \propto \exp(-\|y-X\|^2/(2\sigma^2))\exp(-f(X))$ and $\ln$ is monotonic.} that when $y = x + \varepsilon$, for $x \sim p_0$, Gaussian noise $\varepsilon\sim\mathcal N(0,\sigma^2 I_d)$, and $f = -\ln p_0$, then $\prox_{\sigma^2 f}(y)$ provides the mode of the conditional distribution of the unknown $x$ given the measurements $y$; i.e., \begin{equation}\label{eq:prox-ex}\prox_{\sigma^2 f}(y) = \underset{X}{\arg\max}~{p_0}(X | y).\end{equation} This view of proximal maps in terms of MAP denoising algorithms should be contrasted with that of the score, which relies on MMSE denoising algorithms: unlike the latter, the former does not require the function $f = -\ln p_0$ to be differentiable nor continuous.\footnote{It simply requires $f$ to be weakly-convex or prox-regular~\citep{rockafellar2009variational}.} This has been extensively exploited in nonsmooth optimization \citep{beck2009fast,combettes2011proximal,drusvyatskiy2018error,drusvyatskiy2019efficiency}, leading to speedups in convergence. The advantages of employing proximal operators (instead of gradients) for sampling have also been recognized in earlier works \citep{pereyra2016proximal,Bernton2018-dd,durmus2018efficient,wibisono2019proximal,habring2024subgradient,lau2024non}, leading to faster convergence or less stringent conditions on the parameters of the problem, like step sizes. More generally, MAP estimates are known to provide samples that are closer to the support of the data distribution than MMSE denoisers \citep{delbracio2023inversion}.
In the following section, we will harness proximal maps to develop a new discretization of denoising diffusion models, leveraging MAP denoisers instead of MMSE ones.

\section{Proximal-based generative diffusion modeling}
\label{sec:ProxGenerativeModels}

Recall from \cref{sec:background} that all score-based generative models apply forward discretization to the reverse-time SDE, resulting in updates of the general form in \cref{eq:euler-maruyama-update}, with the score playing a central role. We now propose a different avenue to turn this SDE into a sampling algorithm.

\subsection{From backward discretization to proximal-based sampling}

Consider now the \textit{backward} discretization of the reversed-time SDE from \eqref{eq:general-reversed}, that is:   
\begin{equation}\label{eq:backward_full}
    X_{k-1} = X_{k} + \gamma_k \left[\tfrac{1}{2}  X_{k-1} + {\color{black} \nabla \ln p_{t_{k-1}}\left(  X_{k-1} \right) }\right] + \sqrt{ \gamma_k } z_k.
\end{equation}
Unlike the forward (Euler-Maruyama) discretization, this is an implicit equation of $X_{k-1}$. It is easy to verify that this update corresponds to the first-order optimality conditions of the problem $\min_u g(u)$, where $g(u) := - \frac{2\gamma_k}{2-\gamma_k} \ln p_{t_{k-1}} (u) + \frac{1}{2} \left\| u - \frac{2}{2-\gamma_k} ( X_{k} + \sqrt{ \gamma_k } z_k) \right\|_2^2 $, precisely a proximal step. Such an update can be written succinctly as
\begin{equation}\tag{PDA}
    X_{k-1} = \prox_{-\frac{2\gamma_k}{2-\gamma_k} \ln p_{t_{k-1}}} \left[ \frac{2}{2-\gamma_k} ( X_{k} + \sqrt{ \gamma_k } z_k) \right].
    \label{eq:pda-update}
\end{equation}
We refer to this step as the proximal diffusion algorithm (PDA). It remains to define the discretization of the step $\gamma_k$: recall that for forward discretizations, one can take $\gamma_k = \beta(t_k)\Delta t_k$. Here, we instead make the choice of setting $\gamma_k = \int_{t_{k-1}}^{t_k} \beta(s) ds$. Note that both are equivalent when $\Delta t_k \to 0$.

It is worth remarking that the discretization in \eqref{eq:backward_full} is a \textit{fully} backward discretization: one where both terms depending on the time-continuous $X(t)$ are discretized as $X_{k-1}$. Yet, one can also consider a \emph{hybrid}  approach by considering the updates given by $X_{k-1} = X_{k} + \gamma_k \left[\frac{1}{2}  X_{k} + {\color{black} \nabla \ln p_{t_{k-1}}\left(  X_{k-1} \right) }\right] + \sqrt{ \gamma_k } z_k$, where both $X_k$ and $X_{k-1}$ are employed in the discretization of the drift term. In this case, the resulting update can be concisely written as
\begin{equation}\tag{PDA-hybrid}
X_{k-1} = \prox_{-\gamma_k \ln p_{t_{k-1}}} \left[ \left( 1 + \tfrac{1}{2}\gamma_k \right) X_{k} + \sqrt{\gamma_k} z_k \right].
\label{eq:pda-hybrid}
\end{equation}
The resulting hybrid approach is akin to a forward-backward method in nonsmooth optimization \citep{beck2009fast},  is complementary to the update in \eqref{eq:pda-update}, and will have advantages and disadvantages, on which we will expand shortly. For now, we have everything we need to formally define our sampling algorithm for Proximal Diffusion Models (\PDM), as presented in \cref{alg:ProxDM}.

\begin{algorithm}[t]
\caption{Proximal Diffusion Model (\PDM)}
\label{alg:ProxDM}
\begin{algorithmic}[1]
\Require Map $\prox_{-\lambda \ln p_t} (\cdot),$ function $\beta(\cdot)$, grid  $\{0= t_0 < t_1 < \dots < t_N = T\},$ boolean \texttt{hybrid}.
\Ensure Sample $X_0.$
\State Generate $X_N \sim \cN(0, I)$
\For{$k = N$ \textbf{to} $1$}
    \State Generate $z_{k} \sim \cN(0, I)$ and compute $\gamma_k \gets \int_{t_{k-1}}^{t_k} \beta(s) ds$
    \If{\texttt{hybrid}} 
    \State Update $X_{k-1} \gets \prox_{-\gamma_k \ln p_{t_{k-1}}} \left[ \left( 1 + \tfrac{1}{2}\gamma_k \right) X_{k} + \sqrt{\gamma_k} z_k \right]$
    \Else 
        \State Update $X_{k-1} \gets \prox_{-\frac{2\gamma_k}{2-\gamma_k} \ln p_{t_{k-1}}} \left[ \tfrac{2}{2 - \gamma_k}  \left(X_{k} + \sqrt{\gamma_k} z_k \right)\right]$
    \EndIf
\EndFor
\State \Return $X_0$
\end{algorithmic}
\end{algorithm}

\paragraph{Remarks.} First, we note that this technique for deriving proximal versions of diffusion models via backward discretization is quite general and widely applicable to other variants of diffusion sampling schemes. We have shown this idea for a variance preserving (VP) SDE and the Euler-Maruyama solver, but the same can be done for 
the reverse diffusion sampler and the ancestral sampler (see \citep[Appx. E]{song2020score}), as well as variance exploding (VE) SDE and probability flow ODE \citep{song2020score}. In each case, one would obtain a new proximal-based sampling algorithm that mirrors the diversity of score-based sampling algorithms (see further discussions in \cref{app:prox-extension}).

Compare now the update in \cref{eq:pda-update} to the score (gradient)-based update in \cref{eq:euler-maruyama-update}. First, both algorithms call a denoiser of the data distribution at each step; the score-based method uses an MMSE denoiser based on gradients, whereas PDA uses a MAP denoiser via proximal operators. Second, both updates consist of a denoising step and a noise addition step, but in switched order: the forward version applies denoising first, followed by noise addition. As a result, the final samples from score-based models do contain small amounts of noise, requiring an additional, ad-hoc denoising step at the end to mitigate this and improve sampling performance (see e.g., \cite[Appx. G]{song2020score}). In contrast, PDA performs denoising after noise injection. This avoids the need for extra modifications and yields a more principled and general scheme.
Given the benefits of backward discretization, why should one bother with a hybrid approach \eqref{eq:pda-hybrid}? As we will see in the next section, \eqref{eq:pda-update} enjoys faster convergence rates. However, this comes at the cost of a limitation in the minimal number of sampling steps: one can verify from \eqref{eq:pda-update} that we must ensure $\gamma_k<2$ (as otherwise the regularization parameter, $\frac{2\gamma_k}{2-\gamma_k}$, would be negative), resulting in an upper-bound on the step size and thus a lower-bound to the number of sampling steps. The hybrid approach in \eqref{eq:pda-hybrid} does not suffer from this hard constraint, resulting in practical benefits, albeit at the expense of a slower theoretical rate. 

The reader should note that the connection between backward discretization and proximal-based algorithms is not new and has been explored in different contexts across optimization and sampling. For example, while forward discretization of the gradient flow leads to the gradient descent algorithm, backward discretization corresponds to the proximal point method \citep{rockafellar1976monotone}. In the context of sampling, forward discretization of the Langevin dynamics yields the (unadjusted) Langevin algorithm (ULA), whereas backward discretization leads to the proximal Langevin algorithm (PLA) \citep{pereyra2016proximal,Bernton2018-dd,durmus2018efficient,wibisono2019proximal}. It is interesting to note that, in this context, proximal algorithms that leverage backward discretizations often provide benefits that the simple forward ones do not have: they are often more stable to the choice of hyper-parameters and can do away with requiring differentiability of the potential function \citep{pereyra2016proximal,wibisono2019proximal}. A less related but interesting connection between forward-backward methods (in the space of measures) and sampling has also been studied in recent works \citep{titsias2018auxiliary,Lee2020-op,shen2020composite,chen2022improvedb,wibisono2025mixing}, and can provide unbiased asymptotic samples (which ULA does not provide) \citep{wibisono2025mixing}. These advances, however, have all been provided in the context of Langevin dynamics and, to the best of our knowledge, our PDA approach in \cref{alg:ProxDM} is the first instantiation of these ideas in the context of diffusion processes.

Lastly, score-based methods need the gradient of the log-density at time $t_k$, $\nabla \ln p_{t_k}$, which can be easily estimated via score-matching. On the other hand, both proximal updates in \cref{alg:ProxDM} require $\prox_{-\lambda_k \ln p_{t_{k-1}}}$, i.e. the proximal operator for the log-density at time $t_{k-1}$. %
Next, we show how to estimate such proximal operators.

\subsection{Proximal matching for proximal operator estimation}
Learning data-dependent proximal operators has only recently begun to receive attention \citep{hurault2022proximal,meinhardt2017learning,li2022learning,Fang2024-vk}. We will make use of recent work 
by \citet{Fang2024-vk}, who 
presented a general approach to obtaining learned proximal networks (LPNs). They present a \textit{proximal-matching} loss function that ensures the recovery of the proximal of the log-density\footnote{The work in \citep{Fang2024-vk} also proposes a specific parametrization of neural networks based on gradients of convex functions (such as those arising from ICNNs \citep{amos2017input}) that guarantees exact proximal operators for any obtained network.} of continuous distributions at arbitrary regularization strengths from i.i.d. samples from that distribution.
In a nutshell, proximal-matching augments the random samples $X\sim p_0$ with Gaussian noise, $Y = X + \sigma \epsilon$, for $\epsilon \sim \cN(0,I_d)$. Then, it performs a regression using a special loss function and schedule to learn a MAP denoiser for the data distribution, with the noise level dependent on the desired regularization strength $\lambda$ in the proximal. To learn $\prox_{-\lambda \ln p_0 }$, one minimizes the problem $\bE_{X} \bE_{Y|X} \left[ \ell_\text{PM} ( f_\theta( Y ) , X ; \zeta) \right]$ 
with the proximal-matching loss \begin{align*}
\ell_\text{PM} (x,y;\zeta) = 1 - 
    \exp \left( -\frac{\|x-y\|_2^2}{d\zeta^2} \right).
\end{align*}
\citet[Theorem 3.2]{Fang2024-vk}
showed that, in the limit of $\zeta \to 0$, the problem above\footnote{We remark that the form of $\ell_\text{PM}$ used here differs slightly from the definition in \citep{Fang2024-vk}, but it does not change the optimization problem nor its guarantees.} recovers the proximal operator of the log density of the distribution with regularization strength $\sigma^2$; i.e. $f_{\theta^*} = \prox_{-\sigma^2 \ln p_0}$.
\camera{Intuitively, the proximal matching loss can be interpreted as a smoothed version of the $\ell_0$ pseudo-norm. In one dimension, minimizing the (non-differentiable) $\ell_0$ loss recovers the mode of a distribution. The proximal matching loss generalizes this idea to higher dimensions and differentiable functions, making it amenable to first-order optimization methods. \cref{fig:prox-matching-loss} visualizes the proximal matching loss along with the mean-squared error and $\ell_1$ losses to illustrate their difference.}

Here, we extend proximal-matching to a setting that will enable us to learn a set of proximals for varying densities and regularization strengths, as needed for \PDM: the set $\{\prox_{-\lambda \ln p_t}\}$ for a range of $t$ and $\lambda$'s. Our generalization of the proximal matching objective for proximal diffusion model is
\begin{align}
    \theta^* = \arg \min_\theta \bE_{t, \lambda} \left\{ \bE_{X_t} \bE_{Y | X_t} \left[ \ell_{PM} \left( f_\theta(Y; t, \lambda),  X_t ;  \zeta \right) \right]  \right\}
    \label{eq:pm-generalized}
\end{align}
where $X_t \sim p_t$ and $Y | X_t \sim \cN(X_t, \lambda I_d)$ are clean and noisy samples for the forward process, respectively. Intuitively, with sufficient data and model capacity (just as in score-matching networks), the optimal solution to \cref{eq:pm-generalized}, denoted by $f_{\theta^*}(x; t, \lambda)$, approximates $\prox_{-\lambda \ln p_t} (x)$ as $\zeta \to 0$. %

\subsection{Practical considerations and implementation details}
\label{sec:practical}
Although this proximal matching framework is conceptually simple,
translating it into practice requires careful implementation. Herein, we present a few important methodological details.

\paragraph{Sampling of $(t, \lambda)$ pair.}
Optimizing \cref{eq:pm-generalized} requires sampling $(t, \lambda)$ pairs, and the sampling distribution should ideally match the distribution to be used after training. As in score-based models, a reasonable choice for the time step $t$ is uniform in $[0, T]$. The values of $\lambda$ is less obvious, however. Recall that this regularization strength depends on the step size ($\gamma_k$) and, thus, also on time $t_{k-1}$ and $t_k$. %
We adopt a heuristic for sampling $(t,\lambda)$ according to the candidate number of steps to be used at inference time (see \cref{sec:sampling-t-and-lambda} for further details and motivation), which in practice enables the trained model to be applied for a range of different numbers of steps. We note that this sampling scheme is not universally optimal, and we leave the optimization of sampling schemes to future work.

\paragraph{Parameterization of the proximal network $f_\theta(x; t, \lambda)$.} We parameterize the proximal operator $\prox_{-\lambda \ln p_t}$ using a neural network denoted by $f_\theta(\cdot; t, \lambda)$. Notably, while the score network is conditioned on a single time scalar $t$ \citep{ho2020denoising,song2020score}, the proximal network is conditioned on two scalars, $t$ and $\lambda$. \camera{Similar to score networks, conditioning is implemented by adding learned embeddings for $t$ and $\lambda$ to the intermediate feature maps. Moreover, the score network indirectly parameterizes the score via $s_\theta(x, t) = \frac{-\epsilon_\theta(x, t)}{\sqrt{1 - \alpha_t}}$ \citep{ho2020denoising}, where $s_\theta(x,t)$ is the (learned) score and $\epsilon_\theta(x,t)$ is a neural network. 
Analogously, we parameterize the proximal indirectly by letting $f_\theta(x; t,  \lambda) = x - \sqrt{\lambda} \epsilon_\theta(x; t, \lambda)$, where $\epsilon_\theta(x; t, \lambda)$ is a neural network that predicts the residual of the proximal operator: $\epsilon_\theta(x ; t, \lambda) =  [x - f_\theta(x ; t, \lambda)] / \sqrt{\lambda}$. %
This parameterization led to better empirical results at large step numbers in our experiments} and coincides naturally with the balancing of objective across different choices of $(t, \lambda)$'s, which we discuss next.

\paragraph{Balancing contributions of different $(t, \lambda)$ in the objective.}
Score matching uses a weighing function to balance the loss at different times (see \citep[$\lambda(t)$ in Eq. (7)]{song2020score}). For proximal matching, a proper weighing should also be used for balanced learning, and the weighing should be based on both $t$ and $\lambda$ (c.f. \cref{eq:pm-generalized}). A general rule of thumb would choose these weights such that the objective of the inner expectation in \cref{eq:pm-generalized} has similar magnitude across different $t$ and $\lambda$. Score model achieves this by weighing according to the square of the magnitude of the target in the objective \citep[Sec. 3.3]{song2020score}, which is reasonable when the magnitude of the target can be estimated based on $t$ and when the loss function is ($2$-)homogeneous. Unfortunately, none of these are true for proximal-matching. %
Thus, instead of weighing the objective, we transform the target to align its magnitude across different $(t, \lambda)$'s. Specifically, we rewrite the problem in \eqref{eq:pm-generalized} as follows (but without changing the objective):
\begin{align}
    \theta^* = \arg \min_\theta \bE_{t, \lambda} \left\{ \bE_{X_t} \bE_{Y | X_t} \left[ \ell_\text{PM} \left( { \color{black} \frac{Y - f_\theta(Y; t, \lambda)}{\sqrt{\lambda}}} , {\color{black} \frac{Y - X_t}{\sqrt{\lambda}} } ;  \zeta \right) \right]  \right\},
    \label{eq:pm-transformed}
\end{align}
where $\epsilon := \frac{Y - X_t}{\sqrt{\lambda}}$ is the new target. Since $Y \sim \cN(X_t, \lambda I)$, we have $\epsilon \sim \cN(0, I)$. Notably, the magnitude of $\epsilon$ is \emph{independent} of $(t, \lambda)$, allowing for a natural balance. Plugging in the parameterization of $f_\theta(x; t,  \lambda) = x - \sqrt{\lambda} \epsilon_\theta(x; t, \lambda)$, the final objective reduces to:
\begin{align}
    \theta^* = \arg \min_\theta \bE_{t, \lambda} \left\{ \bE_{X_t} \bE_{\epsilon} \left[ \ell_\text{PM} \left( \epsilon_\theta \left( X_t + \sqrt{\lambda} \epsilon \,;\, t, \lambda \right) ,  \epsilon  \,;\,  \zeta \right) \right]  \right\}.
    \label{eq:pm-epsilon}
\end{align}
Recall that $\epsilon_\theta$ is a neural network, $f_\theta(x; t,  \lambda)$ is the learned approximation of $\prox_{-\lambda \ln p_t} (x)$, %
$(t, \lambda)$ are sampled according to a predefined distribution,
 and $\zeta$ is a hyperparameter that should decrease through training.
Note that this objective is similar to that of diffusion models with analogous parameterization \citep[Eq. (12)]{ho2020denoising}. 
We include pseudo-code for proximal-matching training in \cref{alg:training}, noting that it simply boils down to optimizing \eqref{eq:pm-epsilon} in a stochastic manner, allowing for any solvers of choice (stochastic gradient descent, Adam, etc).

It is worth comparing the resulting method with score matching. First, score matching samples Gaussian noise only once (to compute $X_t$). In contrast, our approach samples noise \emph{twice}: once to construct $X_t$ and once for proximal matching training. Second, score matching aims to learn the MMSE denoiser for $X_0$ (more precisely, $\sqrt{\alpha_t} X_0$) with noise level $\sqrt{1-\alpha_t}$, whereas proximal matching searches for the MAP denoiser for $X_t$ with noise level $\sqrt{\lambda}$. %
Lastly, the loss in proximal matching is parameterized by $\zeta$, which decreases via a scheduler, whereas score matching uses a fixed MSE loss throughout.

\section{Convergence theory}

In this section, we establish a convergence guarantee for Algorithm~\ref{alg:ProxDM} under the idealized assumption that the proximal operator we use is exact instead of learned. 
Our results provide a bound on the number of steps needed by the two discretizations we proposed---backward and hybrid---to achieve a KL-divergence of less than $\varepsilon$. 
We consider the canonical Ornstein–Uhlenbeck (OU) process as the forward process, that is $\beta(t) \equiv 2$ in \eqref{eq:general-forward}, as is standard in both practical implementations of diffusion models and theoretical analyses of convergence. Further, we consider a uniform time grid $t_k = kT/N$. With this choice, letting $h := T/N$, the step sizes $\gamma_k$ in Algorithm~\ref{alg:ProxDM} reduce to $\gamma_k = 2h.$ 

We now state an informal version of our main result. The formal statement requires additional technical details that we defer to \cref{app:main-result}. Recall that $p_0$ is the target distribution.
\begin{theorem}[Informal]\label{thm:main-informal}
    Suppose that $\EE_{p_0}\|X\|^2$ is bounded above by $M_2<\infty$. Further, assume for all $t \geq 0$, the potential $\ln p_t$ is a three-times differentiable function with $L$-Lipschitz gradient and $H$-Lipschitz Hessian, the moment $\EE_{p_t}\|\nabla \ln p_t(X)\|^2$ is bounded above by $C_p < \infty,$ and certain regularity conditions hold.
    Set the step size and time horizon to satisfy $ h \le \frac{1}{8L+4}$ and $T \ge 0.25$. Let $q_T$ be the distribution of the output of Algorithm~\ref{alg:ProxDM}. The following two hold true.     
    \begin{enumerate}[leftmargin=0.2cm]    
        \item[] (\textbf{Hybrid}) If the \texttt{hybrid} flag is set to true, then
            \begin{align*}
        \mathrm{KL}(p_0 \| q_T)
        \lesssim (d + M_2) e^{-T} + hT dL^2 + h^2 T  \left[ (d + M_2  + C_p) L^2  +  d^2 H^2 \right ] + h^4 T d^3 L^2 H^2 .
    \end{align*}
    \item[] (\textbf{Backward}) If the \texttt{hybrid} flag is set to false and further $h \leq 1$, then
    \begin{align*}
        \mathrm{KL}(p_0 \| q_T)
        \lesssim (d + M_2) e^{-T} + h^2 T  \left[ (d + M_2  + C_p) L^2  +  d^2 H^2 \right ] + h^4 T d^3 L^2 H^2.
    \end{align*}
    \end{enumerate}
\end{theorem}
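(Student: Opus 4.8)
The plan is to split the total error into an \emph{initialization} error and a \emph{discretization} error, and to control each by a path-space (Girsanov) comparison between the exact reverse-time SDE and a continuous-time interpolation of Algorithm~\ref{alg:ProxDM}. Concretely, I would reparametrize reverse time as $\tau = T - t$, so that the exact reverse process $Y_\tau$ solves $dY_\tau = [Y_\tau + 2\nabla \ln p_{T-\tau}(Y_\tau)]\,d\tau + \sqrt{2}\,dB_\tau$ with $Y_0 \sim p_T$ and $\mathrm{Law}(Y_T) = p_0$. Letting $\mu$ be the law on path space of this process and $\nu$ the law of the continuous-time interpolation of the algorithm initialized at $\cN(0,I)$, the data-processing inequality gives $\mathrm{KL}(p_0\|q_T) \le \mathrm{KL}(\mu\|\nu)$, and the chain rule for relative entropy splits this as $\mathrm{KL}(p_T\|\cN(0,I)) + \EE_\mu[\mathrm{KL}(\mu_{\cdot|Y_0}\|\nu_{\cdot|Y_0})]$. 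For the first term I would use exponential ergodicity of the Ornstein--Uhlenbeck semigroup: since $p_T$ is the law of $e^{-T}X_0 + \sqrt{1-e^{-2T}}\,\xi$ with $\xi\sim\cN(0,I)$, comparing this mixture against $\cN(0,I)$ together with $\EE_{p_0}\|X\|^2\le M_2$ yields $\mathrm{KL}(p_T\|\cN(0,I))\lesssim (d+M_2)e^{-T}$.

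For the conditional term, Girsanov's theorem identifies $\EE_\mu[\mathrm{KL}(\mu_{\cdot|Y_0}\|\nu_{\cdot|Y_0})]$ with $\tfrac14\,\EE\!\int_0^T \|b(\tau,\hat Y_\tau)-\hat b(\tau,\hat Y_\tau)\|^2\,d\tau$, where $b$ is the exact drift, $\hat b$ the drift of the interpolated algorithm, and $\hat Y$ the interpolated iterates. Writing one algorithm step in its equivalent implicit form $X_{k-1} = X_k + \gamma_k[\tfrac12 X_\star + \nabla\ln p_{t_{k-1}}(X_{k-1})] + \sqrt{\gamma_k}\,z_k$ with $X_\star = X_k$ (hybrid) or $X_\star = X_{k-1}$ (backward), I would split the per-interval drift gap into three pieces: (i) an \emph{Ornstein--Uhlenbeck freezing error} $\tfrac12\|X_\star - \hat Y_\tau\|$, nonzero only in the hybrid case (where $X_\star=X_k$ is the previous iterate); bounding $\EE\|X_k-\hat Y_\tau\|^2\lesssim h\,d$ (up to lower-order terms) on each interval and summing the $T/h$ intervals gives the $hTdL^2$ term; (ii) a \emph{time-discretization error of the score} $\|\nabla\ln p_{T-\tau}(\hat Y_\tau)-\nabla\ln p_{t_{k-1}}(\hat Y_\tau)\|\le h\sup\|\partial_t\nabla\ln p_t\|$, where differentiating the Fokker--Planck equation expresses $\partial_t\nabla\ln p_t$ through $\nabla\Delta\ln p_t$ and $\nabla^2\ln p_t\,\nabla\ln p_t$, which are bounded via the $L$-Lipschitz gradient, $H$-Lipschitz Hessian, and the moment bound $C_p$, producing the $h^2T[(d+M_2+C_p)L^2+d^2H^2]$ contribution; and (iii) a \emph{space-discretization error} $\|\nabla\ln p_{t_{k-1}}(\hat Y_\tau)-\nabla\ln p_{t_{k-1}}(X_{k-1})\|$, handled by a second-order Taylor expansion around the interval endpoint $X_{k-1}$, whose lower-order terms are absorbed against (i)--(ii) and whose remainder is $O(H\|\hat Y_\tau-X_{k-1}\|^2)$, contributing the $h^4Td^3L^2H^2$ term. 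Along the way one needs a priori second-moment control of $\hat Y_\tau$ and of $\nabla\ln p_t(\hat Y_\tau)$ along the trajectory, propagated step by step using Lipschitz continuity of the proximal map (with constant controlled through $\lambda_k L$) and Gr\"onwall estimates, with the hypotheses $h\le\frac{1}{8L+4}$ and $T\ge 0.25$ keeping all constants absolute.

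I expect three main obstacles. First, making the path-space argument rigorous for an \emph{implicit} scheme: one must verify that a backward step is a well-defined measurable map, which needs $\lambda_k L<1$ so that $-\lambda_k\ln p_{t_{k-1}}+\tfrac12\|\cdot\|^2$ is strongly convex --- this is exactly why the step must be small relative to $1/L$ (and, for the backward variant, why $\gamma_k<2$) --- and one must exhibit a continuous interpolation whose grid values coincide with the iterates. Second, and most delicate, the leading $O(h)$ discretization bias of explicit Euler must be shown to \emph{vanish} for the fully-backward scheme: because the linear Ornstein--Uhlenbeck part of the drift is also evaluated implicitly, a careful one-step expansion should cancel its first-order error, so no $hTdL^2$ term survives, whereas the hybrid scheme, which freezes that part at $X_k$, necessarily retains it. Third, the uniform-in-$t$ regularity of the score needed to bound $\partial_t\nabla\ln p_t$ and the higher derivatives entering the $d^3H^2$ term requires propagating the smoothness hypotheses through the heat-flow/Fokker--Planck dynamics; this is where the unstated "certain regularity conditions" are used. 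The remaining pieces --- moment propagation, Taylor remainders, and summing the per-interval bounds --- I expect to be routine once these are in place.
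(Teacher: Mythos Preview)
Your plan has a genuine gap at the Girsanov step. The formula $\EE_\mu[\mathrm{KL}(\mu_{\cdot|Y_0}\|\nu_{\cdot|Y_0})]=\tfrac14\EE\int_0^T\|b-\hat b\|^2\,d\tau$ is only valid when the two processes share the same diffusion coefficient. For an \emph{implicit} update this fails: any continuous interpolation $(\widehat Y_\tau)_{\tau\in[kh,(k+1)h]}$ that is adapted and hits the proximal iterate at the right endpoint necessarily has a state-dependent diffusion, because the implicit map $y\mapsto\prox_{-\lambda\ln p}(y)$ has Jacobian $[(1-t)I-2t\nabla^2\ln p]^{-1}\neq I$. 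Concretely, if you define $\widehat X_t$ by $\widehat X_t=\widehat X_0+t\widehat X_t+2t\nabla\ln p_{T-kh-t}(\widehat X_t)+\sqrt{2}W_t$ (the natural interpolation), It\^o's lemma gives an SDE with diffusion $\sqrt{2\widehat G_k}$ where $\widehat G_k=[(1-t)I-2t\nabla^2\ln p]^{-2}$. The alternative---an interpolation with diffusion $\sqrt{2}I$ and drift frozen at $X_{k-1}$---is not adapted, since $X_{k-1}$ depends on the entire Brownian increment over the interval; Girsanov does not apply to it. So the ``drift-only'' comparison you wrote down is not available, and with it goes your mechanism for the $O(h)$ cancellation in the backward case (your piece (iii) ``lower-order terms are absorbed against (i)--(ii)'' is precisely the step that has no home).

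The paper circumvents this by abandoning Girsanov and instead differentiating $\mathrm{KL}(\ptb\|q_t)$ in $t$ directly via the two Fokker--Planck equations. This produces, after integration by parts, a formula that accommodates mismatched diffusions: besides the drift gap $\|\mu_k-\widehat\mu_k\|^2$ one gets two new terms, $\|(\widehat G_k-I)\nabla\ln\ptb\|^2$ and $\|\nabla\cdot\widehat G_k\|^2$. All three are $O(t^2)$ at the start of each interval (since $\widehat G_k|_{t=0}=I$ and $\widehat\mu_k|_{t=0}=\mu_k$), which is exactly what delivers the $h^2T$ rate for the fully backward scheme without any delicate cancellation argument. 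The $d^2H^2$ contribution you were expecting from a Taylor remainder in fact comes from $\|\nabla\cdot\widehat G_k\|^2\lesssim t^2d^2H^2$, and the $h^4Td^3L^2H^2$ term from a third-order piece inside $\widehat\mu_k$. For the hybrid variant the paper additionally conditions on the step's starting point and uses the KL chain rule, which is where the extra $\|a-x\|^2$ term (your piece (i)) enters and produces the surviving $hTdL^2$. Your decomposition of the error sources is morally right, but the analytic vehicle has to change from Girsanov to the Fokker--Planck/interpolating-SDE approach of \cite{wibisono2019proximal}.
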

A few remarks are in order. First, the uniform bounds on $H$ and $C_p$ can be relaxed to a much weaker bound on an appropriate running average on $t$---we elaborate on the more general assumptions in the formal statement of the result. Second, the regularity conditions mentioned above---made precise in the appendix---are necessary to guarantee that certain boundary conditions are met. Interestingly, similar conditions seem to be implicitly assumed in many existing results \citep{chen2022improved,wibisono2019proximal,lee2022convergenceb,vempala2019rapid}, and we defer a longer discussion to the appendix. 
Third, note that we do not require the distribution to be log-concave, nor do we impose that it satisfies the log-Sobolev inequality. Finally, the proof strategy is inspired by the analysis of the Proximal Langevin algorithm from \citep{wibisono2019proximal}, in combination with techniques from \citep{chen2022improved,chen2022sampling,lee2022convergence,lee2022convergenceb,vempala2019rapid}. To better illustrate this result, the following corollary bounds the number of steps necessary to achieve an $\varepsilon$-accurate distribution in KL divergence.

\begin{corollary}
    Consider the setting of Theorem~\ref{thm:main-informal}. Moreover, assume that $M_2\lesssim d, C_p\lesssim dL^2.$ The following two hold true. 
    \begin{enumerate}[leftmargin=0.2cm]    
        \item[] (\textbf{Hybrid}) If the \texttt{hybrid} flag is set to true, then $\mathrm{KL}(p_0 \| q_T) \leq \varepsilon$ provided that
            \begin{align*}
         T \gtrsim \log\left(\frac{d}{\varepsilon}\right)\quad \text{and} \quad N  \gtrsim \frac{T^2 d L^2}{\varepsilon} + \left[\frac{T^3 (d L^4 + d^2H^2)}{\varepsilon}\right]^{1/2}+\left(\frac{T^5d^3L^2H^2}{\varepsilon}\right)^{1/4}.
    \end{align*}
    \item[] (\textbf{Backward}) If the \texttt{hybrid} flag is set to false and further $h \leq 1$, then $\mathrm{KL}(p_0 \| q_T) \leq \varepsilon$ provided that
    \begin{align*}
         T \gtrsim \log\left(\frac{d}{\varepsilon}\right)\quad \text{and} \quad N  \gtrsim \left[\frac{T^3 (d L^4 + d^2H^2)}{\varepsilon}\right]^{1/2}+\left(\frac{T^5d^3L^2H^2}{\varepsilon}\right)^{1/4}.
       \end{align*}
    \end{enumerate}
\end{corollary}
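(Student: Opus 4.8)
The plan is to derive the corollary directly from \cref{thm:main-informal} by a term-by-term balancing argument; no new analysis is required beyond the KL bounds already stated.

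\textbf{Step 1 (simplify the KL bounds).} First I would substitute the standing assumptions $M_2 \lesssim d$ and $C_p \lesssim dL^2$ (together with the usual normalization $L \gtrsim 1$) into the two bounds of \cref{thm:main-informal}. This collapses $(d+M_2)e^{-T}$ to $d\,e^{-T}$ and $(d + M_2 + C_p)L^2 + d^2 H^2$ to $dL^4 + d^2 H^2$, so that in the hybrid case
\[
\mathrm{KL}(p_0\|q_T) \;\lesssim\; d\,e^{-T} \;+\; hTdL^2 \;+\; h^2 T\bigl(dL^4 + d^2H^2\bigr) \;+\; h^4 T d^3 L^2 H^2,
\]
and the backward case is identical but without the $hTdL^2$ summand.

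\textbf{Step 2 (balance the terms).} It then suffices to force each summand above to be $\lesssim \varepsilon$. The first summand yields the horizon condition: $d\,e^{-T} \lesssim \varepsilon$ iff $T \gtrsim \log(d/\varepsilon)$. For the remaining summands I would substitute $h = T/N$ and solve for $N$: requiring $T^2 dL^2/N \lesssim \varepsilon$ gives $N \gtrsim T^2 dL^2/\varepsilon$; requiring $T^3(dL^4 + d^2H^2)/N^2 \lesssim \varepsilon$ gives $N \gtrsim \bigl[T^3(dL^4+d^2H^2)/\varepsilon\bigr]^{1/2}$; and requiring $T^5 d^3 L^2 H^2/N^4 \lesssim \varepsilon$ gives $N \gtrsim (T^5 d^3 L^2 H^2/\varepsilon)^{1/4}$. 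Taking the maximum of these three lower bounds (equivalently their sum, up to a factor of $3$) is exactly the stated bound on $N$ in the hybrid case; dropping the first of the three—which is absent from the backward bound of \cref{thm:main-informal}—gives the backward case.

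\textbf{Step 3 (consistency of the hypotheses, and the main obstacle).} Finally I would verify that the resulting pair $(N,T)$ meets the standing requirements of \cref{thm:main-informal}: $h = T/N \le \tfrac{1}{8L+4}$, $T \ge \tfrac14$, and (backward case) $h \le 2$. The condition $T \gtrsim \log(d/\varepsilon) \ge \tfrac14$ holds once $d/\varepsilon$ exceeds an absolute constant; the step-size conditions hold as soon as $N \gtrsim LT$, which is implied by the displayed lower bounds on $N$ in the regime of interest (small $\varepsilon$, and $d,L,H \gtrsim 1$) and can otherwise be appended as a harmless lower-order term inside the $\gtrsim$. The hard part will be none of the above: the statement is a mechanical consequence of \cref{thm:main-informal}. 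The only place demanding care is the bookkeeping in Step~2—correctly pairing the explicit powers of $T$ in each KL term with the $h^{j}=(T/N)^{j}$ factors before inverting for $N$—and, in Step~1, confirming that the absorbed quantities ($M_2$, $C_p$, and the low-order $L^2$ contributions) are genuinely dominated under the stated assumptions, which is precisely where the mild $L \gtrsim 1$ normalization is used.
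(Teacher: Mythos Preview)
Your proposal is correct and is exactly the intended derivation: the paper presents this corollary as an immediate consequence of \cref{thm:main-informal} without a separate proof, and the mechanical substitution of $M_2\lesssim d$, $C_p\lesssim dL^2$ followed by balancing each summand against $\varepsilon$ (with $h=T/N$) is the only sensible way to extract the stated bounds on $T$ and $N$. Your observation that $L\gtrsim 1$ is needed to absorb the lower-order $L^2$ terms into $L^4$ in Step~1 is spot on and matches the paper's standing assumption $L>1$ in \cref{assump:data-distr}.
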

The dimension dependence on $M_2$ and $C_p$ is standard and is, e.g., satisfied by Gaussian distributions.
With this result, the benefits of the full backward discretization become clear: the hybrid discretization requires $\widetilde{\mathcal{O}}(d\varepsilon^{-1})$ steps, while the backward discretization requires only $\widetilde{\mathcal{O}}(d\varepsilon^{-\frac{1}{2}})$ steps. This benefit comes at the expense of the additional condition that $h \leq 1$, since otherwise the regularization coefficient of the proximal operator would become negative. This translates into a hard lower bound on the number of steps one can take, as we discussed above and will see shortly again in the experiments section.
In comparison, the most competitive rates for score-based samplers \camera{in terms of convergence in KL divergence} are $\widetilde{\mathcal{O}}(d\varepsilon^{-1})$ for a vanilla method \citep{chen2022improved,Benton2023-sg} and $\widetilde{\mathcal{O}}(d^{\frac{3}{4}}\varepsilon^{-\frac{1}{2}})$ for an accelerated strategy \citep{wu2024stochastic}. \camera{Note that these rates are not directly comparable, as the underlying assumptions and algorithmic setups (e.g., smoothness, boundedness, or early stopping) differ across works. In particular, \cite{chen2022improved} assumed $L$-Lipschitzness for the score function, similar to our work but without the additional smoothness assumption on the Hessian as used here. On the other hand, \cite{Benton2023-sg} lifted the smoothness assumptions entirely but instead analyzed the KL with respect to a slightly noise-perturbed version of the target distribution.} We point interested readers to the broader body of work on the convergence theory of diffusion models, including both SDE-based samplers \citep{chen2022sampling,lee2022convergence,lee2022convergenceb,chen2022improved,Benton2023-sg,Li2023-ht,li2024accelerating,li2024d} and ODE-based samplers \citep{chen2023restoration,chen2023probability,li2024sharp}; see \cite[Section 4]{Li2023-ht} and \cite{li2024d} for a comprehensive review.

\section{Experiments}

\begin{figure}[h]
\vspace{-5pt}
    \centering
    \includegraphics[width=.9\linewidth]{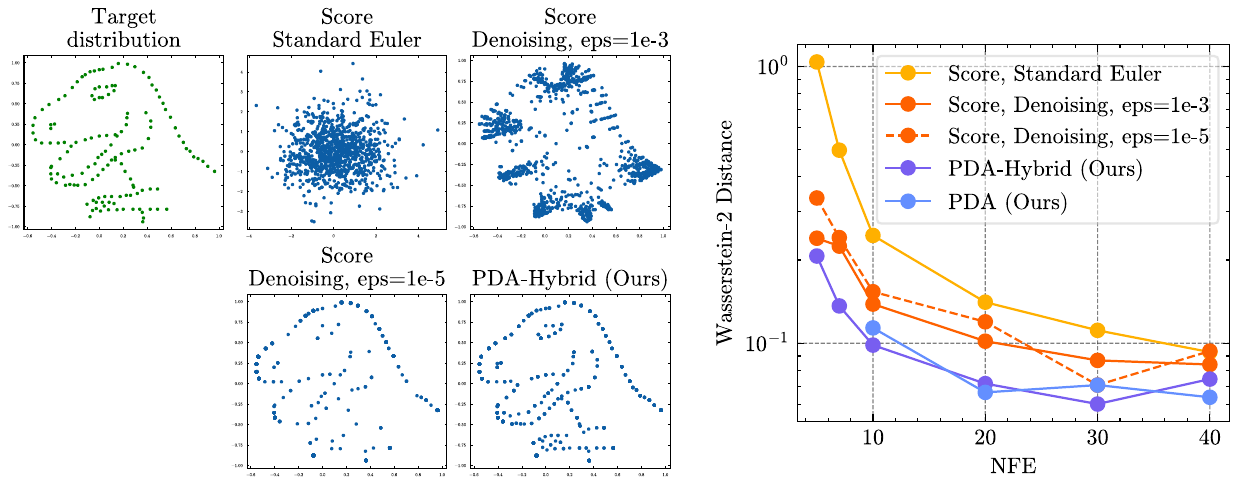}
    \caption{Sampling from a distribution supported on a discrete set of points in 2D using exact score and proximal operators. \textsc{Left}: the target distribution and samples generated by various samplers using 5 sampling steps; \textsc{right}: Wasserstein-2 distance between sample and target across varying step numbers (NFE, number of function evaluations). Standard Euler-Maruyama without denoising fails when using only 5 steps. Score-based sampler with a denoising step at the end, as is common \citep{song2020score}, requires tuning $\epsilon$: the step size for the last denoising step. Both PDA variants perform well without additional hyperparameters.}
    \label{fig:toy-samples}
\end{figure}

\paragraph{Synthetic example.} We begin with an example in cases where the scores and proximals can be computed to arbitrary precision. We set $p_0$ as a uniform distribution over a discrete set of points (dataset from \citet{matejka2017same}), and run  score-based diffusion sampling, as well as \eqref{eq:pda-update} and \eqref{eq:pda-hybrid} (see \cref{alg:ProxDM}). We include both the standard Euler-Maruyama method (``Standard Euler''), as well as the popular modification that includes an ad-hoc final denoising step after the last iterate \citep{song2020score}. As shown by the results in \cref{fig:toy-samples}, both variants of \PDM provide faster convergence with a closer final sample to the ground truth $p_0$, and require less hyperparameters to tune and no ad hoc modification. \camera{To further study the impact of the smoothness of the distribution on sampling performance, we perform an experiment to sample from 1D distributions with different levels of smoothness at the boundary of their support (see details and results in \cref{sec:smoothness-controlled-experiment}). Although the relation between sample quality and smoothness of the distribution is relatively subtle--likely due to smoothing effect inhere in the forward diffusion process--it is clear that \PDM recovers the support of the distribution more accurately than score-based samplers.}

\begin{figure}[t]
    \centering
    \includegraphics[trim = 0 0 0 0, width=\linewidth]{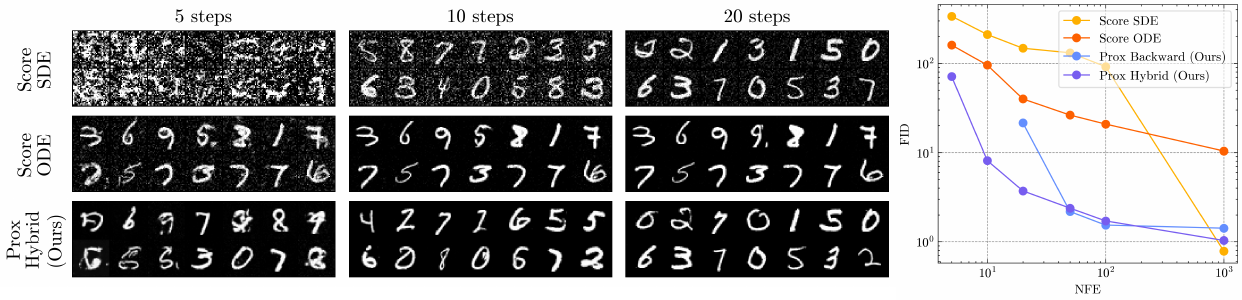}
    \caption{Left: MNIST samples generated by the score SDE sampler, score ODE sampler, and our hybrid \PDM. Right: the resulting FID score as a function of the number of sampling steps.}
    \label{fig:mnist_summary}
\end{figure}

\paragraph{MNIST.}
We now move to a real, simple case for digits \citep{deng2012mnist}, where we must train score-networks (through score matching) as well as our learned proximal networks for \PDM. 
The $\beta(t)$ in forward process is set as linear, $\beta(t) = \beta_{\mathrm{min}} + (\beta_{\mathrm{max}} - \beta_{\mathrm{min}}) \, t$, for $t \in [0, 1]$, with $\beta_{\mathrm{min}} = 0.1, \beta_{\mathrm{max}} = 20$, as is standard \citep{song2020score}. 
\PDM is pre-trained with an $\ell_1$ loss, followed by proximal matching loss with $\zeta = 1$ in the first half and decayed to $0.5$ in the second half. Both networks are trained for the same number of epochs (see more details in \cref{sec:details-image-generation}).

We compare with two score-based sampler: Euler-Maruyama for the reverse-time SDE (denoted as ``SDE'') and the Euler sampler for the probability flow ODE (resp.``ODE'') \citep{song2020score}. We show the FID score by NFE (number of function evaluations, i.e. sampling steps) and uncurated samples in \cref{fig:mnist_summary} (right and left, respectively). \PDM provides significant speedup compared to both the score samplers based on the reverse-time SDE and probability-flow ODE.

\begin{wrapfigure}{r}{.35\textwidth}
\vspace{-10pt}
    \centering
    \includegraphics[trim = 0 10 0 20, width = .35\textwidth]{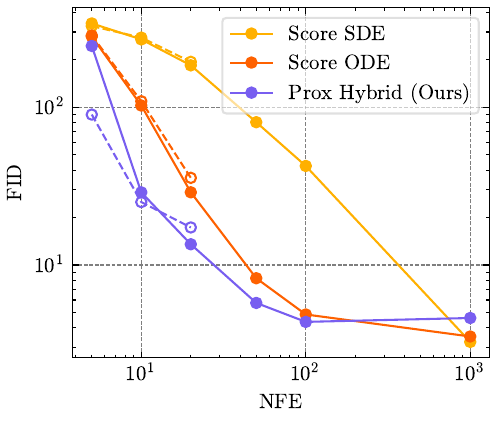}
    \caption{FID vs. number of sampling steps (NFE) on CIFAR10. {The dashed lines correspond to models trained specifically for 5, 10 and 20 steps, as opposed to the full-range models in solid lines.}} 
    \label{fig:fid-cifar10}
\vspace{-10pt}
\end{wrapfigure}
\paragraph{CIFAR10.}
We perform a similar experiment on the CIFAR10 dataset \citep{krizhevsky2009cifar}. The score model uses the same U-Net as in \cite{ho2020denoising}. The network for \PDM is the same as the score network, except that it has two parallel conditioning branches for $t$ and $\lambda$ respectively, with a comparable total number of parameters. Score matching follows the standard setup in \cite{ho2020denoising} with learning rate warm-up, gradient clipping, and uniform $t$ sampling as introduced in \cite{song2020score}. For proximal matching, the proximal network is pretrained with $\ell_1$ loss, followed by proximal matching loss with $\zeta = 2$ in the first half and $\zeta = 1$ in the second half (see more details in \cref{sec:details-image-generation}). \cref{fig:fid-cifar10} shows the FID score vs. NFE curves. It is worth noting that the score SDE sampler's FID at 1000 steps matches the value reported in the original paper \citep{ho2020denoising}. {It is clear that our \PDM outperforms score-based samplers and enables faster sampling. Notably, the FID of \PDM at 10 steps matches that of the score ODE sampler at 20 steps and surpasses the score SDE sampler even at 100 steps.} 
Uncurated samples are shown in \cref{fig:cifar10-samples}, demonstrating sharper and cleaner samples by \PDM with fewer sampling steps. 
{Furthermore, when trained specifically for small step numbers (5, 10 and 20), the advantage of \PDM over score-based samplers becomes more significant (see \cref{fig:fid-cifar10}, dashed lines). If one is specifically interested in a very small number of steps, disregarding some of the heuristics described in \cref{sec:practical} can lead to even better results (see \cref{fig:fid-cifar10-no-heur}).}
\camera{Additionally, we report the precision and recall metrics \citep{kynkaanniemi2019improved}, which measure the quality of generated samples and coverage of the real data distribution, respectively (see detailed definitions in \cref{sec:details-image-generation}). The results are presented in \cref{fig:precision-recall-mnist-cifar10}, further illustrating the improved sample quality and coverage achieved by  \PDM. Finally, the FID of \PDM at 1000 steps appears worse than that at 100 steps--we discuss on this in \cref{sec:discussion-bottleneck-1k-steps}.}

\begin{figure}
    \centering
    \includegraphics[width=\linewidth]{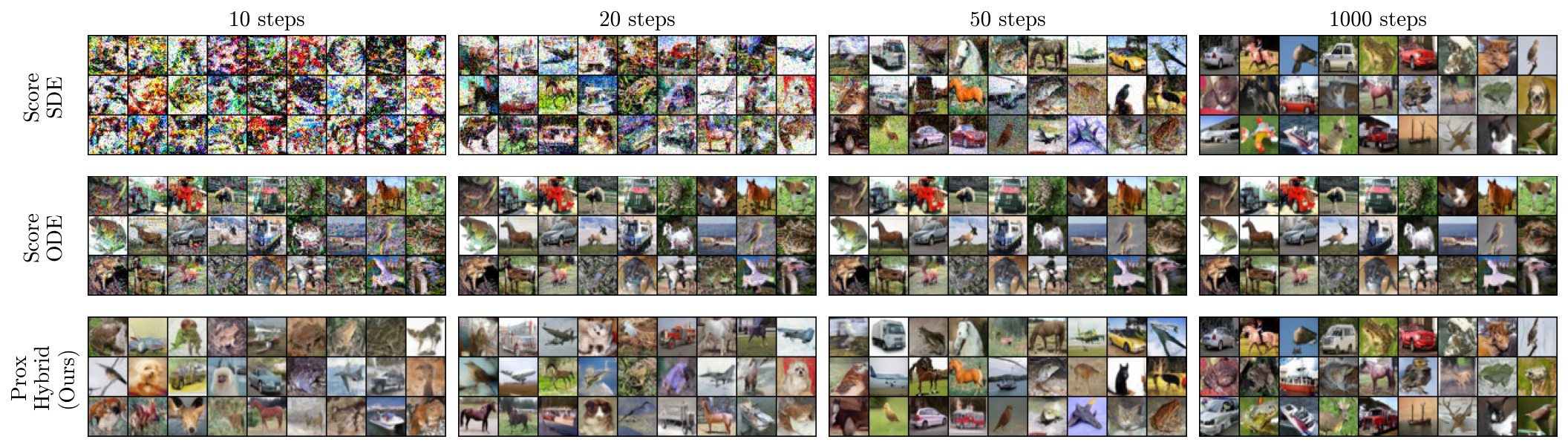}
    \caption{CIFAR10 samples from score SDE, score ODE, and hybrid \PDM samplers.}
    \label{fig:cifar10-samples}
\end{figure}

\camera{
\paragraph{CelebA-HQ ($256\times256$).}
We further evaluate our approach for high-resolution image synthesis using the CelebA-HQ dataset at $256\times256$ resolution \citep{karras2018progressive}. Because of the higher dimensionality of this data, both the score and proximal approaches operate in the latent space of a pretrained VAE, following the standard latent-diffusion setup \citep{rombach2022high}. As shown in \cref{fig:fid-celebahq}, the hybrid \PDM achieves improved FID compared to the score SDE sampler and performs competitively with the score ODE sampler across all step counts. Uncurated visual samples using 20 sampling steps are provided in \cref{fig:celebahq-samples-20steps}; samples at other step numbers can be found in \cref{fig:celebahq-samples}.}

\begin{figure}
    \centering
    \includegraphics[width=\textwidth]{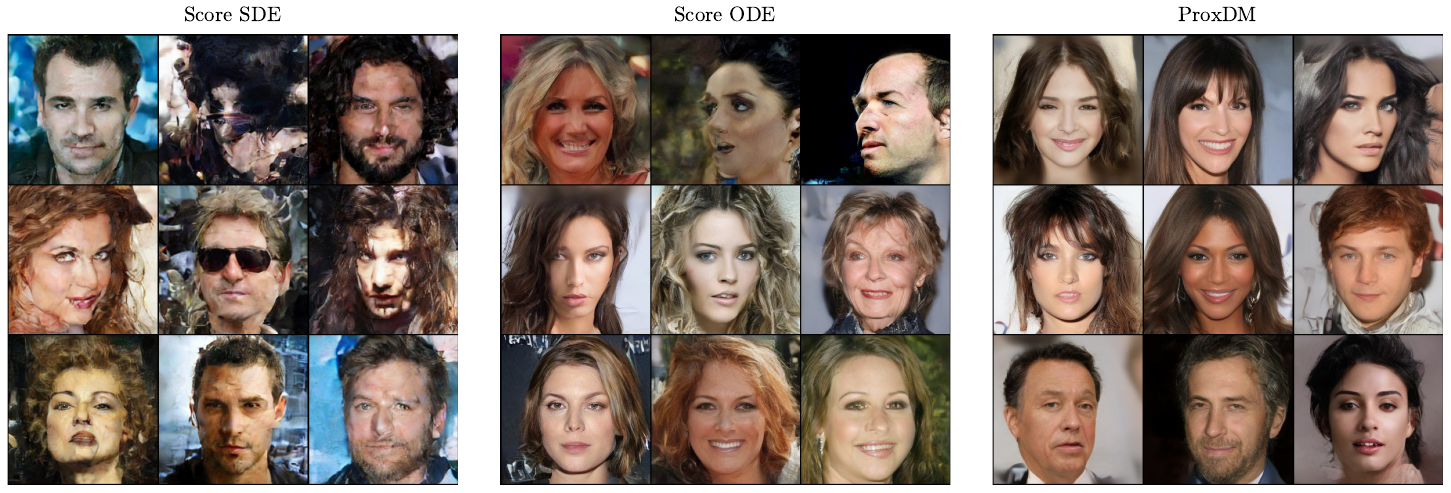}
    \caption{CelebA-HQ ($256\times256$) samples from score SDE, score ODE, and hybrid \PDM samplers using 20 sampling steps.}
    \label{fig:celebahq-samples-20steps}
\end{figure}

\section{Conclusions and limitations}

Diffusion models represent a paradigm shift in generative modeling. Since a reversed-time SDE drives sampling, discretization strategies are needed to turn these into practical algorithms. To our knowledge, all diffusion models have so far relied on forward discretizations of these SDEs, relying on the distribution score at each step. Here, we have shown that backward discretizations can be greatly beneficial, both in theory and practice. For the former, our full backward \PDM requires only $\widetilde{\mathcal{O}}(d/\sqrt{\varepsilon})$ steps to produce a sampling distribution $\varepsilon$-away from the target in KL divergence (assuming perfect proximal steps). In practice, we show that neural networks trained with proximal matching allow for practical methods that generate accurate samples far faster than alternative score-based algorithms, and even faster than score-based ODE samplers. 

Our work has several limitations. First, our theoretical analysis assumes access to an oracle proximal operator. While we demonstrated that such models can be approximately obtained in practice, incorporating this discrepancy (say, at an $\alpha$-accuracy level) as done in other works \citep{chen2022sampling,chen2022improved,hurault2025score} remains open. Second, our convergence results also require additional regularity conditions. While these are arguably mild (see discussion in the appendix), we conjecture that these could be removed with a more careful analysis. On the practical side, we provided a first instantiation of learned proximal networks for \PDM, but many open questions remain, including how to choose the optimal weighing strategy for different sampling points across training, optimal ways to schedule the $\zeta$ hyper-parameter in the proximal-matching loss, and employing networks that provide exact proximal operators for any sampling time. %

\section*{Acknowledgment}
ZF acknowledges support from the Johns Hopkins University Mathematical Institute for Data Science (MINDS) Fellowship. MD was partially supported by NSF awards CCF 2442615. MD and JS were partially supported by DMS 2502377. ZF and JS have been partially supported by NIH Grant P41EB031771. 
\bibliographystyle{plainnat}
\bibliography{references}

\newpage
\if\includeAppendix 1
\tableofcontents
\appendix

\section{Discussions}

\subsection{Comparison to Existing Accelerated Methods of Diffusion Models}

\camera{
Since the introduction of diffusion models, numerous attempts have been made to developing accelerated methods, including those based on deterministic samplers \citep{song2020denoising,chen2023probability}, higher-order solvers \citep{Lu2022-hr,Zhang2022-sc,li2024accelerating,wu2024stochastic}, consistency models \citep{song2023consistency,song2024improved}, distillation methods \citep{salimans2022progressive,chen2025denoising,zhou2025fewstep,zhang2025restoration}, flow matching \citep{lipman2023flow,liu2023flow}, and shortcut models \citep{frans2025step}. Our aim was not merely to accelerate inference within score-based models, however, but to propose a new and principled framework for diffusion models based on new discretization schemes with convergence  guarantees. 
}

\camera{
In particular, we demonstrate that backward discretization schemes lead to a new class of diffusion models based on proximal operators (instead of scores), and show that this formulation leads to immediate gains in sampling efficiency compared to vanilla score-based SDE samplers. Given its simplicity and generality, the same principle can be naturally extended to existing accelerated methods--for example, by applying backward discretization to probability flow ODEs and flow matching. Our approach thus represents an orthogonal direction to prior advancements, offering a general pathway to improve existing methods. Notably, our approach requires no distillation and is trained directly from data. Moreover, our method enjoys provably favorable convergence properties over score-based counterparts. 
}

\camera{
Finally, we believe the findings of \PDM could lead to novel understandings of recent empirical heuristics. For example, recent works have shown the benefits of training generative models using MAP-promoting losses rather than the conventional MSE loss \citep{song2024improved,geng2025mean}. Our framework employs the proximal matching loss, which also seeks a MAP estimate, but arises naturally from the perspective of learning proximal operators. This connection offers new theoretical insights for understanding the effectiveness of such losses and may inspire future methodological improvements.
}

\subsection{Connection to Maximum Likelihood Training}
\camera{
It has been shown that minimizing the denoising score matching objective is equivalent to maximizing a lower bound on the likelihood of the observed data \citep{ho2020denoising}. In contrast, our proximal matching loss does not naturally admit such a Maximum-likelihood (MLE) interpretation, since it is derived from a continuous-time SDE perspective rather than the discrete Markov chain formulation. It remains an open question whether such an equivalence exists between proximal matching and MLE training. Notably, since our \PDM differs fundamentally from score-based diffusion models in both the transition kernel (non-Gaussian vs. Gaussian) and training objective (non-MSE vs. MSE), we anticipate that establishing such an equivalence is nontrivial--if at all possible.
}

\camera{
Despite this, the samples generated by our method enjoy strong theoretical guarantees. In this work, we bound the KL divergence between the samples generated by our proximal diffusion algorithm and the true data distribution. Combined with existing theoretical results for proximal matching \citep{Fang2024-vk}, our \PDM provides provable guarantees for its generated samples.
}

\subsection{Bottleneck of Convergence at Large Step Numbers}
\label{sec:discussion-bottleneck-1k-steps}
\camera{
Our results showed that the FID score of \PDM deteriorates at large step numbers (see \cref{fig:fid-cifar10}). We conjecture that this behavior arises because \PDM, unlike score-based methods, relies on solving an implicit equation that in turn defines a proximal map. In practice, we use proximal matching to train networks to approximate the corresponding proximal operators, rather than implementing exact proximals at each step. As a result, the computed updates may violate the optimality conditions that define such updates, introducing inaccuracies that accumulate. Empirically, we observe that these inaccuracies appear more problematic than the approximation errors of scores in score-based samplers. Better network architectures and training strategies will provide more accurate and stable approximation of the proximal operators, leading to even higher empirical accuracy at high step counts.
}

\section{Notation}
\label{sec:notations}
In this section, we introduce the notation we use and some necessary technical background.
\paragraph{Glossary.}
\label{sec:special-definitions}
Let us summarize the symbols used throughout the appendix:
\begin{itemize}
    \item $h$: Discretization step size.
    \item $T$: Time horizon of forward process.
    \item $N$: Number of discretization steps $N=T/h$.
    \item $p_t$: Distribution of the forward process at time $t$.
    \item $\ptb$: Distribution of the reverse-time process at time $t$, i.e., $\ptb = p_{T-t}$.
    \item $q_t$: Distribution of samples from the algorithm at time $t$. The density of the samples after $k$ steps is $q_{kh}$. The density of the final samples is $q_{T}$. The density of the initial samples $q_0$ is a standard Gaussian.
    \item $\ptb^{(k)}$: Distribution of the reverse-time process within the $k$-th step of the algorithm, where $k \in \{0, 1, \dots, N-1\}$ and $t \in [0, h]$. That is $\ptb^{(k)} = \pb{kh+t} = p_{T-kh-t}$.
    \item $q_t^{(k)}$: Distribution of the interpolating process of the sampling algorithm within the $k$-th step (see \Cref{lem:interpolation,lem:interpolation-hybrid} and surrounding discussion). Note that $q_t^{(k)} = q_{kh+t}$.
\end{itemize}

\paragraph {Remarks on terminology.} We abuse notation by using the same symbols to denote distributions and their probability density function. We call the distribution we wish to sample from the target or data distribution. We use the word ``process'' to refer to a stochastic process.

\paragraph{Linear algebra.} The symbols $\bR^d$, $\bR^{d\times d}$, $\mathcal{S}^{d},$ and $\bR^{d\times d \times d}$ denote the set of real $d$-dimensional vectors, $d\times d$ matrices, and $d\times d$ symmetric matrices, and $d\times d \times d$ tensors. We endow $\bR^d$ with the standard inner product $\langle x, y \rangle = x^\top y$, which induces the $\ell_2$ norm via $\| x \| = \sqrt{\langle x, x\rangle}$. For a matrix \( A \in \bR^{d \times d} \) we let \( \opnorm{A} \) denote the operator norm induced by the $\ell_2$ norm and \(\|A\|_{\mathrm{F}}\) the Frobenius norm. In particular, if \(A\) is symmetric with eigenvalues $\lambda(A) \in \bR^d$, then $
\|A\|_{\mathrm{op}} = \max_i |\lambda_i| $ and $ \|A\|_{\mathrm{F}} = \|\lambda(A)\|.
$
Thus, \(\|A\|_{\mathrm{F}} \leq \sqrt{d} \|A\|_{\mathrm{op}}\). 
For a symmetric matrix $Q \in \mathcal{S}^d$ with positive eigenvalues (i.e., a positive definite matrix), we write $\| x \|_Q = \sqrt{\langle x, Q x \rangle}$ for the associated norm.

\paragraph{Calculus.} Given a differentiable map \(\phi \colon \bR^d \to \bR^k\), we use $\nabla \varphi(x)$ to denote its gradient or Jacobian at $x$ if $k = 1$ or $k > 1$, respectively. Similarly, given a $C^3$ function $\varphi \colon \bR^d \to \bR$, we use the symbols $\nabla^2 \varphi (x) \in \bR^{d\times d}$ and $\nabla^3 \varphi (x)$ to denote its Hessian and tensor of third-order derivatives at $x$ whose components are given by 
\[
\nabla^2 \varphi(x)_{ij} = \frac{\partial^2 \varphi(x)}{\partial x_i \partial x_j} \quad \text{and} \quad \nabla^3 \varphi(x)_{ijk} = \frac{\partial^3 \varphi(x)}{\partial x_i \partial x_j \partial x_k}, 
\]
respectively. We denote the Laplacian of $\varphi$ via
\[
\Delta \varphi(x) = \operatorname{Tr}(\nabla^2 \varphi(x)) = \sum_{i=1}^d \frac{\partial^2 \varphi(x)}{\partial x_i^2} \in \bR.
\]
The symbol \(\nabla \cdot\) denotes the divergence operator, which acts on a vector field \(v\colon \bR^d \rightarrow \bR^d\) via
\[
\nabla \cdot v(x) = \sum_{i=1}^d \frac{\partial v_i(x)}{\partial x_i} \in \bR.
\]
Note that with this notation in place we have the identity \(\nabla \cdot (\nabla \varphi) = \Delta \varphi\). 
For a matrix-valued function \(G \colon \bR^d \to \bR^{d \times d}\) and  a differentiable function \(\phi \colon \bR^d \to \bR\), let 
$\operatorname{Tr}(\nabla^3 \varphi(x) G(x)) \in \bR^d$ denote the vector whose \(i\)-th component is $\operatorname{Tr}(\nabla_i \nabla^2 \varphi(x) G(x))$
where 
\[
\nabla_i \nabla^2 \varphi(x) = \frac{\partial}{\partial x_i} \nabla^2 \varphi(x) =  \left(\frac{\partial^3 \varphi(x)}{\partial x_i \partial x_j \partial x_k}\right)_{jk} \in \bR^{d \times d}.
\]
We use the symbol $\nabla \cdot G$ to denote the divergence operator broadcast to the rows of $G,$ i.e.,
\begin{align*}
    \nabla \cdot G(x) := \left[\nabla \cdot G_1(x), \ldots, \nabla \cdot G_d(x)\right]^\top \in \bR^d\
\end{align*}
where \(G_i(x) \in \bR^d \) is the \(i\)-th row of \(G(x)\). For brevity, we write $G$ in place of $G(x)$ when the argument is clear from context.

Finally, given a function $\varphi\colon \mathcal{X} \to \mathcal{Y}$ between finite dimensional vector spaces and a norm-one vector $v$ we define the directional derivative of $\varphi$ with respect to $v$ at the point $x$ as 
\begin{align*}
    \nabla_v \varphi(x) = \lim_{s \downarrow 0} \frac{\varphi(x+sv) - \varphi(x)}{u}.
\end{align*}
Further, given a canonical basis $\{e_1, \dots, e_{\dim\mathcal{X}}\}$ of $\mathcal{X}$ we let $\nabla_k \varphi$ be a shorthand for $\nabla_{e_k} \varphi$ for $k \in \{1, 2,...,\dim\mathcal{X}\}.$

\paragraph{Lipschitzness of the Hessian.}
For a function $f\colon \bR^d \to \bR$, we define
\begin{align}
\label{def-H_f}
H_f(x) := \max \left\{ \opnorm{\nabla_i \nabla^2 f (x)}  \mid i = 1, 2, \ldots , d \right\} \in \bR.
\end{align}
The condition $H_f(x) < \infty$ is weaker than a global Lipschitzness for the Hessian.\footnote{\cite{wibisono2019proximal} assumes that the Hessian of the target potential has a global Lipschitz constant.} Indeed, if 
\begin{align*}
    \opnorm{\nabla^2 f (x) - \nabla^2 f (y)} \le H \| x - y \| \qquad \text{for all $x, y \in \bR^d$},
\end{align*}
then for all $x \in \bR^d$,
\begin{align*}
    \opnorm{\nabla_v \nabla^2 f (x)} \le H \qquad \text{for all $v \in \bR^d, \|v\|=1$},
\end{align*}
which implies
\begin{align*}
    \opnorm{\nabla_i \nabla^2 f (x)} \le H \qquad \text{for $i = 1, 2, \ldots, d$}.
\end{align*}
Comparing the above with \cref{def-H_f}, we can see that $H_f(x)$ is weaker than $H$ in two aspects: $(i)$ the quantity $H_f(x)$ is local, and $(ii)$ it only concerns the directional derivative with respect to canonical basis vectors $e_i$, not all directions. We use this definition because it suffices for our convergence analysis.
We define the shorthand
\begin{align}
    H_t(x) := H_{\ln p_t}(x) \qquad \text{for } t \in \bR.
    \label{def-Ht}
\end{align}
where $p_t$ is the density along the forward process at time $t$.

\section{Formal statement of the  convergence guarantee}\label{app:main-result}
Before introducing the formal statement, we sketch the proof strategy, which motivates some of the technical assumptions. We present the sketch here regarding the fully backward discretization, but the strategy for the hybrid method is analogous. This proof strategy is based on the analysis of Proximal Langevin Algorithm \citep{wibisono2019proximal}.

We consider the Ornstein–Uhlenbeck (OU) process as the forward process:
\begin{align}
    dX_t = -X_t dt + \sqrt{2} dW_t, \quad X_0 \sim p_0, \quad t \in [0, T],
    \label{OU}
\end{align}
and use $p_t$ to denote the distribution of $X_t$.
For simplicity, we will consider reverse-time processes with time flowing forward. That is, although the original reverse time process evolves via 
    \begin{equation}\label{eq:true}
    dX_t = \left[ - X_t - 2 \nabla \ln p_{t}(X_t) \right] dt + \sqrt{2} d\overline{W}_t \quad \text{where $t$ flows from $T$ to $0$}
\end{equation}
and $\overline W_t$ is the backward Brownian motion, we consider the more intuitive 
\begin{equation}\label{eq:true-confusion}
    dX_t = \left[  X_t + 2 \nabla \ln p_{T-t}(X_t) \right] dt + \sqrt{2} d{W}_t
\end{equation}
where $t$ flows from $0$ to $T$. We use $\ptb = p_{T-t}$ for the distribution of the process in \eqref{eq:true-confusion} to distinguish it from that of the process in \eqref{eq:true}. We consider the backward discretization $\widehat X_{kh}$ evolving via the recursion 
\begin{equation}\label{eq:update-backward}
    \widehat{X}_{(k+1)h} - \widehat{X}_{kh} = h \widehat{X}_{(k+1)h} + 2 h \nabla \ln p_{T-(k+1)h}\left(\widehat{X}_{(k+1)h}\right) + \sqrt{2h} z_{k}, 
\end{equation}
here $k$ goes from $0$ to $N-1$, $z_k \sim \cN(0, I_d)$ and $\widehat X_0 \sim \cN(0, I_d).$ This is equivalent to \PDM with the \texttt{hybrid} flag set to false (with the only difference that $k$ traverses from $0$ to $N-1$ as opposed to the reverse). When the \texttt{hybrid} flag is set to true, the discretization reads
\begin{align}
\widehat{X}_{(k+1)h} - \widehat{X}_{kh} = h \widehat{X}_{kh} + 2 h \nabla \ln p_{T-(k+1)h}\left(\widehat{X}_{(k+1)h}\right) + \sqrt{2h} z_{k}.
\label{eq:update-hybrid}
\end{align}

Our goal is to bound the KL divergence $\mathrm{KL}(\pTb \| q_T)$ between $\pTb$, the target distribution (note that $\pTb = p_0$), and $q_T$, the distribution of $\widehat X_T.$ If $\widehat X_T$ was defined by a continuous time process with sufficient regularity, then one could apply the fundamental theorem of calculus to get \begin{equation}\label{eq:calculus}
    \mathrm{KL}(\pTb \| q_T) = \mathrm{KL}(\pzb  \| q_0) +\int_0^T\frac{ \partial\mathrm{KL}(\ptb  \| q_t)}{\partial t}dt.
\end{equation}
In turn, bounding the derivative of the KL divergence as time evolves would yield a bound on the quantity of interest. 
To execute this plan in our discretized setting, we construct interpolating stochastic processes flowing from $\widehat X_{kh}$ to $\widehat X_{(k+1)h}$. Define the stochastic processes $\left(\widehat{X}_t^{(k)}\right)_{t \in [0, h]}$ given by
\begin{equation}
    \widehat X_t^{(k)} = \widehat X^{(k)}_0 + t \widehat X^{(k)}_t + 2t \nabla \ln p_{T-kh-t}\left(\widehat X^{(k)}_t\right) + \sqrt{2}W_t
    \label{eq:interpolate-confusion}
\end{equation}
starting at $\widehat{X}_0^{(k)} = \widehat{X}_{kh}$ for all $k$. The next lemma gives an explicit formula for the SDE governing the dynamics of this stochastic process; we defer its proof to Appendix~\ref{app:interpolation}.
\begin{lemma}[Interpolating Process for Backward Algorithm]\label{lem:interpolation}
Suppose that $h \leq \frac{1}{2L+1}$. Then, the discrete algorithm in \eqref{eq:update-backward} has a continuous interpolating process evolving according to the SDE
\begin{equation}
    d\widehat{X}^{(k)}_{t} = \widehat{\mu}_k(\widehat{X}^{(k)}_t, t) \, dt + \sqrt{2\widehat{G}_k\left(\widehat{X}^{(k)}_t, t\right)} \, dW_t \qquad t \in [0, h]
    \label{eq:interpolated-sde}
\end{equation}
where
\begin{align*}
    &\widehat{G}_k(x,t) = \left[ (1 - t)I_d - 2 t \nabla^2 \ln p_{T-kh-t}(x) \right]^{-2},\quad \text{and} \\
    &\widehat{\mu}_k(x,t) = \sqrt{ \widehat{G}_k(x,t)} \left(
        x + 2 \nabla \ln p_{T-kh-t}(x) + 2t \frac{\partial}{\partial t} \nabla \ln p_{T-kh-t}(x) \right. \\ &\hspace{3.5cm} \left.  + 2t \operatorname{Tr} \left[ \nabla^3 \ln p_{T-kh-t}(x) \widehat{G}_k(x, t) \right]
\vphantom{\frac{\partial}{\partial t} }    \right).
\end{align*}
\end{lemma}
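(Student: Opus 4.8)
The plan is to realize the interpolating process \eqref{eq:interpolate-confusion} as a deterministic, time-dependent transformation of a scaled Brownian motion and then apply Itô's formula. Write $b_t(x) := \nabla \ln p_{T-kh-t}(x)$ and introduce the map $\Psi_t\colon \bR^d \to \bR^d$, $\Psi_t(x) := (1-t)\,x - 2t\, b_t(x)$, so that the defining relation \eqref{eq:interpolate-confusion} reads exactly $\Psi_t(\widehat X_t^{(k)}) = Y_t$ with $Y_t := \widehat X_{kh} + \sqrt 2\, W_t$. We have $\nabla\Psi_t(x) = (1-t)I_d - 2t\,\nabla^2\ln p_{T-kh-t}(x)$, which is symmetric; the $L$-Lipschitz-gradient assumption gives $\opnorm{\nabla^2 \ln p_{T-kh-t}(x)} \le L$, so under the step-size bound $h \le \tfrac{1}{2L+1}$ (with comfortable slack under the main theorem's $h \le \tfrac{1}{8L+4}$) we obtain $\nabla\Psi_t(x) \succeq (1 - t(1+2L))I_d \succ 0$ for all $t\in[0,h]$, $x\in\bR^d$. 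Hence $\Psi_t$ is invertible with a $C^2$ inverse $G(y,t) := \Psi_t^{-1}(y)$ (using the smoothness of $\ln p_s$ in $x$ and the assumed regularity of the forward semigroup in $s$ for the joint regularity), and $\widehat X_t^{(k)} = G(Y_t, t)$ by construction.

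Next I would apply Itô's formula to $\widehat X_t^{(k)} = G(Y_t, t)$. Since $dY_t = \sqrt 2\, dW_t$ we have $d\langle Y^i, Y^j\rangle_t = 2\delta_{ij}\,dt$, so
\begin{align*}
    d\widehat X_t^{(k)} = \left[\,\partial_t G(Y_t,t) + \Delta_y G(Y_t,t)\,\right] dt + \sqrt 2\; \nabla_y G(Y_t,t)\, dW_t ,
\end{align*}
where $\Delta_y G$ is the componentwise Laplacian in $y$. It then remains to identify the three ingredients with the formulas in the statement, all of which follow by implicit differentiation of $\Psi_t(G(y,t)) = y$. Differentiating in $y$ gives $\nabla_y G = [\nabla\Psi_t(G)]^{-1}$; since $\nabla\Psi_t(G)$ is symmetric positive definite, it is the principal square root of $[\nabla\Psi_t(G)]^{-2} = \widehat G_k(\widehat X_t^{(k)}, t)$, yielding the diffusion coefficient $\sqrt{2\widehat G_k}$. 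Differentiating in $t$ gives $\partial_t G = -[\nabla\Psi_t(G)]^{-1}(\partial_t\Psi_t)(G)$, and since $\partial_t\Psi_t(x) = -x - 2\,b_t(x) - 2t\,\tfrac{\partial}{\partial t} b_t(x)$ this reproduces the first three terms of $\widehat\mu_k$ after substituting $x = \widehat X_t^{(k)} = G$. Finally, writing $A := [\nabla\Psi_t(G)]^{-1}$, one has $\partial_{y_i}G = Ae_i$ and, using $\partial_{y_i}\nabla\Psi_t(G) = -2t\,\nabla_{Ae_i}\nabla^2\ln p_{T-kh-t}(G)$ together with the formula for the derivative of a matrix inverse, $\partial_{y_i}^2 G = 2t\, A\,\nabla_{Ae_i}\nabla^2\ln p_{T-kh-t}(G)\,Ae_i$; summing over $i$ and invoking the symmetry of $A$ and of the third-derivative tensor collapses $\sum_i \partial_{y_i}^2 G$ to $2t\,A\,\operatorname{Tr}[\nabla^3\ln p_{T-kh-t}(G)\,A^2] = 2t\,\sqrt{\widehat G_k}\,\operatorname{Tr}[\nabla^3\ln p_{T-kh-t}\,\widehat G_k]$, which is precisely the fourth term of $\widehat\mu_k$. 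Assembling these, and checking that $\widehat X_h^{(k)}$ has the same law as the update \eqref{eq:update-backward} at $t = h$, finishes the argument.

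I expect the main obstacle to be the index bookkeeping in the computation of $\Delta_y G$: one must differentiate a matrix inverse whose argument depends on $y$ through $G$, carry along the third-derivative tensor $\nabla^3\ln p$, and use both the symmetry of the Hessian (so that $[\nabla\Psi_t]^{-2}$ has principal square root $[\nabla\Psi_t]^{-1}$, matching $\sqrt{\widehat G_k}$) and the symmetry of $\nabla^3\ln p$ to recognize the contracted sum as $\operatorname{Tr}[\nabla^3\ln p_{T-kh-t}\,\widehat G_k]$. A secondary, more analytic point is justifying the differentiability of $t\mapsto \nabla\ln p_{T-kh-t}$ and hence the applicability of Itô's formula—this is exactly where the unstated ``regularity conditions'' on the forward process enter.
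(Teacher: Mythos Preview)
Your proposal is correct and hinges on the same idea as the paper: introduce the map $\Psi_t(x) = (1-t)x - 2t\nabla\ln p_{T-kh-t}(x)$ (which the paper calls $T_t$), observe that $\Psi_t(\widehat X_t^{(k)})$ is a scaled Brownian motion started at $\widehat X_{kh}$, and use It\^o's formula. The only substantive difference is the direction in which It\^o is applied. The paper \emph{postulates} that $\widehat X_t^{(k)}$ satisfies an SDE of the form $d\widehat X_t^{(k)} = \widehat\mu\,dt + \sqrt{2\widehat G}\,dW_t$, applies It\^o to the forward map $T_t(\widehat X_t^{(k)})$, and matches the resulting expression against $\sqrt 2\,dW_t$ to solve two algebraic equations for $\widehat\mu$ and $\widehat G$. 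You instead invert $\Psi_t$ explicitly (using the positive-definiteness of $\nabla\Psi_t$ under the step-size bound), apply It\^o to $\Psi_t^{-1}(Y_t)$, and read off the drift and diffusion directly via implicit differentiation. Your route is slightly more constructive and makes the role of the condition $h\le \tfrac{1}{2L+1}$ transparent (it guarantees invertibility), at the cost of the extra bookkeeping in computing derivatives of the inverse map---precisely the Laplacian term you flagged. The paper's route avoids differentiating the inverse but requires the ansatz; both lead to the same formulas.
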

We shall compare the distributions of these processes with the equivalent pieces of \eqref{eq:true-confusion}, i.e.,
\begin{equation}
    d{X}^{(k)}_{t} = \underbrace{\left[X^{(k)}_t + 2 \nabla \ln p_{T-kh-t}\left({X}^{(k)}_t\right)\right]}_{:=\mu_k\left(X_t^{(k)}, t\right)} dt + \sqrt{2}\underbrace{I}_{:=G_k\left(X^{(k)}_t, t\right)} \, dW_t \qquad t \in [0, h]
    \label{eq:true-sde-pieces}
\end{equation}
To derive a bound on the time derivative of KL divergence in \eqref{eq:calculus}, we leverage the Fokker-Planck equation in tandem with a boundary condition that allows us to apply an integral by parts. This motivates the assumptions we impose next.
\begin{assumption}[\textbf{Regularity of the potential}]\label{assump:data-distr} The following hold.
\begin{enumerate}[leftmargin=0.2cm] 
    \item[] (\textbf{Smoothness}) There exists $L > 1$, such that for all $t \ge 0$, $\ln p_t$ is three-times differentiable and $L$-smooth, i.e. the gradient $\nabla \ln p_t$ is $L$-Lipschitz. %
    \item[] (\textbf{Data second moment}) The expected energy
    $
        M_2 := \bE_{p_0}[\|x\|^2] < \infty
    $ is finite.
    \item[] (\textbf{Score second moment}) The running average
    $
        C_p := \frac{1}{T} \int_0^T \bE_{p_t}[\|\nabla \ln p_t(x)\|^2] dt < \infty
    $ is finite.
    \item[] (\textbf{Lipschitz Hessian}) For any $x$ and $t$, let $ H_t(x) = \max \left\{ \opnorm{\nabla_i \nabla^2 \ln p_t(x)} \mid i=1,2,\dots,d  \right\}.$  The running average
    $
        H^2 := \frac{1}{T} \int_0^T \bE_{p_t}[H_t(x)^2] dt < \infty 
    $
    is finite.
\end{enumerate}
\end{assumption}
We highlight that we do not require the distribution to be log-concave, nor do we impose that it satisfies the log-Sobolev inequality. Smoothness of the potential and bounded second moments are standard assumptions in the literature \citep{chen2022improved,wu2024stochastic}. Lipschitz continuity of the Hessian is not as common for diffusion models, but has been used to study Langevin-type methods \citep{ma2021there, wibisono2019proximal}.

Additionally, we assume regularity conditions for the analysis of time derivative of KL. We first state it in its general form.
\begin{assumption}[\textbf{Regularity of KL time derivative}]
\label{assump:regularity-general}
Consider the SDEs
\begin{equation*}
    dY_t = f(Y_t, t) dt + \sqrt{2J(Y_t, t)} dW_t
\end{equation*}
and
\begin{equation*}
    d\widehat{Y}_t = \widehat{f}\left(\widehat{Y}_t, t\right) dt + \sqrt{2\widehat{J}\left(\widehat{Y}_t, t\right)} dW_t
\end{equation*}
where $Y_t, \widehat{Y}_t \in \bR^d$ and $f, \widehat{f}\colon \bR^d \times \bR \rightarrow \bR^d$, and $J, \widehat{J} \colon \bR^{d} \times \bR \rightarrow \mathcal{S}^d_+$.\footnote{Recall that $\mathcal{S}^d_+$ denotes the set of $d\times d$ symmetric positive semidefinite matrices.}
Let $\rho_t$ and $\nu_t$ be the distributions of $Y_t$ and $\widehat{Y}_t$ respectively. 
The following two conditions hold. 
\begin{enumerate}[leftmargin=0.2cm]
    \item[] (\textbf{Boundary condition}) For all $t$,
        \begin{align*}
        &\int_{\bR^d} \nabla \cdot \left\{\ln \frac{\rho_t}{\nu_t} \left[ \nabla \cdot \left( \rho_t J \right) - \rho_t f \right] \right\} dx  =  0 \\
        \text{and}
        \quad 
        &
        \int_{\bR^d} \nabla \cdot \left\{\frac{\rho_t}{\nu_t} \left[ \nabla \cdot \left( \nu_t \widehat{J} \right) - \nu_t \widehat{f} \right] \right\} dx  =  0.
        \end{align*}
        Here, we write $\rho_t$, $\nu_t$, $f$, $\widehat{f}$, $J$ and $\widehat{J}$ in place of $\rho_t(x)$, $\nu_t(x)$, $f(x, t)$, $\widehat{f}(x, t)$, $J(x, t)$ and $\widehat{J}(x, t)$, respectively.
    \item[] (\textbf{Dominated convergence}) There exist two integrable functions $\theta, \kappa \colon \bR^d \to \bR$ such that
    \begin{align*}
        \left| \frac{\partial}{\partial t} \left( \rho_t(x) \ln \frac{\rho_t(x)}{\nu_t(x)} \right) \right| \le \theta(x) \quad \text{and} \quad \left| \frac{\partial}{\partial t} \rho_t(x)  \right| \le \kappa(x)
    \end{align*}
    for all $t$ and almost every $x$.\footnote{By integrable we mean $\int \theta(x) dx < \infty$ and $\int \kappa(x) dx < \infty$.}
\end{enumerate}

\end{assumption}
To analyze the backward algorithm (\PDM with \texttt{hybrid} flag set to false), we instantiate \cref{assump:regularity-general}  for the reverse-time and interpolating processes.
\begin{assumption}[\textbf{Regularity Condition for Interpolating Process of Backward Algorithm}]
\label{assump:regularity-backward}
For each $k \in \{0, \dots, N-1\}$, let $\mu_k,$ $G_k,$ and $X_t^{(k)}$ be defined as in \eqref{eq:true-sde-pieces} and let $\ptb^{(k)}$ be the density of $X_t^{(k)}$. Similarly, let $\widehat{\mu}_k,$ $\widehat{G}_k,$ and $\widehat X_t^{(k)}$ be defined as in \eqref{eq:interpolated-sde} and let $q_t^{(k)}$ be the density of $\widehat X_t^{(k)}$.
\cref{assump:regularity-general} holds for $f=\mu_k, \widehat{f} = \widehat{\mu}_k, J=G_k, \widehat{J} = \widehat{G}_k, \rho_t = \ptb^{(k)},  $ and $\nu_t = q_t^{(k)}$.
\end{assumption}
The boundary condition is required for applying an integral by parts in the analysis of time derivative of KL (see \cref{proof:dKL/dt-sde}). Similar conditions are implicitly assumed in proofs in prior works \citep{chen2022improved, wibisono2019proximal, lee2022convergenceb, vempala2019rapid}. These conditions are met when the functions inside the divergence operator decay sufficiently fast at infinity, which is satisfied by distributions with regular tail behaviors. 
The domination condition is required to apply the Leibniz integral rule in the analysis of time derivative of KL (see \cref{lem:dKL/dt}).

To state an analogous assumption for the hybrid algorithm we require a slightly different interpolating SDE, which we introduce in the next result. The proof of this lemma is deferred to \cref{app:interpolation-hybrid}.
\begin{lemma}[Interpolating Process for Hybrid Algorithm]\label{lem:interpolation-hybrid}
Suppose that $h < \frac{1}{2L}$. Then, the discrete algorithm in \cref{eq:update-hybrid} has a continuous interpolating process evolving according to the SDE
\begin{equation}
    d\widehat{X}^{(k)}_{t} = \widehat{\mu}_k(\widehat{X}^{(k)}_t, t; \widehat{X}_0^{(k)}) \, dt + \sqrt{2\widehat{G}_k\left(\widehat{X}^{(k)}_t, t\right)} \, dW_t \qquad t \in [0, h]
    \label{eq:interpolated-sde-hybrid}
\end{equation}
where
\begin{align*}
    &\widehat{G}_k(x,t) = \left[ I_d - 2 t \nabla^2 \ln p_{T-kh-t}(x) \right]^{-2},\quad \text{and} \\
    &\widehat{\mu}_k(x,t;a) = \sqrt{ \widehat{G}_k(x,t)} \left(
        a + 2 \nabla \ln p_{T-kh-t}(x) + 2t \frac{\partial}{\partial t} \nabla \ln p_{T-kh-t}(x) \right. \\ &\hspace{4cm} \left.  + 2t \operatorname{Tr} \left[ \nabla^3 \ln p_{T-kh-t}(x) \widehat{G}_k(x, t) \right]
\vphantom{\frac{\partial}{\partial t} }    \right).
\end{align*}
\end{lemma}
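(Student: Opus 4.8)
The plan is to mirror the argument for \Cref{lem:interpolation}. Write $a := \widehat X_0^{(k)} = \widehat X_{kh}$ and define the interpolating process $(\widehat X_t^{(k)})_{t\in[0,h]}$ through the hybrid analogue of \eqref{eq:interpolate-confusion}, namely the implicit relation
\begin{equation*}
    \widehat X_t^{(k)} = (1+t)\, a + 2t\, \nabla \ln p_{T-kh-t}\bigl(\widehat X_t^{(k)}\bigr) + \sqrt{2}\, W_t, \qquad t \in [0,h],
\end{equation*}
where $(W_t)$ is a Brownian motion with $W_0 = 0$. First I would check this is the correct interpolation: at $t = 0$ it forces $\widehat X_0^{(k)} = a = \widehat X_{kh}$, and at $t = h$ it reduces to $\widehat X_h^{(k)} = (1+h)\widehat X_{kh} + 2h\,\nabla \ln p_{T-(k+1)h}(\widehat X_h^{(k)}) + \sqrt 2 W_h$, which is exactly the hybrid recursion \eqref{eq:update-hybrid} once we identify $\sqrt 2 W_h \stackrel{d}{=} \sqrt{2h}\, z_k$; so this process interpolates one step of the hybrid algorithm.

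Next I would establish well-posedness and regularity, which is where the step-size restriction enters. Let $F_t(x) := x - 2t\,\nabla \ln p_{T-kh-t}(x)$, so that the implicit relation reads $F_t(\widehat X_t^{(k)}) = (1+t)a + \sqrt 2 W_t$. By $L$-smoothness of $\ln p_{T-kh-t}$ (\Cref{assump:data-distr}) the eigenvalues of $\nabla^2 \ln p_{T-kh-t}(x)$ lie in $[-L, L]$, so the Jacobian $\nabla F_t(x) = I_d - 2t\,\nabla^2 \ln p_{T-kh-t}(x)$ satisfies $\nabla F_t(x) \succeq (1-2hL) I_d \succ 0$ uniformly in $x$ and in $t \le h < \tfrac{1}{2L}$. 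Hence $F_t$ is uniformly strongly monotone, so a $C^2$-diffeomorphism of $\bR^d$, and the implicit relation has the unique, adapted solution $\widehat X_t^{(k)} = F_t^{-1}\bigl((1+t)a + \sqrt 2 W_t\bigr)$. Using the regularity of $(x,t)\mapsto \ln p_t(x)$ (which we take as part of the standing regularity conditions — it is already needed for $\partial_t\nabla\ln p_t$, hence $\widehat\mu_k$, to make sense) together with the implicit function theorem in $t$, the map $(w,t)\mapsto F_t^{-1}\bigl((1+t)a + \sqrt 2 w\bigr)$ is $C^{1,2}$, so $\widehat X_t^{(k)}$ is a genuine Itô process; and the same eigenvalue bound shows $\widehat G_k(x,t) = \bigl(I_d - 2t\,\nabla^2 \ln p_{T-kh-t}(x)\bigr)^{-2}$ is well-defined and symmetric positive definite, with $\sqrt{\widehat G_k(x,t)} = \bigl(I_d - 2t\,\nabla^2 \ln p_{T-kh-t}(x)\bigr)^{-1}$.

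It then remains to read off the coefficients by Itô's formula. Posit $d\widehat X_t^{(k)} = b_t\, dt + \sigma_t\, dW_t$ and differentiate the implicit relation term by term: $(1+t)a$ contributes $a\, dt$, $\sqrt 2 W_t$ contributes $\sqrt 2\, dW_t$, and for the composite term, writing $g(x,t) := \nabla \ln p_{T-kh-t}(x)$, the Itô product rule together with Itô's formula for the vector field $g$ gives
\begin{equation*}
    d\bigl[2t\, g(\widehat X_t^{(k)}, t)\bigr] = \Bigl[2g + 2t\,\partial_t g + 2t\,\nabla^2 \ln p_{T-kh-t}\, b_t + t\operatorname{Tr}\bigl(\nabla^3 \ln p_{T-kh-t}\, \sigma_t\sigma_t^\top\bigr)\Bigr] dt + 2t\,\nabla^2 \ln p_{T-kh-t}\,\sigma_t\, dW_t,
\end{equation*}
all derivatives evaluated at $(\widehat X_t^{(k)}, t)$. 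Matching $dW_t$ coefficients yields $(I_d - 2t\,\nabla^2 \ln p_{T-kh-t})\sigma_t = \sqrt 2\, I_d$, hence $\sigma_t = \sqrt 2\,(I_d - 2t\,\nabla^2 \ln p_{T-kh-t})^{-1}$ and $\tfrac12\sigma_t\sigma_t^\top = \widehat G_k(\widehat X_t^{(k)}, t)$, i.e. $\sigma_t = \sqrt{2\widehat G_k}$; substituting this quadratic variation back and matching $dt$ coefficients gives the linear system $(I_d - 2t\,\nabla^2 \ln p_{T-kh-t})\,b_t = a + 2\nabla \ln p_{T-kh-t} + 2t\,\partial_t\nabla \ln p_{T-kh-t} + 2t\operatorname{Tr}(\nabla^3 \ln p_{T-kh-t}\,\widehat G_k)$, whose solution is exactly $b_t = \widehat\mu_k(\widehat X_t^{(k)}, t; a)$ since $(I_d - 2t\,\nabla^2 \ln p_{T-kh-t})^{-1} = \sqrt{\widehat G_k}$. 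This is \eqref{eq:interpolated-sde-hybrid}. The only genuine obstacle is the well-posedness/regularity step of the previous paragraph — guaranteeing that the implicit map is a well-defined, adapted, $C^{1,2}$ transformation of Brownian motion and that $\widehat G_k$ is positive definite (this is what $h < \tfrac{1}{2L}$ buys, and what forces the standing smoothness-in-$t$ assumption); once that is in place the rest is a mechanical Itô expansion. Compared with \Cref{lem:interpolation}, the sole change is that the anchor $a = \widehat X_0^{(k)}$ (rather than the current state) multiplies the first drift term, which is precisely why $\widehat\mu_k$ acquires the extra argument and $\widehat G_k$ loses the $(1-t)I_d$ term, becoming $(I_d - 2t\nabla^2\ln p_{T-kh-t})^{-2}$; all the remaining bookkeeping carries over verbatim.
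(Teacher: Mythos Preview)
Your proposal is correct and follows essentially the same approach as the paper: define the interpolating process via the implicit relation $\widehat X_t^{(k)} = (1+t)a + 2t\,\nabla\ln p_{T-kh-t}(\widehat X_t^{(k)}) + \sqrt{2}\,W_t$, then apply It\^o's formula and match drift and diffusion coefficients to read off $\widehat\mu_k$ and $\widehat G_k$. The paper's own proof is terser---it simply says the argument is analogous to \Cref{lem:interpolation} with $T_t(x) = x - t\,a - 2t\,\nabla\ln p_{T-kh-t}(x)$---whereas you additionally spell out the well-posedness step (invertibility of $I_d - 2t\,\nabla^2\ln p$ via $h < 1/(2L)$) that the paper leaves implicit.
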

With this hybrid interpolating process in place, we introduce analogous regularity conditions.
\begin{assumption}[\textbf{Regularity Condition for Interpolating Process of Hybrid Algorithm}]
\label{assump:regularity-hybrid}
For each $k \in \{0, \dots, N-1\}$, let $\mu_k,$ $G_k,$ and $X_t^{(k)}$ be defined as in \eqref{eq:true-sde-pieces} and let $\pb{t\mid0}^{(k)} (\cdot \mid a)$ denote the conditional density of $X_t^{(k)}$ given $X_0^{(k)} = a$. Similarly, let $\widehat{\mu}_k,$ $\widehat{G}_k,$ and $\widehat X_t^{(k)}$ be defined as in \eqref{eq:interpolated-sde-hybrid} and let $q_{t\mid0}^{(k)}(\cdot \mid a)$ be the conditional density of $\widehat{X}_t^{(k)}$ given $\widehat{X}_0^{(k)} = a$. \cref{assump:regularity-general} holds for $f=\mu_k$, $\widehat{f}(x, t) = \widehat{\mu}_k(x, t ; a)$, $J = G_k,$ $\widehat{J} = \widehat{G}_k,$ $\rho_t = \pb{t \mid 0}^{(k)} ( \cdot \mid a )$ and $\nu_t = q_{t \mid 0}^{(k)} ( \cdot \mid a )$.
\end{assumption}

Equipped with these assumptions, we are ready to state our main result.
\begin{theorem}[\textbf{Convergence guarantee}]\label{thm:main-formal}
    Suppose that \cref{assump:data-distr} holds.
    Set the step size and time horizon to satisfy $ h \le \frac{1}{8L+4}$ and $T \ge 0.25$. %
    The following two hold true.     
    \begin{enumerate}[leftmargin=0.2cm]    
        \item[] (\textbf{Hybrid}) If the \texttt{hybrid} flag is set to true and~\cref{assump:regularity-hybrid} holds, then
            \begin{equation}\label{eq:hybrid-formal}
        \mathrm{KL}\left(\pTb \| q_T\right)
        \lesssim (d + M_2) e^{-T} + hT dL^2 + h^2 T  \left[ (d + M_2  + C_p) L^2  +  d^2 H^2 \right ] + h^4 T d^3 L^2 H^2 .
    \end{equation}
    \item[] (\textbf{Backward}) If the \texttt{hybrid} flag is set to false and~\cref{assump:regularity-backward} holds, then
    \begin{equation}\label{eq:backward-formal}
        \mathrm{KL}\left(\pTb\| q_T\right)
        \lesssim (d + M_2) e^{-T} + h^2 T  \left[ (d + M_2  + C_p) L^2  +  d^2 H^2 \right ] + h^4 T d^3 L^2 H^2.
    \end{equation}
    \end{enumerate}
\end{theorem}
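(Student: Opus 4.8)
The plan is to follow the interpolation-based strategy sketched in the excerpt, which mirrors the analysis of the Proximal Langevin Algorithm in \citep{wibisono2019proximal} combined with the diffusion-model techniques of \citep{chen2022improved}. First I would decompose $\mathrm{KL}(\pTb \| q_T)$ via a telescoping/fundamental-theorem-of-calculus argument: since $q_0$ is a standard Gaussian and $\pzb = p_T$ is close to a Gaussian by the mixing of the OU process, the initialization term $\mathrm{KL}(\pzb \| q_0)$ contributes the $(d+M_2)e^{-T}$ piece (this is a standard computation using $M_2$ and the fact that $p_T$ is the law of $e^{-T}X_0 + \sqrt{1-e^{-2T}}\,Z$). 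Then, using \cref{lem:interpolation} (resp.\ \cref{lem:interpolation-hybrid}) to realize the discrete iterates as a genuine continuous-time SDE on each interval $[kh,(k+1)h]$, I would write
\begin{align*}
\mathrm{KL}(\pTb \| q_T) = \mathrm{KL}(\pzb \| q_0) + \sum_{k=0}^{N-1} \int_0^h \frac{\partial}{\partial t}\,\mathrm{KL}\!\left(\ptb^{(k)} \,\Big\|\, q_t^{(k)}\right) dt.
\end{align*}

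The core step is bounding the per-interval time derivative of the KL divergence. Using the Fokker--Planck equations for the true reverse SDE piece \eqref{eq:true-sde-pieces} and the interpolating SDE \eqref{eq:interpolated-sde}, together with an integration by parts justified by the boundary condition in \cref{assump:regularity-backward}/\cref{assump:regularity-hybrid} and the Leibniz rule justified by the dominated-convergence condition, I expect to obtain an inequality of the form $\frac{\partial}{\partial t}\mathrm{KL}(\ptb^{(k)} \| q_t^{(k)}) \le -c\,\mathrm{FI}(\ptb^{(k)} \| q_t^{(k)}) + (\text{discretization error}_k)$, where $\mathrm{FI}$ is a (possibly weighted, because the diffusion matrix $\widehat G_k$ is nontrivial) relative Fisher information, and the negative term is what one gets from the true process contracting toward its stationary behavior. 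The discretization error is controlled by how far $(\widehat\mu_k,\widehat G_k)$ deviates from $(\mu_k, I)$; using the explicit formulas from the interpolation lemmas, the deviation is governed by $t\,\|\nabla^2 \ln p\|$, $t\,\|\partial_t \nabla \ln p\|$, and $t\,\|\mathrm{Tr}[\nabla^3 \ln p\, \widehat G_k]\|$. Invoking $L$-smoothness bounds $\|\nabla^2\ln p\|\le L$, so $\widehat G_k = ((1-t)I - 2t\nabla^2\ln p)^{-2}$ is well-conditioned for $h\le \frac{1}{8L+4}$; the Hessian-Lipschitz bound $H_t$ controls the third-derivative term; and $\partial_t \nabla \ln p_t$ is controlled through the forward Fokker--Planck relation $\partial_t \ln p_t = \Delta \ln p_t + \|\nabla \ln p_t\|^2 + \nabla\ln p_t \cdot x + d$, which is where the $C_p$ moment and extra factors of $d$ enter. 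Collecting terms over a single step gives an error $\lesssim h^2[(d+C_p)L^2 + d^2H^2] + h^4 d^3 L^2 H^2$ per unit time for the backward scheme, plus an additional $h\,dL^2$ for the hybrid scheme because there the drift is evaluated at $\widehat X_0^{(k)}$ rather than $\widehat X_t^{(k)}$, introducing an $O(h)$ (rather than $O(h^2)$) mismatch term.

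After the per-step bound, I would sum over $k=0,\dots,N-1$ and use $Nh = T$ to turn per-unit-time errors into the stated $T$-prefactored expressions; the moments $M_2$ and $C_p$ appear because bounding things like $\bE\|\widehat X_t^{(k)}\|^2$ along the interpolation requires second-moment control, which one propagates from $M_2$ using a Grönwall-type argument along the OU dynamics (contributing the $M_2$ inside the brackets). One also needs a second-moment / mixing estimate for the forward process to get $\bE_{p_t}\|X\|^2 \lesssim d + M_2 e^{-2t}$, feeding into both the initialization term and the discretization errors. The negative Fisher-information terms are simply discarded (or, in a sharper version, used to absorb part of the error, but for this rate they are not needed).

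The main obstacle I anticipate is the careful handling of the \emph{non-identity, state-dependent diffusion matrix} $\widehat G_k(x,t)$ in the interpolating SDE: unlike the Proximal Langevin setting where one compares to a fixed reference, here both the drift $\widehat\mu_k$ (which itself contains $\sqrt{\widehat G_k}$ and a $\mathrm{Tr}[\nabla^3\ln p\,\widehat G_k]$ term) and the diffusion $\widehat G_k$ differ from the true process, so the time-derivative-of-KL computation produces several cross terms that must each be bounded, and the integration by parts must be set up with the correct weighted Fisher information $\int \rho_t \|\nabla \ln(\rho_t/\nu_t)\|_{\widehat J}^2$. Ensuring the boundary terms vanish (hence \cref{assump:regularity-general}) and that $\widehat G_k$ stays uniformly bounded above and below on $t\in[0,h]$ — which is exactly why the hypotheses force $h \le \frac{1}{8L+4}$ (backward) and $h < \frac{1}{2L}$ together with $h\le 2$ — is the delicate part; everything else is bookkeeping of polynomial factors in $d, L, H$ and powers of $h$.
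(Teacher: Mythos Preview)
Your outline is essentially the paper's strategy for the \textbf{backward} case: telescope, bound the initialization via OU mixing, realize the iterates through the interpolating SDE of \cref{lem:interpolation}, compute $\partial_t \mathrm{KL}$ via Fokker--Planck plus integration by parts, and control the drift/diffusion mismatches using $L$, $H_t$, and the forward Fokker--Planck identity for $\partial_t \ln p_t$. Two small corrections: the expectations appearing in the $\partial_t \mathrm{KL}$ formula are with respect to $\ptb^{(k)}$, not $q_t^{(k)}$, so you never need Gr\"onwall on the algorithm's moments---only the OU second-moment bound $\EE_{\ptb}\|x\|^2 \lesssim d+M_2$ is used; and the negative Fisher-information term is \emph{not} simply discarded but is used (via Young's inequality) to absorb the cross terms $\langle \mu_k-\widehat\mu_k,\nabla\ln(\ptb^{(k)}/q_t^{(k)})\rangle$ etc.

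For the \textbf{hybrid} case there is a genuine gap in your plan. The interpolating SDE from \cref{lem:interpolation-hybrid} has drift $\widehat\mu_k(x,t;a)$ that depends on the \emph{random} starting point $a=\widehat X_0^{(k)}$, so the marginal $q_t^{(k)}$ does not satisfy a Fokker--Planck equation of the form needed for your identity $\partial_t \mathrm{KL}(\ptb^{(k)}\|q_t^{(k)}) = \ldots$ The paper's fix is to invoke the chain rule for KL,
\[
\mathrm{KL}\!\left(\pb{h}^{(k)}\,\big\|\,q_h^{(k)}\right) - \mathrm{KL}\!\left(\pb{0}^{(k)}\,\big\|\,q_0^{(k)}\right) \le \EE_{a\sim \pzb^{(k)}}\!\left[\mathrm{KL}\!\left(\pb{h\mid 0}^{(k)}(\cdot\mid a)\,\big\|\,q_{h\mid 0}^{(k)}(\cdot\mid a)\right)\right],
\]
and then apply the $\partial_t\mathrm{KL}$ machinery to the \emph{conditional} laws, for which the drift is a deterministic function of $(x,t)$ once $a$ is fixed. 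This conditioning is also what produces the extra term $\EE\|a-x\|^2$ in the drift-mismatch bound, which after an OU computation gives the additional $h\,dL^2$ you correctly anticipated; but it also forces you to control $\EE\|\nabla\ln \pb{t\mid 0}^{(k)}(x\mid a)\|^2$ (the conditional score), which requires a separate short argument via Bayes' rule and the explicit Gaussian form of the OU transition kernel. Without this conditioning step your hybrid argument would not go through as written.
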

\begin{proof}
The proof strategies for both hybrid and backward are completely analogous and follow from the identity \eqref{eq:calculus}. The two proofs diverge slightly in how we upper bound the derivative of the KL divergence. The core of the argument appears in the next proposition, whose proof is deferred to~\cref{app:prop-master}. 
\begin{proposition}\label{prop:master} %
The following two hold for any $k \in \{0, \dots, N-1\}$.
\begin{enumerate}[leftmargin=0.2cm] 
    \item[] (\textbf{Hybrid}) If the \texttt{hybrid} flag is set to true and Assumption~\ref{assump:regularity-hybrid} holds, then
     \begin{align}
    \mathrm{KL}\left(\pb{h}^{(k)} \| q^{(k)}_h\right) - \mathrm{KL}\left(\pb{0}^{(k)} \| q_0^{(k)}\right) 
    &\lesssim h^2dL^2+ h^2 L^2 \int_0^h \bE_{\ptb^{(k)}} \left[ \|x\|^2 + \| \nabla \ln \ptb^{(k)} \|^2 \right] dt \notag \\
    &\quad\quad + \left(h^2 d^2 + h^4 d^3 L^2\right) \int_0^h \bE_{\ptb^{(k)}} \left[ H_{T-kh-t}(x)^2 \right] dt. %
   \label{eq:one-step-KL-inequality-hybrid}
\end{align}
     \item[] (\textbf{Backward}) If the \texttt{hybrid} flag is set to false and Assumption~\ref{assump:regularity-backward} holds, then
    \begin{align}\begin{split}\label{eq:key-inequality-backwards}
\mathrm{KL}\left(\pb{h}^{(k)} \| q^{(k)}_h\right) - \mathrm{KL}\left(\pb{0}^{(k)} \| q_0^{(k)}\right) 
    &\lesssim h^2 L^2 \int_0^h \bE_{\ptb^{(k)}} \left[ \|x\|^2 + \| \nabla \ln \ptb^{(k)} \|^2 \right] dt \\
    &\quad\quad + \left(h^2 d^2 + h^4 d^3 L^2\right) \int_0^h \bE_{\ptb^{(k)}} \left[ H_{T-kh-t}(x)^2 \right] dt. %
   \end{split}
\end{align}
\end{enumerate}   
\end{proposition}
The main difference arises from the first term in the upper bound of the hybrid case. From now on, we will continue only with the hybrid scenario, as the rest of the argument is analogous to purely backward discretization. Taking the sum of the inequalities in~\cref{prop:master} from $0$ to $N-1$ yields
\begin{align}\begin{split}\label{eq:almost-there}
\mathrm{KL}\left(\pb{T} \| q_T\right) - \mathrm{KL}\left(\pb{0} \| q_0\right) &= \sum_{k=0}^{N-1}  \left(\mathrm{KL}\left(\pb{h}^{(k)} \| q^{(k)}_h\right) - \mathrm{KL}\left(\pb{0}^{(k)} \| q_0^{(k)}\right)\right)\\
    &\lesssim  Nh^2dL^2+ h^2 L^2 \int_0^T \bE_{\ptb} \left[ \|x\|^2 + \| \nabla \ln \pb{t} \|^2 \right] dt \\
    &\quad\quad + \left(h^2 d^2 + h^4 d^3 L^2\right) \int_0^T \bE_{\ptb} \left[ H_{t}(x)^2 \right] dt.
    \end{split}
\end{align}
To finish the argument, we invoke the following two results, whose proofs are standard, but we include them for completeness in \cref{app:second-moment} and \cref{app:bound-init}, respectively.%
\begin{lemma}\label{lem:second-moment}
The second moment of $\ptb$ is bounded $$\EE_{\ptb}[ \|x\|^2 ] 
    \le 2 (d + M_2) \quad \text{for all }t\in[0,T].
$$
\end{lemma}
\begin{lemma}[Lemma 9 in \cite{chen2022improved}]
\label{lem:bound-init}
Let $q_0$ be the standard normal distribution in $d$ dimensions.  For $T \ge 0.25$, we have
\begin{align*}
    \mathrm{KL}(\pzb \| q_0) \le 2 (d + M_2) e^{-2T}.
\end{align*}
\end{lemma}
Substituting these two into~\cref{eq:almost-there}, using $N = T/h$ and \cref{assump:data-distr} and reordering yields
\begin{align*}\mathrm{KL}\left(\pb{T} \| q_T\right) & \lesssim %
(d + M_2)e^{-2T} + hTdL^2+ h^2 T L^2  \left( d + M_2 + C_p \right)  %
+ \left(h^2 d^2 + h^4 d^3 L^2\right)TH^2 . 
\end{align*}
Thus, \eqref{eq:hybrid-formal} follows. The argument for \eqref{eq:backward-formal} is analogous, which completes the proof.
\end{proof}
The rest of this section is devoted to proving~\cref{prop:master}.
\subsection{Proof of \cref{prop:master}}\label{app:prop-master}
Once more, the goal is to use the identity \eqref{eq:calculus} by bounding the time derivative of the KL discrepancy. The crux of this proof lies in the following lemma, which we will instantiate twice, once for the hybrid discretization and once for the backward discretization. We defer its proof to the end of this section (\cref{proof:dKL/dt-sde}).

\begin{lemma}[Time derivative of KL between SDEs]
\label{lem:dKL/dt-sde} 
Suppose \cref{assump:regularity-general} holds.

Then, for all $t>0$ we have
\begin{align}
    \frac{d}{dt} \mathrm{KL}(\rho_t \| \nu_t)
    &= - \bE_{\rho_t} \left[ \left\| \nabla \ln \frac{\rho_t}{\nu_t} \right\|_{\widehat J}^2 \right] \notag 
    + \bE_{\rho_t} \left[ \left\langle f-\widehat{f}, \nabla \ln \frac{\rho_t}{\nu_t} \right\rangle \right] \notag
    \\ & \quad\quad 
    + \bE_{\rho_t} \left[  \left\langle (\widehat{J} -  {J})\nabla \ln \rho_t + \nabla \cdot (\widehat{J}-{J}) , \nabla \ln\frac{\rho_t}{\nu_t}\right\rangle \right] 
    \label{dKL/dt-sde}
\end{align}
where the arguments for $f$, $\widehat{f}$, $J$ and $\widehat{J}$ are omitted.
\end{lemma}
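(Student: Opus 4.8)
The plan is to compute $\frac{d}{dt}\mathrm{KL}(\rho_t\|\nu_t)$ directly by differentiating under the integral sign and substituting the Fokker--Planck equations for the two densities. First I would write $\mathrm{KL}(\rho_t\|\nu_t)=\int \rho_t\ln\frac{\rho_t}{\nu_t}\,dx$ and use the dominated convergence condition from \cref{assump:regularity-general} to justify passing $\frac{d}{dt}$ inside the integral; this yields $\frac{d}{dt}\mathrm{KL}(\rho_t\|\nu_t)=\int\left[\partial_t\rho_t\cdot\ln\frac{\rho_t}{\nu_t}+\partial_t\rho_t-\frac{\rho_t}{\nu_t}\partial_t\nu_t\right]dx$, where the middle term integrates to $\frac{d}{dt}\int\rho_t\,dx=0$. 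Next I would invoke the Fokker--Planck equations associated to the two SDEs: for $Y_t$ with drift $f$ and diffusion matrix $2J$, $\partial_t\rho_t=-\nabla\cdot(\rho_t f)+\nabla\cdot\nabla\cdot(\rho_t J)$, i.e. $\partial_t\rho_t=\nabla\cdot\left[\nabla\cdot(\rho_t J)-\rho_t f\right]$, and analogously $\partial_t\nu_t=\nabla\cdot\left[\nabla\cdot(\nu_t \widehat J)-\nu_t\widehat f\right]$. (Here $\nabla\cdot\nabla\cdot(\rho J)$ denotes the double divergence $\sum_{ij}\partial_i\partial_j(\rho J_{ij})$, consistent with the divergence-of-rows notation fixed in the appendix.)

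Then I would substitute these into the two surviving terms and integrate by parts, using precisely the \textbf{boundary condition} in \cref{assump:regularity-general} to discard the boundary integrals $\int\nabla\cdot\{\cdots\}\,dx=0$. For the first term, $\int\ln\frac{\rho_t}{\nu_t}\,\nabla\cdot\left[\nabla\cdot(\rho_t J)-\rho_t f\right]dx=-\int\left\langle\nabla\ln\frac{\rho_t}{\nu_t},\ \nabla\cdot(\rho_t J)-\rho_t f\right\rangle dx$. For the second term, $-\int\frac{\rho_t}{\nu_t}\,\nabla\cdot\left[\nabla\cdot(\nu_t\widehat J)-\nu_t\widehat f\right]dx=\int\left\langle\nabla\frac{\rho_t}{\nu_t},\ \nabla\cdot(\nu_t\widehat J)-\nu_t\widehat f\right\rangle dx$. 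Now I would expand $\nabla\cdot(\rho_t J)=J\nabla\rho_t+\rho_t(\nabla\cdot J)$ (and the $\widehat J$ analogue), write $\nabla\rho_t=\rho_t\nabla\ln\rho_t$, $\nabla\frac{\rho_t}{\nu_t}=\frac{\rho_t}{\nu_t}\nabla\ln\frac{\rho_t}{\nu_t}$, and rewrite everything as expectations under $\rho_t$. The first term becomes $-\bE_{\rho_t}\left[\left\langle\nabla\ln\frac{\rho_t}{\nu_t},\ J\nabla\ln\rho_t+\nabla\cdot J-f\right\rangle\right]$; the second becomes $\bE_{\rho_t}\left[\left\langle\nabla\ln\frac{\rho_t}{\nu_t},\ \widehat J\nabla\ln\nu_t+\nabla\cdot\widehat J-\widehat f\right\rangle\right]$, after dividing through by $\nu_t$.

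The final step is bookkeeping: combine the two inner products and split the result into the three pieces claimed. Writing $\nabla\ln\nu_t=\nabla\ln\rho_t-\nabla\ln\frac{\rho_t}{\nu_t}$ in the second term produces the quadratic piece $-\bE_{\rho_t}\big[\|\nabla\ln\frac{\rho_t}{\nu_t}\|_{\widehat J}^2\big]$ (this uses that $\widehat J$ is symmetric PSD so $\langle u,\widehat J u\rangle=\|u\|_{\widehat J}^2$), a cross term $\bE_{\rho_t}\big[\langle\widehat J\nabla\ln\rho_t-J\nabla\ln\rho_t,\ \nabla\ln\frac{\rho_t}{\nu_t}\rangle\big]$, the drift difference $\bE_{\rho_t}\big[\langle f-\widehat f,\ \nabla\ln\frac{\rho_t}{\nu_t}\rangle\big]$, and the divergence difference $\bE_{\rho_t}\big[\langle\nabla\cdot\widehat J-\nabla\cdot J,\ \nabla\ln\frac{\rho_t}{\nu_t}\rangle\big]$; collecting the last cross terms gives $\bE_{\rho_t}\big[\langle(\widehat J-J)\nabla\ln\rho_t+\nabla\cdot(\widehat J-J),\ \nabla\ln\frac{\rho_t}{\nu_t}\rangle\big]$, which matches \eqref{dKL/dt-sde}. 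I expect the main obstacle to be the careful handling of the matrix-valued divergences — keeping the index conventions straight when integrating by parts against the (non-constant, non-scalar) diffusion $\widehat J$, and confirming that the double-divergence form of Fokker--Planck is exactly what pairs with the stated boundary condition — rather than any deep inequality; the analytic subtleties (differentiation under the integral, vanishing boundary terms) are entirely absorbed by \cref{assump:regularity-general}.
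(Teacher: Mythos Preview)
Your proposal is correct and follows essentially the same approach as the paper's proof: differentiate under the integral via the dominated convergence part of \cref{assump:regularity-general}, substitute the Fokker--Planck equations, integrate by parts using the boundary conditions, expand $\nabla\cdot(\rho J)$ and $\nabla\cdot(\nu\widehat J)$ via the product rule, and finally split $\nabla\ln\nu_t=\nabla\ln\rho_t-\nabla\ln\frac{\rho_t}{\nu_t}$ to isolate the quadratic, drift-difference, and diffusion-difference pieces. The paper organizes the algebra slightly differently (it first separates drift and diffusion contributions in \cref{pf:dKL/dt-sde-aux2} and then handles the diffusion piece in \cref{pf:dKL/dt-sde-aux3}), but the ingredients and logical flow are the same.
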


We consider two cases. We start with the fully backward discretization since the argument is slightly more straightforward.

\paragraph{Case 1:} Suppose that \texttt{hybrid} flag is set to false. Fix $k \in \{0, \dots, N-1\}.$ 
Let $\widehat{X}_t^{(k)}$, $\widehat{\mu}_k$ and $\widehat{G}_k$ defined as in \cref{lem:interpolation}.
First, we state some properties of $\widehat{G}_k,$ with their proof deferred to~\cref{app:proof-bounds-G-backwards}.
\begin{lemma}\label{lem:bounds-G-backwards}
    Let $\widehat{G}_k$ defined as in~\cref{lem:interpolation}. Then, we have that 
    \begin{align*}
            (1-4tL)I \preceq \widehat{G}_k \preceq (1+18tL)I \quad \text{and} \quad
            (1-2tL)I \preceq \sqrt{\widehat{G}_k} \preceq (1+6tL)I.
    \end{align*}
\end{lemma}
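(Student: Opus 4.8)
The plan is to reduce both Loewner-order statements to scalar inequalities for the eigenvalues of the single symmetric matrix $A := (1-t)I_d - 2t\,\nabla^2 \ln p_{T-kh-t}(x)$, since $\widehat{G}_k(x,t) = A^{-2}$ and its positive square root is $\sqrt{\widehat{G}_k(x,t)} = A^{-1}$ (as $A^{-2} = (A^{-1})^2$ with $A^{-1}\succ 0$); all three matrices are functions of $A$, hence simultaneously diagonalizable, so $\preceq$ can be checked eigenvalue by eigenvalue. First I would control the spectrum of $A$: by the $L$-smoothness in \cref{assump:data-distr}, $\opnorm{\nabla^2 \ln p_{T-kh-t}(x)} \le L$, so $-L I_d \preceq \nabla^2 \ln p_{T-kh-t}(x) \preceq L I_d$ and therefore $\bigl(1 - t(1+2L)\bigr)I_d \preceq A \preceq \bigl(1 + t(2L-1)\bigr)I_d$. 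Since we are in the regime of \cref{thm:main-formal}, $t \le h \le \tfrac{1}{8L+4} = \tfrac{1}{4(2L+1)}$, whence $t(1+2L) \le \tfrac14$; in particular every eigenvalue of $A$ lies in $[\tfrac34, \tfrac54]$, so $A \succ 0$ and all inverses and square roots above are well defined.

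Next, since $s \mapsto s^{-1}$ and $s \mapsto s^{-2}$ are decreasing on $(0,\infty)$, the spectral bounds on $A$ translate directly into $(1+t(2L-1))^{-2}\,I_d \preceq \widehat{G}_k \preceq (1-t(1+2L))^{-2}\,I_d$ and the analogous statement for $\sqrt{\widehat{G}_k} = A^{-1}$ with exponent $-1$. It then remains to apply elementary one-variable estimates with $u := t(1+2L) \in [0,\tfrac14]$ and $v := t(2L-1) \le 2tL \le u$: for the lower tails, $(1+v)^{-2} \ge 1 - 2v \ge 1 - 4tL$ and $(1+v)^{-1} \ge 1 - v \ge 1 - 2tL$; for the upper tails, $(1-u)^{-2} \le 1 + 4u$ and $(1-u)^{-1} \le 1 + 2u$ on $[0,\tfrac14]$ (both following from $\tfrac{u}{1-u} \le \tfrac43 u$ / convexity). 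Using $u = t + 2tL \le 3tL$ since $L>1$, these give $\widehat{G}_k \preceq (1+12tL)I_d \preceq (1+18tL)I_d$ and $\sqrt{\widehat{G}_k} \preceq (1+6tL)I_d$, so the stated constants hold with slack. Re-reading the eigenvalue bounds as Loewner inequalities (legitimate as everything is a polynomial in $A$) finishes the proof.

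There is no deep obstacle: the lemma is a routine spectral computation. The two points needing a little care are (i) verifying that $A$ is strictly positive definite — this is exactly where the step-size restriction enters, as a weaker bound such as $h \le \tfrac{1}{2L+1}$ would only yield $A \succeq 0$, which is insufficient to take $A^{-1}$ or to bound $(1-t(1+2L))^{-2}$ — and (ii) choosing the scalar inequalities with enough room to match the posted constants $4, 18, 2, 6$; because $t(1+2L) \le \tfrac14$ is comfortably bounded away from $1$, essentially any standard bound (geometric series, convexity, or crude Taylor remainders) suffices.
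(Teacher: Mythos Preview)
Your proposal is correct and follows essentially the same route as the paper: reduce to scalar eigenvalue bounds via $L$-smoothness, then apply elementary inequalities of the form $(1-x)^{-1} \le 1 + cx$ and $(1-x)^{-2} \le 1 + c'x$ on a bounded interval, combined with $L \ge 1$ to absorb the stray $t$ into $tL$. The only cosmetic differences are that the paper works under the slightly weaker assumption $t \le \tfrac{1}{2(2L+1)}$ (so $x \le \tfrac12$) and uses the constant $6$ in $(1-x)^{-2} \le 1+6x$, whereas you invoke $t \le \tfrac{1}{4(2L+1)}$ from \cref{thm:main-formal} and the sharper $(1-x)^{-2} \le 1+4x$ on $[0,\tfrac14]$, then throw away the slack to match the stated $18$.
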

We instantiate~\cref{lem:dKL/dt-sde} with $f = \mu_k$, $\widehat f = \widehat{\mu}_k,$ $J = G_k,$ and $\widehat{J} = \widehat{G}_k.$ In which case, $\rho_t = \ptb^{(k)}$ and $\nu_t = q_t^{(k)}$ and the lemma yields
\begin{align} &\frac{d}{dt} \mathrm{KL}(\ptb^{(k)}\| q_t^{(k)}) \notag \\
    &= - \bE_{\ptb^{(k)}} \left[ \left\| \nabla \ln \frac{\ptb^{(k)}}{q_t^{(k)}} \right\|_{\widehat G_k}^2 \right] \notag 
    + \bE_{\ptb^{(k)}} \left[ \left\langle \mu_k-\widehat{\mu}_k, \nabla \ln \frac{\ptb^{(k)}}{q_t^{(k)}} \right\rangle \right] \notag
    \\ & \quad\quad 
    + \bE_{\ptb^{(k)}} \left[  \left\langle (\widehat{G}_k -  {I})\nabla \ln \ptb^{(k)} + \nabla \cdot (\widehat{G}_k-{I}) , \nabla \ln\frac{\ptb^{(k)}}{q_t^{(k)}}\right\rangle \right] \notag \\
    &\leq - \frac{1}{2}\bE_{\ptb^{(k)}} \left[ \left\| \nabla \ln \frac{\ptb^{(k)}}{q_t^{(k)}} \right\|^2 \right] 
    + 2\bE_{\ptb^{(k)}} \left[ \left\| \mu-\widehat{\mu}\right\|^2\right]  + 2\bE_{\ptb^{(k)}} \left[  \left\|(\widehat{G}_k -  {I})\nabla \ln \ptb^{(k)} \right\|^2\right] \notag
    \\ & \quad\quad 
     + 2\bE_{\ptb^{(k)}} \left[ \left\|\nabla \cdot (\widehat{G}_k-{I})\right\|^2\right] + \frac{3}{8}\bE_{\ptb^{(k)}} \left[ \left\| \nabla \ln\frac{\ptb^{(k)}}{q_t^{(k)}}\right\|^2 \right]\notag \\
      &\leq  2\bE_{\ptb^{(k)}} \left[ {\left\| \mu-\widehat{\mu}\right\|^2}+  {\left\|(\widehat{G}_k -  {I})\nabla \ln \ptb^{(k)} \right\|^2}
      +\left\|\nabla \cdot \widehat{G}_k\right\|^2
      \right]
    \label{eq:sequence-of-long-inequalities}
\end{align}
where the second relation uses Cauchy-Schwarz and Young's inequalities to bound $\langle a, b\rangle \leq 2\|a\|^2 + \tfrac{1}{8}\|b\|^2$ and the fact that  $\frac{1}{2}\|\cdot\|^2 \leq \|\cdot\|^2_{\widehat G_k}$ since $\frac{1}{2} I \preceq \widehat{G}_k$ thanks to Lemma~\ref{lem:bounds-G-backwards} and the constraint on $t \leq h \leq \frac{1}{8L}$. The following two lemmas will help us bound the first and last terms in the expected value in \eqref{eq:sequence-of-long-inequalities}. Their proofs are deferred to~\cref{app:proof-drift-difference,app:proof-div-diffusion}, respectively.

\begin{lemma}\label{lem:bound-drift-difference} For all $k \in \{0, \dots, N-1\}$, the difference between $\mu_k$ and $\widehat{\mu}_k$ is bounded by
\[
\left\| \mu_{k} - \widehat{\mu}_{k} \right\|^2 \le t^2L^2\left(363 \, \|x\|^2 + 2187 \, \|\nabla \ln \ptb^{(k)}(x)\|^2\right) + 24300 \, t^4d^{3} L^2 H_{T-kh-t}(x)^2.
\]
\end{lemma}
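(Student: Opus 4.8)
The plan is to expand both drifts using \cref{lem:interpolation}, use a Fokker--Planck identity to rewrite the time derivative of the score, and then bound the difference term-by-term with the spectral estimates from \cref{lem:bounds-G-backwards}. Throughout, abbreviate $s := \nabla\ln p_{T-kh-t}(x) = \nabla\ln\ptb^{(k)}(x)$, $G := \widehat{G}_k(x,t)$, and write $\nabla^2\ln p, \nabla^3\ln p$ for the Hessian and third-derivative tensor of $\ln p_{T-kh-t}$ at $x$; recall from \eqref{eq:true-sde-pieces} and \cref{lem:interpolation} that $\mu_k = x + 2s$ while $\widehat{\mu}_k = \sqrt{G}\big(x + 2s + 2t\,\partial_t s + 2t\operatorname{Tr}[\nabla^3\ln p\cdot G]\big)$.

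First I would establish the identity governing $\partial_t s$. Since the forward OU process \eqref{OU} has drift $-x$ and constant diffusion $2 I_d$, its density solves the Fokker--Planck equation $\partial_\tau p_\tau = \nabla\cdot(x p_\tau) + \Delta p_\tau$. Dividing by $p_\tau$ and using $\Delta p_\tau/p_\tau = \Delta\ln p_\tau + \|\nabla\ln p_\tau\|^2$ gives $\partial_\tau\ln p_\tau = d + \langle x,\nabla\ln p_\tau\rangle + \Delta\ln p_\tau + \|\nabla\ln p_\tau\|^2$; taking $\nabla_x$ and using $\partial_t = -\partial_\tau$ at $\tau = T-kh-t$ yields
\begin{equation*}
    \partial_t s \;=\; -\Big(s + (\nabla^2\ln p)\,x + 2(\nabla^2\ln p)\,s + \operatorname{Tr}[\nabla^3\ln p\cdot I_d]\Big).
\end{equation*}
The crucial point is that after substituting this into $\widehat{\mu}_k$, the two occurrences of the third-derivative tensor combine: the trace term $-2t\operatorname{Tr}[\nabla^3\ln p\cdot I_d]$ inside $2t\,\partial_t s$ cancels against $+2t\operatorname{Tr}[\nabla^3\ln p\cdot G]$, leaving $2t\operatorname{Tr}[\nabla^3\ln p\cdot(G-I_d)]$, so that
\begin{equation*}
    \mu_k - \widehat{\mu}_k \;=\; (I_d - \sqrt{G})(x+2s) \;-\; 2t\sqrt{G}\big(s + (\nabla^2\ln p)\,x + 2(\nabla^2\ln p)\,s\big) \;+\; 2t\sqrt{G}\operatorname{Tr}[\nabla^3\ln p\cdot(G-I_d)].
\end{equation*}

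Next I would bound the three summands separately. From \cref{lem:bounds-G-backwards} and $t\le h\le\frac{1}{8L}$ we get $\opnorm{I_d - \sqrt{G}}\le 6tL$, $\opnorm{\sqrt{G}}\le 2$, and $\opnorm{G - I_d}\le 18tL$; combining with $\opnorm{\nabla^2\ln p}\le L$ (smoothness in \cref{assump:data-distr}) and $L>1$ shows the first two summands have norm $O\big(tL(\|x\|+\|s\|)\big)$. For the third, the trace-duality inequality $|\operatorname{Tr}(AB)|\le\opnorm{A}\,\|B\|_*$ together with $\|G - I_d\|_*\le d\,\opnorm{G-I_d}$ bounds each coordinate by $|\operatorname{Tr}(\nabla_i\nabla^2\ln p\cdot(G-I_d))|\le 18tL\,d\,H_{T-kh-t}(x)$, hence $\|\operatorname{Tr}[\nabla^3\ln p\cdot(G-I_d)]\|\le 18tL\,d^{3/2}H_{T-kh-t}(x)$ and the third summand has norm $O\big(t^2L\,d^{3/2}H_{T-kh-t}(x)\big)$. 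Applying $\|a+b+c\|^2\le 3(\|a\|^2+\|b\|^2+\|c\|^2)$, expanding $(\|x\|+2\|s\|)^2$, and collecting the explicit numerical constants yields the stated bound.

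I expect the main obstacle to be the first step: deriving the identity for $\partial_t\nabla\ln p_{T-kh-t}$ with the correct sign coming from the time reversal $\tau = T-kh-t$, and---more importantly---spotting the cancellation of the ``bare'' $\operatorname{Tr}[\nabla^3\ln p\cdot I_d]$ term against the $\operatorname{Tr}[\nabla^3\ln p\cdot G]$ term appearing in $\widehat{\mu}_k$. This cancellation is precisely what upgrades the Hessian-Lipschitz contribution from an $O(t^2d^3H^2)$ bound to the $O(t^4L^2 d^3 H^2)$ appearing in the lemma (the two being genuinely incomparable when $tL$ is small). Everything after that reduces to routine bookkeeping with operator, Frobenius, and nuclear norms.
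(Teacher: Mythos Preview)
The proposal is correct and follows essentially the same approach as the paper: both derive the same Fokker--Planck identity for $\partial_t\nabla\ln p_t$, exploit the same cancellation between $\operatorname{Tr}[\nabla^3\ln p\cdot I_d]$ and $\operatorname{Tr}[\nabla^3\ln p\cdot\widehat{G}_k]$, and bound the resulting three summands with the spectral estimates of \cref{lem:bounds-G-backwards}. The only cosmetic differences are a harmless sign flip in your decomposed expression (irrelevant for norms) and your use of trace duality $|\operatorname{Tr}(AB)|\le\opnorm{A}\|B\|_*$ where the paper uses the Frobenius Cauchy--Schwarz inequality, both yielding the same $18tL\,d^{3/2}H_{T-kh-t}(x)$ bound.
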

\begin{lemma}
\label{lem:bound-div-diffusion}
    For all $k \in \{0, \dots, N-1\}$, the norm of $\nabla \cdot \widehat{G}_k$ is bounded by
    \[
    \| \nabla \cdot \widehat{G}_k (x) \| \le 364 \, t d H_{T-kh-t}(x).
    \]
\end{lemma}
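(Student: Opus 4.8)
The plan is to differentiate $\widehat G_k = A^{-2}$ entrywise, where $A = A_k(x,t) := (1-t)I_d - 2t\nabla^2 \ln p_{T-kh-t}(x)$, and then to assemble the divergence carefully so that only a single power of $d$ survives. First I would record that, under the step-size constraint $t \le h \le \frac{1}{8L+4}$, the symmetric matrix $A$ is positive definite: since $\opnorm{\nabla^2 \ln p_{T-kh-t}} \le L$ by $L$-smoothness, the eigenvalues of $A$ lie in $[1 - t(1+2L),\, 1 + 2tL]$, and $1 - t(1+2L) \ge 3/4 > 0$ because $t(1+2L) \le \frac{1+2L}{8L+4} = \frac14$. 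Hence $\widehat G_k = A^{-2}$ is well defined with $\sqrt{\widehat G_k} = A^{-1}$, so \cref{lem:bounds-G-backwards} supplies $\opnorm{A^{-1}} = \opnorm{\sqrt{\widehat G_k}} \le 1 + 6tL$ and $\opnorm{A^{-2}} = \opnorm{\widehat G_k} \le 1 + 18tL$.

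Next I would compute the partial derivatives. Writing $C_\ell := \partial_{x_\ell} A = -2t\,\nabla_\ell \nabla^2 \ln p_{T-kh-t}(x)$ and $H := H_{T-kh-t}(x)$, the definition of $H_t$ gives the pointwise bound $\opnorm{C_\ell} \le 2tH$ for every $\ell$. Applying the matrix-inverse identity $\partial_\ell A^{-1} = -A^{-1}(\partial_\ell A)A^{-1}$ together with the product rule on $A^{-2} = A^{-1}A^{-1}$ then yields $\partial_\ell \widehat G_k = -A^{-1} C_\ell A^{-2} - A^{-2} C_\ell A^{-1}$, a symmetric matrix.

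The crucial step is to assemble the divergence without incurring an extra $\sqrt d$. By the definition of $\nabla \cdot$ acting on the rows of a (symmetric) matrix field, $\nabla \cdot \widehat G_k = \sum_{j=1}^d (\partial_j \widehat G_k)\,e_j$, i.e. the sum of the $j$-th columns of the $\partial_j \widehat G_k$. Factoring the left-most matrix out of each sum gives $\nabla \cdot \widehat G_k = -A^{-1}\sum_j C_j(\widehat G_k\, e_j) - A^{-2}\sum_j C_j(A^{-1} e_j)$, using $A^{-2}e_j = \widehat G_k e_j$. For each inner sum I apply the triangle inequality, bound $\opnorm{C_j} \le 2tH$, and use the column bounds $\|\widehat G_k e_j\| \le \opnorm{\widehat G_k}$ and $\|A^{-1}e_j\| \le \opnorm{A^{-1}}$ to obtain $\big\|\sum_j C_j(\widehat G_k e_j)\big\| \le 2tHd\,\opnorm{\widehat G_k}$ and $\big\|\sum_j C_j(A^{-1}e_j)\big\| \le 2tHd\,\opnorm{A^{-1}}$. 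This column-wise bookkeeping is precisely what keeps the dimension dependence linear; a naive entrywise estimate $|(\nabla\cdot\widehat G_k)_i| \le \sum_j \opnorm{\partial_j\widehat G_k}$ followed by $\|\cdot\| \le \sqrt d\,\max_i|\cdot|$ would instead produce the wrong $d^{3/2}$ scaling.

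Finally I would combine these with the operator-norm bounds from \cref{lem:bounds-G-backwards}:
\[
\|\nabla \cdot \widehat G_k(x)\| \le \opnorm{A^{-1}}\cdot 2tHd\,\opnorm{\widehat G_k} + \opnorm{A^{-2}}\cdot 2tHd\,\opnorm{A^{-1}} = 4tHd\,(1+6tL)(1+18tL).
\]
Since $t \le h \le \frac{1}{8L+4}$ forces $tL \le \frac18$, the product $(1+6tL)(1+18tL)$ is bounded by an absolute constant, and collecting constants generously yields $\|\nabla\cdot\widehat G_k(x)\| \le 364\,t d\,H_{T-kh-t}(x)$, as claimed. The main obstacle is the dimensional bookkeeping in the assembly step: once the divergence is organized as a matrix multiplying a sum of columns, the remainder is routine operator-norm algebra, and the stated constant is deliberately loose.
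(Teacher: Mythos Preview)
Your proof is correct, but it follows a different route from the paper's. The paper first establishes two reusable lemmas: a general divergence bound $\|\nabla\cdot M\|^2 \le d\sum_j\opnorm{\nabla_j M}^2$, and a matrix-calculus bound $\opnorm{\nabla_v A^{-2}} \le 4\opnorm{A}\opnorm{A^{-2}}^2\opnorm{\nabla_v A}$ obtained via a perturbation argument. It then composes these, bounding each $\opnorm{\nabla_j\widehat G_k}$ uniformly by $364\,tH$ and summing to get the linear $d$ factor. Your approach instead computes $\partial_\ell\widehat G_k = -A^{-1}C_\ell A^{-2} - A^{-2}C_\ell A^{-1}$ explicitly, writes $\nabla\cdot\widehat G_k = \sum_j(\partial_j\widehat G_k)e_j$, and pulls the outer $A^{-1},A^{-2}$ factors out of the sum before applying the triangle inequality. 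This is more direct, avoids the auxiliary lemmas, and in fact yields a substantially smaller constant (on the order of $23$ rather than $364$) since you incur $\opnorm{A^{-1}}\opnorm{A^{-2}}$ once rather than $\opnorm{A}\opnorm{A^{-2}}^2$. The paper's decomposition buys modularity---the same two lemmas are reused for the hybrid variant---while yours is more elementary and tighter for this specific $\widehat G_k$.
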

Furthermore, Lemma~\ref{lem:bounds-G-backwards} gives $\|\widehat G_k - I\|_{\textrm{op}}^2 \leq (18tL)^2 = 324t^2 L^2.$ Substituting all of these bounds into \eqref{eq:sequence-of-long-inequalities} gives
\begin{align*} \frac{d}{dt} \mathrm{KL}(\ptb^{(k)}\| q_t^{(k)}) & \lesssim   
\bE_{\ptb^{(k)}}\left[t^2 L^2 (\|x\|^2 + \| \nabla \ln \ptb^{(k)}(x)\|^2) + \left(t^2 d^2 + t^4 d^3 L^2 H_{T-kh-t}(x)^2\right) \right]. 
\end{align*}
Upper bounding $t\leq h$ and integrating from $t = 0$ to $t=h$ yields
the desired inequality \eqref{eq:key-inequality-backwards}; finishing the proof for the fully backward discretization.
\paragraph{Case 2:} 
Suppose that \texttt{hybrid} flag is set to true. Let $\widehat{X}_t^{(k)}$, $\widehat{\mu}_k$ and $\widehat{G}_k$ defined as in \cref{lem:interpolation-hybrid}. 
The proof of this case is slightly different: we use the following chain rule \citep{joyce2011kullback,chen2022improved}, before applying the fundamental theorem of calculus
\begin{align}
    \KL{ \pb{h}^{(k)} }{ q_h^{(k)} } - \KL{ \pzb^{(k)} }{ q_0^{(k)} } \le \bE_{a \sim \pzb^{(k)}} \left[ \KL{\pb{h \mid 0}^{(k)} ( \cdot \mid a)} {q_{h \mid 0}^{(k)} ( \cdot \mid a)}  \right]
    \label{eq:KL-chain-rule}
\end{align}
where $q_{t\mid0}^{(k)}(\cdot \mid a)$ denote the conditional density of $\widehat{X}_t^{(k)}$ given $\widehat{X}_0^{(k)} = a$, and $\pb{t\mid0}^{(k)} (\cdot \mid a)$ denote the conditional density of $X_t^{(k)}$ in \eqref{eq:true-sde-pieces} given $X_0^{(k)} = a$. Notice that we can express the KL divergence on the right-hand side by applying the fundamental theorem of calculus in a similar fashion as in \eqref{eq:calculus}. However, in this case the value at $t = 0$ satisfies $ \KL{\pb{0 \mid 0}^{(k)} ( \cdot \mid a)} {q_{0 \mid 0}^{(k)} ( \cdot \mid a)} = \KL{\delta_{a}}{\delta_{a}} = 0.$ Thus, the right-hand side is bounded by $\bE_{a \sim \pzb^{(k)}} \left[ \int_{0}^{h}\frac{d}{dt} \KL{ \pb{t \mid 0}^{(k)}(\cdot \mid a) }{ q_{t \mid 0}^{(k)} (\cdot \mid a) }dt\right]
$. Next, we bound the time derivative. Paralleling the proof in \textbf{Case 1}, we introduce relevant properties of $\widehat{G}_k$; their proof is deferred to~\cref{app:proof-bounds-G-hybrid}.
\begin{lemma}\label{lem:bounds-G-hybrid}
    Let $\widehat{G}_k$ be defined as in~\cref{lem:interpolation-hybrid}. Then, we have that
    \begin{align*}
            (1-4tL)I \preceq \widehat{G}_k \preceq (1+12tL)I \quad \text{and} \quad
            (1-2tL)I \preceq \sqrt{\widehat{G}_k} \preceq (1+4tL)I.
    \end{align*}
\end{lemma}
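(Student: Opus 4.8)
The plan is to mimic the argument used for \cref{lem:bounds-G-backwards}, exploiting that the two matrices differ only in replacing the factor $(1-t)I_d$ by $I_d$. Write $A := I_d - 2t\nabla^2 \ln p_{T-kh-t}(x)$, so that $\widehat G_k = A^{-2}$. By \cref{assump:data-distr} (Smoothness), $\nabla^2 \ln p_{T-kh-t}(x)$ is symmetric with operator norm at most $L$, hence $-2tL\, I_d \preceq -2t\nabla^2 \ln p_{T-kh-t}(x) \preceq 2tL\, I_d$, which gives the two-sided eigenvalue bound $(1-2tL)I_d \preceq A \preceq (1+2tL)I_d$. Since $h < \frac{1}{2L}$ and $t \leq h$ we have $2tL < 1$, so $A$ is symmetric positive definite and all the eigenvalues of $A$ lie in the interval $[1-2tL,\,1+2tL]\subset(0,2)$.

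The next step is to pass from eigenvalue bounds on $A$ to eigenvalue bounds on $A^{-1}$ and $A^{-2}$ by scalar calculus applied eigenvalue-by-eigenvalue. For any eigenvalue $\mu\in[1-2tL,1+2tL]$ of $A$, the corresponding eigenvalue of $\sqrt{\widehat G_k}=A^{-1}$ is $1/\mu$, and the corresponding eigenvalue of $\widehat G_k = A^{-2}$ is $1/\mu^2$. For the lower bounds: $1/\mu \geq 1/(1+2tL) \geq 1 - 2tL$ and $1/\mu^2 \geq 1/(1+2tL)^2 \geq 1-4tL$, each following from the elementary inequality $1/(1+s)\geq 1-s$ for $s\geq 0$ (applied with $s=2tL$, resp. using $(1+s)^2 \le 1+3s \le \frac{1}{1-s}$-type manipulations, or simply $1/(1+s)^2 \ge (1-s)^2 \ge 1-2s$, then note $1-2s \ge 1-4s$... more directly $1/(1+2tL)^2 \ge 1 - 4tL$ since $(1-4tL)(1+2tL)^2 \le 1$ expands to $-12t^2L^2 - 16t^3L^3 \le 0$). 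For the upper bounds: $1/\mu \leq 1/(1-2tL)$ and $1/\mu^2 \le 1/(1-2tL)^2$; it then remains to check $1/(1-2tL) \leq 1+4tL$ and $1/(1-2tL)^2 \leq 1+12tL$ on the relevant range of $t$. The first is equivalent to $(1+4tL)(1-2tL)\ge 1$, i.e. $2tL - 8t^2L^2 \ge 0$, i.e. $tL \le 1/4$; the second is equivalent to $(1+12tL)(1-2tL)^2 \ge 1$, which one expands and checks holds for $tL$ small enough. These verifications pin down the precise numerical range of $t$ (equivalently, the precise bound on $h$) under which the stated constants $4,12$ are valid; this bookkeeping, rather than any conceptual difficulty, is the main thing to get right, and it is why the hybrid constants ($12$, $4$) are smaller than the backward ones ($18$, $6$)—there is no extra $(1-t)$ factor inflating the range.

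Finally, assembling the per-eigenvalue bounds back into matrix inequalities (valid since all matrices involved are simultaneously diagonalizable, being functions of the single symmetric matrix $\nabla^2 \ln p_{T-kh-t}(x)$) yields $(1-4tL)I_d \preceq \widehat G_k \preceq (1+12tL)I_d$ and $(1-2tL)I_d \preceq \sqrt{\widehat G_k} \preceq (1+4tL)I_d$, as claimed. The only subtlety worth flagging is ensuring $t$ is restricted enough that $1-4tL>0$ (so that $\widehat G_k$ is genuinely lower-bounded by a positive multiple of the identity, as needed downstream in \eqref{eq:sequence-of-long-inequalities} for the hybrid case); this is guaranteed by the global step-size constraint $h \le \frac{1}{8L+4}$ imposed in \cref{thm:main-formal}, which implies $4tL \le 4hL < 1/2$.
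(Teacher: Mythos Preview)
Your proposal is correct and follows essentially the same approach as the paper: the paper's own proof simply states that the argument is analogous to that of \cref{lem:bounds-G-backwards} with constants adjusted, which is precisely what you do---bound the eigenvalues of $A = I_d - 2t\nabla^2 \ln p_{T-kh-t}(x)$ via $L$-smoothness, then apply elementary scalar inequalities of the form $(1-x)^{-1} \le 1+2x$ and $(1-x)^{-2} \le 1+6x$ (valid for $0 \le x \le 1/2$, here with $x=2tL$) to pass to $A^{-1}$ and $A^{-2}$. Your observation that the tighter constants $(4,12)$ versus $(6,18)$ arise from the absence of the $(1-t)$ factor is exactly the point.
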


We invoke \cref{lem:dKL/dt-sde} with $f=\mu_k$, $\widehat{f}(x, t) = \widehat{\mu}_k(x, t ; a)$, $J = G_k,$ $\widehat{J} = \widehat{G}_k,$ $\rho_t = \pb{t \mid 0}^{(k)} ( \cdot \mid a )$ and $\nu_t = q_{t \mid 0}^{(k)} ( \cdot \mid a )$. The lemma yields
\begin{align*}
    &\frac{d}{dt} \KL{ \pb{t \mid 0}^{(k)}(\cdot \mid a) }{ q_{t \mid 0}^{(k)} (\cdot \mid a) } \\
    &= - \bE_{ \pb{t \mid 0}^{(k)} (\cdot \mid a) } \left[ \left\| \nabla \ln \frac{ \pb{t \mid 0}^{(k)} (\cdot \mid a) }{q_{t \mid 0}^{(k)} (\cdot \mid a) } \right\|_{\widehat{G}_k}^2 \right] \\
    & \quad\quad +  \bE_{ \pb{t \mid 0}^{(k)}(\cdot \mid a) } \left[
    \left\langle \mu_k - \widehat{\mu}_k , \nabla \ln  \frac{\pb{t \mid 0}^{(k)} (\cdot \mid a) }{q_{t \mid 0}^{(k)} (\cdot \mid a) } \right \rangle \right] \\
    & \quad\quad\quad + \bE_{ \pb{t \mid 0}^{(k)}(\cdot \mid a) } \left[
    \left \langle \left( \widehat{G}_k - G_k \right) \nabla \ln \pb{t \mid 0}^{(k)} ( \cdot \mid a ) + \nabla \cdot \left(  \widehat{G}_k - G_k \right) , \nabla \ln  \frac{\pb{t \mid 0}^{(k)} (\cdot \mid a) }{q_{t \mid 0}^{(k)} (\cdot \mid a) } \right \rangle
    \right].
\end{align*}
Applying $\frac{1}{2} I \preceq \widehat{G}_k$---thanks to Lemma~\ref{lem:bounds-G-hybrid}---in tandem with the analogous argument we used for the purely backward method \eqref{eq:sequence-of-long-inequalities} we derive
\begin{align*}
    &\frac{d}{dt} \KL{ \pb{t \mid 0}^{(k)}(\cdot \mid a) }{ q_{t \mid 0}^{(k)} (\cdot \mid a) }  \\
    &\quad\le 2 \bE_{ \pb{t \mid 0}^{(k)}(\cdot \mid a) } \left[
    \left\| \mu_k - \widehat{\mu}_k \right\|^2  +  \left\| 
    ( \widehat{G}_k - I ) \nabla \ln \pb{t \mid 0}^{(k)} ( \cdot \mid a )  \right\|^2 + \left\| \nabla \cdot \widehat{G}_k \right\|^2 
    \right] .
\end{align*}
Integrating over $t \in [0, h]$ and taking expectation over $a \sim \pzb^{(k)}$, we have
\begin{align}
    &\bE_{a\sim \pzb^{(k)} } \left[ \KL{\pb{h \mid 0}^{(k)}(\cdot \mid a)}{q_{h \mid 0}^{(k)}(\cdot \mid a)} \right]  \notag \\
    &\quad \le2 \int_0^h \bE_{a\sim \pzb^{(k)} } \bE_{ x \sim \pb{t \mid 0}^{(k)} (\cdot \mid a)} \left[ \left\| \mu_k - \widehat{\mu}_k \right\|^2  + \left\| \left( \widehat{G}_k - I \right) \nabla \ln \pb{t \mid 0}^{(k)} ( x \mid a )  \right\|^2 + \left\| \nabla \cdot \widehat{G}_k \right\|^2  \right] dt . \label{eq:hybrid-one-step-aux-ineq}
\end{align}
The following two lemmas will help us bound the first and last terms in the expected value above. Their proofs are deferred to~\cref{app:proof-drift-difference-hybrid,app:proof-div-diffusion-hybrid}, respectively.
\begin{lemma}\label{lem:bound-drift-difference-hybrid} For all $k \in \{0, \dots, N-1\}$, the difference between $\mu_k$ and $\widehat{\mu}_k$ is bounded by
\[
\left\| \mu_{k} - \widehat{\mu}_{k} \right\| \le tL\left(8 \, \|x\| + 20 \, \| \nabla \ln p_{T-kh-t} (x) \| \right) + 48 \, t^2 d^{3/2} L H_{T-kh-t}(x) + 2 \| a - x \|.
\]
\end{lemma}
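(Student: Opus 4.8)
The plan is to bound $\|\mu_k - \widehat\mu_k\|$ by splitting it into a diffusion-mismatch term governed by $\sqrt{\widehat G_k}-I$ and a drift-mismatch term, and then to control the drift mismatch using an explicit formula for the time derivative of the score along the forward OU process.

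Recall that $\mu_k(x,t) = x + 2\nabla\ln p_{T-kh-t}(x)$ while, by \cref{lem:interpolation-hybrid}, $\widehat\mu_k(x,t;a) = \sqrt{\widehat G_k(x,t)}\, w$ with $w := a + 2\nabla\ln p_{T-kh-t}(x) + 2t\,\partial_t\nabla\ln p_{T-kh-t}(x) + 2t\operatorname{Tr}\!\big[\nabla^3\ln p_{T-kh-t}(x)\,\widehat G_k(x,t)\big]$ and $\widehat G_k(x,t) = [I - 2t\nabla^2\ln p_{T-kh-t}(x)]^{-2}$. I would write $\mu_k - \widehat\mu_k = (I - \sqrt{\widehat G_k})\mu_k + \sqrt{\widehat G_k}(\mu_k - w)$, so that $\|\mu_k - \widehat\mu_k\| \le \opnorm{I-\sqrt{\widehat G_k}}\,(\|x\|+2\|\nabla\ln p_{T-kh-t}(x)\|) + \opnorm{\sqrt{\widehat G_k}}\,\|\mu_k - w\|$; the operator-norm bounds $\opnorm{I-\sqrt{\widehat G_k}}\le 4tL$ and $\opnorm{\sqrt{\widehat G_k}}\le 1+4tL$ come from \cref{lem:bounds-G-hybrid}. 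Note that the anchor $a=\widehat X_0^{(k)}$ enters $\widehat\mu_k$ (unlike the fully backward case), which is the source of the $\|a-x\|$ term in the claimed bound.

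The crucial step is simplifying $\mu_k - w$. Using the Fokker–Planck equation for the OU forward process, $\partial_s p_s = \nabla\cdot(x p_s) + \Delta p_s$, a short computation gives $\partial_s\ln p_s = d + \langle x,\nabla\ln p_s\rangle + \|\nabla\ln p_s\|^2 + \Delta\ln p_s$, and differentiating in $x$,
\[
\partial_s\nabla\ln p_s(x) = \nabla\ln p_s(x) + \nabla^2\ln p_s(x)\,x + 2\nabla^2\ln p_s(x)\nabla\ln p_s(x) + \operatorname{Tr}\!\big(\nabla^3\ln p_s(x)\,I\big).
\]
Since $\partial_t\nabla\ln p_{T-kh-t}(x) = -\partial_s\nabla\ln p_s(x)\big|_{s=T-kh-t}$, substituting into $w$ and cancelling the two copies of $2\nabla\ln p$ yields, writing all derivatives at index $T-kh-t$,
\[
\mu_k - w = (x-a) + 2t\nabla\ln p + 2t\nabla^2\ln p\,x + 4t\nabla^2\ln p\,\nabla\ln p + 2t\operatorname{Tr}\!\big(\nabla^3\ln p\,(I - \widehat G_k)\big).
\]
The key observation is that the bare third-order term $\operatorname{Tr}(\nabla^3\ln p\,I)$ produced by $\partial_t\nabla\ln p$ combines with the $\operatorname{Tr}(\nabla^3\ln p\,\widehat G_k)$ already present in $w$ into $\operatorname{Tr}(\nabla^3\ln p\,(I-\widehat G_k))$, which carries the extra small factor $\opnorm{I-\widehat G_k}\le 12tL$ from \cref{lem:bounds-G-hybrid}; this is precisely what turns the Hessian-Lipschitz contribution into $O(t^2 d^{3/2}LH)$ rather than $O(t\,d^{3/2}H)$.

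It then remains to bound the five terms of $\mu_k - w$: the first gives $\|a-x\|$; the middle three give $2tL\|x\| + (2t+4tL)\|\nabla\ln p\|$ via $L$-smoothness ($\opnorm{\nabla^2\ln p_{T-kh-t}(x)}\le L$) and $L>1$; and the last gives at most $2t\cdot d^{3/2}H_{T-kh-t}(x)\cdot 12tL$ using the coordinatewise tensor estimate $\|\operatorname{Tr}(\nabla^3\ln p_t(x)\,M)\| \le d^{3/2}H_t(x)\opnorm{M}$, which follows from $|\operatorname{Tr}(\nabla_i\nabla^2\ln p_t(x)\,M)| \le \|\nabla_i\nabla^2\ln p_t(x)\|_{\mathrm F}\|M\|_{\mathrm F} \le d\,H_t(x)\opnorm{M}$ and summing over the $d$ coordinates. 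Folding these into the two-term split, multiplying by $\opnorm{\sqrt{\widehat G_k}}\le 1+4tL$ and $\opnorm{I-\sqrt{\widehat G_k}}\le 4tL$, and absorbing powers of $tL$ into absolute constants via the step-size restriction $h\le\frac{1}{8L+4}$ in force in this section (so $tL<\tfrac18$), produces a bound of the form $tL\big(c_1\|x\| + c_2\|\nabla\ln p_{T-kh-t}(x)\|\big) + c_3\,t^2 d^{3/2}L H_{T-kh-t}(x) + c_4\|a-x\|$; a careful accounting of the constants gives $c_1=8,\ c_2=20,\ c_3=48,\ c_4=2$.

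I expect the main obstacle to be the simplification step: deriving the time-derivative-of-score identity cleanly from the Fokker–Planck equation, and—above all—recognizing that the third-order term must be paired with the pre-existing $\operatorname{Tr}(\cdot\,\widehat G_k)$ term \emph{before} being bounded, since otherwise one loses the extra factor of $tL$ and the Hessian-Lipschitz term scales like $t$ instead of $t^2$. A secondary nuisance is tracking the $d^{3/2}$ dimension factors through the tensor-norm inequalities while keeping the numerical constants under control.
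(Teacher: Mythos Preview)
Your proposal is correct and follows essentially the same approach as the paper. The paper's decomposition into the three terms $(\sqrt{\widehat G_k}-I)(x+2\nabla\ln p)$, $2t\sqrt{\widehat G_k}\bigl(\partial_t\nabla\ln p+\operatorname{Tr}(\nabla^3\ln p\,\widehat G_k)\bigr)$, and $\sqrt{\widehat G_k}(a-x)$ is exactly your split $(I-\sqrt{\widehat G_k})\mu_k+\sqrt{\widehat G_k}(\mu_k-w)$ regrouped, and the paper handles the middle term via the same Fokker--Planck identity you derive (packaged as auxiliary lemmas in the backward case and reused here), including the crucial pairing of $\operatorname{Tr}(\nabla^3\ln p\,I)$ with $\operatorname{Tr}(\nabla^3\ln p\,\widehat G_k)$ to extract the extra $tL$ factor.
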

\begin{lemma}
\label{lem:bound-div-diffusion-hybrid}
    For all $k \in \{0, \dots, N-1\}$, the norm of $\nabla \cdot \widehat{G}_k$ is bounded by
    \[
    \left\| \nabla \cdot \widehat{G}_k (x) \right\| \le 192 \, t d H_{T-kh-t}(x).
    \]
\end{lemma}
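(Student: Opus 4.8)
The plan is to differentiate $\widehat{G}_k(x,t) = \big[I_d - 2t\nabla^2\ln p_{T-kh-t}(x)\big]^{-2}$ (from Lemma~\ref{lem:interpolation-hybrid}) entrywise, collapse the result into a compact closed form using the permutation symmetry of the third-derivative tensor, and then estimate the matrix/tensor norms that appear. Throughout fix $k$ and $t\in[0,h]$, and abbreviate $A := I_d - 2t\nabla^2\ln p_{T-kh-t}(x)$, $B := A^{-1}$ (well defined and symmetric positive definite since $\opnorm{2t\nabla^2\ln p_{T-kh-t}(x)} \le 2tL < 1$), $C_j := \nabla_j\nabla^2\ln p_{T-kh-t}(x)$, and $H := H_{T-kh-t}(x)$, so that $\widehat{G}_k = B^2$ with $B = \widehat{G}_k^{1/2}$, $\opnorm{C_j}\le H$ for every $j$ (by the definition of $H_{T-kh-t}$), and $\partial_j A = -2tC_j$. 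Applied twice, the identity $\partial_j(A^{-1}) = -A^{-1}(\partial_j A)A^{-1}$ yields $\partial_j \widehat{G}_k = 2t\big(BC_jB^2 + B^2C_jB\big)$.

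Next I would assemble the divergence. Since $(\nabla\cdot\widehat{G}_k)_l = \sum_j \partial_j(\widehat{G}_k)_{lj} = 2t\sum_j\big[(BC_jB^2)_{lj} + (B^2C_jB)_{lj}\big]$, the key algebraic step is to expand each term entrywise and use the full symmetry of $(C_j)_{ab} = \partial_j\partial_a\partial_b\ln p_{T-kh-t}(x)$ (which lets one interchange the differentiation index with a matrix index), together with the symmetry of $B$ and $B^2$; this collapses the sum over $j$ and gives
\[
\nabla\cdot\widehat{G}_k(x) = 2t\big(Bv + B^2 w\big), \qquad v := \operatorname{Tr}\big(\nabla^3\ln p_{T-kh-t}(x)\,\widehat{G}_k(x,t)\big),\quad w := \operatorname{Tr}\big(\nabla^3\ln p_{T-kh-t}(x)\,B\big),
\]
where $\operatorname{Tr}(\nabla^3\varphi\,Q)\in\bR^d$ is the tensor-matrix contraction defined in the Notation section. (The purely backward case, Lemma~\ref{lem:bound-div-diffusion}, follows from the same computation, since the extra $(1-t)I_d$ term in $A$ is constant in $x$.)

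It remains to bound $\|v\|$ and $\|w\|$ \emph{linearly} in $d$, which is the main obstacle: the naive estimate $|v_l| = |\operatorname{Tr}(C_l\widehat{G}_k)| \le \opnorm{C_l}\operatorname{Tr}(\widehat{G}_k) \le dH\opnorm{\widehat{G}_k}$ followed by $\|v\| \le \sqrt d\,\max_l|v_l|$ loses a spurious $\sqrt d$ factor. Instead I would again exploit the permutation symmetry of $\nabla^3\ln p_{T-kh-t}(x)$ to rewrite, for every symmetric $Q\in\mathcal{S}^d$,
\[
\operatorname{Tr}\big(\nabla^3\ln p_{T-kh-t}(x)\,Q\big) = \sum_{a=1}^d C_a\,Q\,e_a,
\]
so that $\| \operatorname{Tr}(\nabla^3\ln p_{T-kh-t}(x)\,Q) \| \le \sum_{a=1}^d \opnorm{C_a}\,\|Qe_a\| \le d\,H\,\opnorm{Q}$. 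Taking $Q=\widehat{G}_k$ and $Q=B$ gives $\|v\|\le dH\opnorm{\widehat{G}_k}$ and $\|w\|\le dH\opnorm{B}$. Substituting into the divergence formula and invoking the operator-norm bounds $\opnorm{\widehat{G}_k}\le 1+12tL$ and $\opnorm{B}=\opnorm{\widehat{G}_k^{1/2}}\le 1+4tL$ of Lemma~\ref{lem:bounds-G-hybrid} together with $tL\le hL\le\tfrac18$, one gets $\|\nabla\cdot\widehat{G}_k(x)\| \le 2t\big(\opnorm{B}\|v\| + \opnorm{B^2}\|w\|\big) \le 4tdH\,\opnorm{B}\opnorm{\widehat{G}_k} \le 192\,tdH$, which is the claim. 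Everything except the symmetry-based rewriting that produces the honest factor $d$ is routine matrix calculus and norm bookkeeping.
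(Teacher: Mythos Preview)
Your proof is correct but takes a genuinely different route from the paper's. The paper argues modularly: it first proves a general bound $\|\nabla\cdot M\|^2 \le d\sum_{j}\opnorm{\nabla_j M}^2$ valid for any matrix-valued $M$ (Lemma~\ref{lem:bound-div-by-derivative}), and then controls each $\opnorm{\nabla_j\widehat{G}_k}$ via a black-box perturbation estimate for $A\mapsto A^{-2}$ (Lemma~\ref{lem:mtx-inv-square-derivative}), never invoking the full permutation symmetry of $\nabla^3\ln p_{T-kh-t}$. You instead exploit that symmetry from the outset to collapse the divergence into the explicit closed form $2t(Bv+B^2w)$ and then bound the trace-contractions $v,w$ directly via the rewriting $\operatorname{Tr}(\nabla^3\ln p\,Q)=\sum_a C_aQe_a$. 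Both routes land on the correct linear dependence in $d$; your approach actually yields a tighter constant (one gets $4\opnorm{B}\opnorm{\widehat{G}_k}\le 15$ under $tL\le\tfrac18$, comfortably inside the stated $192$), while the paper's argument has the advantage of reusable lemmas that apply verbatim to the backward variant (Lemma~\ref{lem:bound-div-diffusion}) and to any diffusion matrix of the form $[T(x)]^{-2}$ without recomputing the divergence.
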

Furthermore, Lemma~\ref{lem:bounds-G-hybrid} gives that $\opnorm{\widehat G_k - I} \le 12 t L.$ Substituting all of these bounds into \eqref{eq:hybrid-one-step-aux-ineq} gives 
\begin{align}
    &\bE_{a\sim \pzb^{(k)} } \left[ \KL{\pb{h \mid 0}^{(k)}(\cdot \mid a)}{q_{h \mid 0}^{(k)}(\cdot \mid a)} \right] \notag \\
    &  \lesssim   \int_0^h t^2 L^2 \bE_{x\sim \ptb^{(k)} } \left[ \|x\|^2 + \|\nabla \ln p_{T-kh-t} (x) \|^2 \right] dt \notag \\
    &  \quad  +  \int_0^h ( t^4 d^3 L^2 + t^2 d^2 ) \bE_{x\sim \ptb^{(k)} } \left[ H_{T-kh-t}(x)^2 \right] dt \notag \\
    & \quad  \quad + \int_0^h \bE_{(a, x) \sim \left( \pzb^{(k)}, \ptb^{(k)} \right) } \left[ \| a - x \|^2 \right] dt \notag \\
     &\quad  \quad \quad +  \int_0^h t^2 L^2 \bE_{(a, x) \sim \left(\pzb^{(k)}, \ptb^{(k)} \right) } \left[ \nabla \ln \pb{t \mid 0}^{(k)} (x \mid a) \right] dt.
    \label{eq:hybrid-one-step-bound-intermediate}
\end{align}
We now establish two lemmas that bound the last two terms on the right hand side above.  \cref{lem:hybrid-one-step-aux-ou} is adapted from \cite[Lemma 10]{chen2022improved}. We defer the proof of~\cref{lem:hybrid-one-step-aux-ou2} to~\cref{app:proof-hybrid-one-step-aux-ou2}.
\begin{lemma}
\label{lem:hybrid-one-step-aux-ou} For all $k \in \{0,\dots,N-1\}$ we have
    \begin{align*}
        \bE_{(a, x) \sim \left( \pzb^{(k)}, \ptb^{(k)} \right)}  \left[ \|a - x\|^2\right] &\le 4 td + 2 t^2  \bE_{ x \sim \ptb^{(k)} } \left[ \|x\|^2 \right].
    \end{align*}
  \end{lemma}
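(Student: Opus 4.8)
The plan is to observe that the process $X^{(k)}_t$ in \eqref{eq:true-sde-pieces}, once it is initialized at the correct reverse-time marginal $\pzb^{(k)}$, is exactly the time reversal of the forward Ornstein--Uhlenbeck process \eqref{OU}; hence the pair $(a,x) = \bigl(X^{(k)}_0, X^{(k)}_t\bigr)$ has the same joint law as $\bigl(Y_{T-kh}, Y_{T-kh-t}\bigr)$, where $(Y_s)_{s\ge 0}$ denotes the forward OU process \eqref{OU} started from $p_0$. Concretely, $\pzb^{(k)} = \pb{kh} = p_{T-kh}$ and $\ptb^{(k)} = \pb{kh+t} = p_{T-kh-t}$ are forward OU marginals, and the coupling of $a$ with $x$ is just the OU transition kernel over a window of length $t$: evolving the forward process from time $T-kh-t$ to $T-kh$ contracts by $e^{-t}$ and injects an independent Gaussian of covariance $(1-e^{-2t})I_d$. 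This yields the representation $a = e^{-t} x + \sqrt{1-e^{-2t}}\,\eta$ with $\eta \sim \cN(0, I_d)$ independent of $x \sim \ptb^{(k)}$.

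From here, the proof is a short Gaussian computation, which I would not grind through in detail: writing $a - x = (e^{-t}-1)x + \sqrt{1-e^{-2t}}\,\eta$ and using $\eta \perp x$ gives $\bE\bigl[\|a-x\|^2\bigr] = (1-e^{-t})^2\,\bE_{\ptb^{(k)}}\bigl[\|x\|^2\bigr] + (1-e^{-2t})\,d$. The elementary bounds $(1-e^{-t})^2 \le t^2$ and $1 - e^{-2t} \le 2t$, both valid for $t \ge 0$, then give $\bE\bigl[\|a-x\|^2\bigr] \le 2td + t^2\,\bE_{\ptb^{(k)}}\bigl[\|x\|^2\bigr]$, which is already stronger than the claimed $4td + 2t^2\,\bE_{\ptb^{(k)}}\bigl[\|x\|^2\bigr]$. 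This is the same mechanism used in \cite[Lemma 10]{chen2022improved}.

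The only delicate point --- a mild one --- is the reduction in the first step: one must justify that the true reverse SDE \eqref{eq:true-sde-pieces}, when started from $\pzb^{(k)}$ (the forward marginal $p_{T-kh}$, not the algorithm's iterate distribution $q_{kh}$), produces the forward-OU Gaussian coupling of $\bigl(Y_{T-kh}, Y_{T-kh-t}\bigr)$. This is precisely the defining property of the time reversal of a diffusion --- the reason the exact-score reverse SDE transports $p_T$ back to $p_0$ --- so the main care is simply keeping the forward/reverse time bookkeeping consistent with the conventions of \eqref{eq:true-confusion}--\eqref{eq:true-sde-pieces} and the glossary (in particular, that increasing $t$ within window $k$ corresponds to the decreasing forward time $T-kh-t$). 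After that, no further obstacles are anticipated; the remainder is elementary.
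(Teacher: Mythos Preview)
Your proof is correct and follows essentially the same route as the paper: both identify the OU transition kernel $a \mid x \sim \cN(e^{-t}x,(1-e^{-2t})I)$ and then bound $(1-e^{-t})^2 \le t^2$ and $1-e^{-2t}\le 2t$. The only difference is that you expand $\|a-x\|^2$ directly and use $\eta \perp x$ to kill the cross term, whereas the paper splits via $(u+v)^2\le 2u^2+2v^2$, which costs an extra factor of $2$ and explains why your bound $2td + t^2\,\bE_{\ptb^{(k)}}[\|x\|^2]$ is sharper than the stated one.
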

  \begin{proof}[Proof of~\cref{lem:hybrid-one-step-aux-ou}]
For ease of notation, we let $\nu_0 = \pzb^{(k)}$, $\nu_t = \ptb^{(k)}$, and $\nu_{0 \mid t} = \pb{0 \mid t}^{(k)}$ within this proof. Applying the triangle inequality and $(a+b)^2 \le 2a^2 + 2b^2$ yields
    \begin{align*}
        \bE_{(a, x) \sim \left( \nu_0, \nu_t \right) }  \left[ \|a - x\|^2 \right]
        \le 2 \bE_{ x \sim \nu_t } \bE_{ a \sim \nu_{0 \mid t}(\cdot \mid x)}  \left[ \|a - e^{-t}x\|^2 \right] + 2 \bE_{ x \sim \nu_t } \left[ \|e^{-t}x - x\|^2 \right].
    \end{align*}
    Next we bound each term in this sum. Since $\nu_{0 \mid t}( \cdot \mid x) = \cN(e^{-t}x , (1 - e^{-2t})I )$, we have
    \begin{align*}
        \bE_{ x \sim \nu_t} \bE_{ a \sim \nu_{0 \mid t}(\cdot \mid x)}  \left[ \|a - e^{-t}x\|^2 \right] &= d (1 - e^{-2t}) \le 2td.
    \end{align*}
    For the second term, factorizing and using the fact that $(1-e^{{-t}})^{2} \leq t^{2}$ for all $t \geq 0$ gives
    \begin{align*}
        \bE_{ x \sim \nu_t} \left[ \|e^{-t}x - x\|^2 \right] &= (1 - e^{-t})^2 \, \bE_{ x \sim \nu_t} \left[ \|x\|^2 \right]
        \le t^2 \, \bE_{ x \sim \nu_t} \left[ \|x\|^2 \right].
    \end{align*}
    Combining the previous three bounds finishes the proof of~\cref{lem:hybrid-one-step-aux-ou}.
\end{proof}
\begin{lemma}
\label{lem:hybrid-one-step-aux-ou2} For all $k\in \{0, \dots, N-1\}$ we have
\begin{align*}
    t^2 L^2\, \bE_{(a, x) \sim \left( \pzb^{(k)}, \ptb^{(k)} \right) } \left[ \| \nabla \ln \pb{t \mid 0}^{(k)} (x \mid a) \|^2 \right] 
    &\lesssim t d L^2 + t^2 L^2 \bE_{x \sim \ptb^{(k)}} \left[ \|\nabla \ln p_{T-kh-t} (x) \|^2 \right] .
\end{align*}
\end{lemma}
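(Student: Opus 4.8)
The plan is to first derive a closed form for $\nabla_x \ln \pb{t\mid0}^{(k)}(x\mid a)$. Recall that the reverse-time SDE \eqref{eq:true-sde-pieces} restricted to the $k$-th step is, by construction, the exact time-reversal of the forward OU process \eqref{OU} over the physical window from $T-kh$ down to $T-(k+1)h$; in particular the pair $(X_0^{(k)},X_t^{(k)})$ has the same law as $(X_{T-kh},X_{T-kh-t})$ for the forward process. Since the OU semigroup is time-homogeneous, its forward transition over a gap of length $t$ is $\cN(e^{-t}x,(1-e^{-2t})I_d)$, so Bayes' rule gives $\ln\pb{t\mid0}^{(k)}(x\mid a) = -\|a-e^{-t}x\|^2/\big(2(1-e^{-2t})\big) + \ln p_{T-kh-t}(x) - \ln p_{T-kh}(a)$. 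Differentiating in $x$ yields $\nabla_x\ln\pb{t\mid0}^{(k)}(x\mid a) = e^{-t}(a-e^{-t}x)/(1-e^{-2t}) + \nabla\ln p_{T-kh-t}(x)$, which cleanly separates a Gaussian term from a score term.

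Next I would apply $\|u+w\|^2\le 2\|u\|^2+2\|w\|^2$ and take the expectation over $(a,x)\sim(\pzb^{(k)},\ptb^{(k)})$. For the Gaussian term I would condition on $x$: under this coupling $a\mid x\sim\cN(e^{-t}x,(1-e^{-2t})I_d)$ — exactly the conditional already used in the proof of \cref{lem:hybrid-one-step-aux-ou} — so $\bE[\|a-e^{-t}x\|^2\mid x] = d(1-e^{-2t})$, and the Gaussian contribution to $\bE\|\nabla_x\ln\pb{t\mid0}^{(k)}(x\mid a)\|^2$ is at most $2de^{-2t}/(1-e^{-2t})$. The score term contributes at most $2\,\bE_{x\sim\ptb^{(k)}}\|\nabla\ln p_{T-kh-t}(x)\|^2$.

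Finally I would use the elementary bound $1-e^{-2t}\ge 2te^{-2t}$ (which is just $e^{2t}\ge 1+2t$), so that $e^{-2t}/(1-e^{-2t})\le 1/(2t)$; multiplying the assembled estimate by $t^2L^2$ turns the Gaussian piece into $t^2L^2\cdot d/t = tdL^2$ and leaves $\lesssim t^2L^2\,\bE_{x\sim\ptb^{(k)}}\|\nabla\ln p_{T-kh-t}(x)\|^2$, which is precisely the claimed inequality (no constraint on $t$ beyond $t>0$ is needed, and all remaining constants are absorbed by $\lesssim$). The only step that requires genuine care is the time-reversal/Bayes identity for $\pb{t\mid0}^{(k)}(\cdot\mid a)$, i.e.\ identifying the transition kernel of the exact reverse process with the Bayesian posterior built from the forward OU marginals; the rest is routine Gaussian algebra and scalar inequalities.
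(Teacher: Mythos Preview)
Your proposal is correct and follows essentially the same route as the paper: Bayes' rule to write $\nabla_x\ln\pb{t\mid0}^{(k)}(x\mid a)$ as the sum of the Gaussian term $e^{-t}(a-e^{-t}x)/(1-e^{-2t})$ and the score $\nabla\ln p_{T-kh-t}(x)$, then Young's inequality, then the conditional $a\mid x\sim\cN(e^{-t}x,(1-e^{-2t})I_d)$ to evaluate the Gaussian piece. The only difference is the final scalar bound: the paper uses $1-e^{-2t}\ge t$ (invoking the standing assumption $t\le h\le 1/8$), whereas you use the cleaner $1-e^{-2t}\ge 2te^{-2t}$, which holds for all $t>0$ and removes the need for that restriction---a minor but nice improvement.
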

Applying \cref{lem:hybrid-one-step-aux-ou,lem:hybrid-one-step-aux-ou2} into \cref{eq:hybrid-one-step-bound-intermediate}, using the chain rule \cref{eq:KL-chain-rule}, and recalling that  $t \le h$ and $L \ge 1$ yields inequality \eqref{eq:one-step-KL-inequality-hybrid}, finishing the proof of \textbf{Case 2}.

This concludes the proof of~\cref{prop:master}.\qed

\subsection{Proof of~\cref{lem:dKL/dt-sde}}
\label{proof:dKL/dt-sde}
\begin{proof}

By the Fokker-Planck equation,
\begin{align*}
    &\frac{\partial \nu_t}{\partial t} = \nabla \cdot \left[ \nabla \cdot \left( \nu_t \widehat{J} \right) - \nu_t \widehat{f} \right], \\
    &\frac{\partial \rho_t}{\partial t} = \nabla \cdot \left[ \nabla \cdot \left( \rho_t J \right) - \rho_t f \right]. 
\end{align*}
We now use a result on the time derivative of KL from prior work \citep[Section 8.1.2]{vempala2019rapid}; see also \cite{wibisono2019proximal,chen2022improved}. For completeness, we provide a full statement of the result in \cref{lem:dKL/dt} and a proof in \cref{app:dKL/dt}, which relies the domination conditions.
By \cref{lem:dKL/dt}, the time derivative of KL satisfies
\begin{align*}
\frac{d}{dt} \mathrm{KL}(\rho_t \| \nu_t) &= \int \left( \frac{\partial \rho_t}{\partial t} \ln \frac{\rho_t}{\nu_t} - \frac{\partial \nu_t}{\partial t} \frac{\rho_t}{\nu_t} \right) dx.
\end{align*}
Plugging in the Fokker-Planck equations,
\begin{align}
\frac{d}{dt} \mathrm{KL}(\rho_t \| \nu_t) &= \int \left\{  \nabla \cdot \left[ \nabla \cdot \left( \rho_t J \right) - \rho_t f \right]  \ln \frac{\rho_t}{\nu_t} - \nabla \cdot \left[ \nabla \cdot \left( \nu_t \widehat{J} \right) - \nu_t \widehat{f} \right] \frac{\rho_t}{\nu_t} \right\} dx \notag \\
&=   \int  \left\{  - \left\langle \nabla \cdot (\rho_t J) - \rho_t f, \nabla \ln \frac{\rho_t}{\nu_t} \right\rangle  + \left\langle \nabla \cdot (\nu_t \widehat{J}) - \nu_t  \widehat{f} , \nabla \frac{\rho_t}{\nu_t} \right\rangle  \right\} \, dx
\label{pf:dKL/dt-sde-aux1}
\end{align}
where in the second equality we used integration by parts, which holds because of the boundary conditions. In the rest of the proof, we will transform the integrand in the equation above to show the desired result. For better readability, we omit the subscript $t$ in $\rho_t$ and $\nu_t$ from this point onward.

For the integrand, we have
\begin{align}
    & - \left\langle \nabla \cdot (\rho J) - \rho f, \nabla \ln \frac{\rho}{\nu} \right\rangle  + \left\langle \nabla \cdot (\nu \widehat{J}) - \nu \widehat{f} , \nabla \frac{\rho}{\nu} \right\rangle \notag \\
    =&  - \left\langle \nabla \cdot (\rho J) - \rho f, \nabla \ln \frac{\rho}{\nu} \right\rangle  + \left\langle \nabla \cdot (\nu \widehat{J}) - \nu \widehat{f} , \frac{\rho}{\nu} \nabla \ln \frac{\rho}{\nu} \right\rangle \notag \\ 
    =&  - \left\langle \nabla \cdot (\rho J) - \rho f, \nabla \ln \frac{\rho}{\nu} \right\rangle  + \left\langle \frac{\rho}{\nu} \nabla \cdot (\nu \widehat{J}) - \rho \widehat{f} ,  \nabla \ln \frac{\rho}{\nu} \right\rangle \notag \\ 
    =&  \left\langle \rho \left(f - \widehat{f} \right) ,  \nabla \ln \frac{\rho}{\nu} \right\rangle + \left\langle \frac{\rho}{\nu} \nabla \cdot (\nu \widehat{J})  - \nabla \cdot (\rho J)  ,  \nabla \ln \frac{\rho}{\nu} \right\rangle 
    \label{pf:dKL/dt-sde-aux2}
\end{align}
where the first equality is by $\nabla \frac{\rho}{\nu} = \frac{\rho}{\nu} \nabla \ln \frac{\rho}{\nu}$.
The derivation above splits $\frac{d}{dt} \mathrm{KL}$ into two terms: one depends on the drifts $\widehat{f}$ and $f$, while the other depends on the diffusion terms $\widehat{J}$ and $J$. Observe that the drift-related term matches the second term in the final result, which vanishes when $\widehat{f}=f$. 

We now focus on the remaining, diffusion-related term:
\begin{align}
    &\left\langle  \frac{\rho}{\nu} \nabla \cdot \left(\nu \widehat{J}\right) - \nabla \cdot (\rho J) , \nabla \ln\frac{\rho}{\nu}\right\rangle \notag \\
    =&  \left\langle  \frac{\rho}{\nu} \left( \widehat{J}\nabla \nu + \nu \nabla \cdot \widehat{J} \right) -  J \nabla \rho - \rho \nabla \cdot J , \nabla \ln\frac{\rho}{\nu}\right\rangle \notag \\
    =&  \rho \left\langle \widehat{J}\nabla \ln \nu + \nabla \cdot \widehat{J} -  J \nabla \ln \rho - \nabla \cdot J , \nabla \ln\frac{\rho}{\nu}\right\rangle  \notag \\
    =& \rho \left\langle - \widehat{J}\nabla \ln \frac{\rho}{\nu} + (\widehat{J} -  J)\nabla \ln \rho + \nabla \cdot (\widehat{J}-J) , \nabla \ln\frac{\rho}{\nu}\right\rangle  \notag \\
     =& -\rho \left\| \nabla \ln \frac{\rho}{\nu} \right\|_{\widehat{J}}^2 + \rho \left\langle (\widehat{J} -  J)\nabla \ln \rho + \nabla \cdot (\widehat{J}-J) , \nabla \ln\frac{\rho}{\nu}\right\rangle .
     \label{pf:dKL/dt-sde-aux3}
\end{align}
Here, the first equality is by $\nabla \cdot \left( \nu \widehat{J} \right) = \widehat{J} \nabla \nu  + \nu \nabla \cdot \widehat{J} $ and $\nabla \cdot ( \rho J ) = J \nabla \rho + \rho \nabla \cdot J $, and the second equality uses $\frac{1}{\nu} \nabla \nu  = \nabla \ln \nu $ and  $\frac{1}{\rho} \nabla \rho  = \nabla \ln \rho $.
Combining \cref{pf:dKL/dt-sde-aux1,pf:dKL/dt-sde-aux2,pf:dKL/dt-sde-aux3} yields the desired result.
\end{proof}

\section{Proofs of auxiliary lemmata}
\subsection{Proof of~\cref{lem:interpolation} -- Interpolation SDE for the Backward Algorithm}\label{app:interpolation}
For ease of notation, within this proof we omit the step index $k$ in $\widehat{X}_t^{(k)}$, $\widehat{\mu}_k$, and $\widehat{G}_k$. %
For $t \geq 0$, let
\begin{equation*}
    \tilde{X}_t = \left(1 - t \right) \widehat{X}_t - 2 t \nabla \ln p_{T-kh-t} \left( \widehat{X}_t \right)
\end{equation*}
so $\tilde{X}_0 = \widehat{X}_0$, and we can write \cref{eq:interpolate-confusion} as
\begin{equation*}
    \tilde{X}_t = \tilde{X}_0 + \sqrt{2} W_t.
\end{equation*}
That is, $\tilde{X}_t$ evolves following the SDE
\begin{equation}
    d\tilde{X}_t = \sqrt{2}dW_t.
    \label{dXildet}
\end{equation}
Suppose that the process $(\widehat{X}_t)_{t \in [0, h]}$ evolves via an SDE of the form
\begin{equation*}
    d\widehat{X}_t = \widehat{\mu} \left( \widehat{X}_t, t \right) \, dt + \sqrt{2 \widehat{G} \left( \widehat{X}_t, t \right) } \, dW_t
\end{equation*}
for some functions $\widehat \mu$ and $\widehat G.$ In what follows, we solve for the form of these two functions.
Let $T_t(x) = \left(1 - t\right)x - 2 t \nabla \ln p_{T-kh-t}(x) \in \bR^d$. Then, we have
\begin{align*}
\frac{\partial T_t}{\partial t}(x) &= -x - 2 \nabla \ln p_{T-kh-t}(x) - 2t \frac{\partial}{\partial t} \nabla \ln p_{T-kh-t}(x),\\
\nabla T_t(x) &= \left(1 - t\right)I_d - 2 t \nabla^2 \ln p_{T-kh-t}(x), \quad \text{and}\\
    \nabla^2 T_t(x) &= -2t \nabla^3 \ln p_{T-kh-t} (x).
\end{align*}
We now apply high-dimensional Itô's lemma \cite[Chapter 4.2]{oksendal2003stochastic} to $\tilde{X}_t = T_t \left( \widehat{X}_t \right)$ and obtain the following
\begin{align}
    d\tilde{X}_t &= dT_t \left( \widehat{X}_t \right) \nonumber \\
    &= \frac{\partial T_t}{\partial t} \left( \widehat{X}_t \right) dt + \nabla T_t \left( \widehat{X}_t \right) d \widehat{X}_t + \sum_{i,j=1}^d\frac{1}{2}  \frac{\partial^2 T_t \left( \widehat{X}_t \right)}{\partial x_i \partial x_j} \, d\widehat{X}^i_t \, d\widehat{X}^j_t  \nonumber \\
    &= \frac{\partial T_t}{\partial t} \left( \widehat{X}_t \right) dt + \nabla T_t \left( \widehat{X}_t \right) \widehat{\mu} dt + \nabla T_t \left( \widehat{X}_t \right) \sqrt{ 2 \widehat{G} } \, dW_t + \sum_{i,j=1}^d \frac{\partial^2 T_t \left( \widehat{X}_t \right)}{\partial x_i \partial x_j} \, \widehat{G}_{i,j} dt \nonumber \\
    &= \left( \frac{\partial T_t}{\partial t} \left( \widehat{X}_t \right) + \nabla T_t( \widehat{X}_t ) \widehat{\mu} + \operatorname{Tr}\left[\nabla^2 T_t \left( \widehat{X}_t \right) \widehat{G} \right] \right) dt + \sqrt{2} \nabla T_t \left( \widehat{X}_t \right) \sqrt{\widehat{G}} \, dW_t.
    \label{dXtilde-2}
\end{align}
where the third equality is by plugging in $d \widehat{X}_t = \widehat{\mu} dt + \sqrt{2 \widehat{G}} d W_t$ and noting that
\begin{equation*}
    d\widehat X_t^i d\widehat X_t^j = 2\widehat G_{i,j} dt,
\end{equation*}
and the last step uses $\operatorname{Tr}(AB^T) = \sum_{i,j} A_{i,j} B_{i,j}$.
Note that in the equations above, $T_t( \cdot ) \in \bR^d$ is a vector and therefore, $\frac{\partial^2 T_t( \cdot )}{\partial x_i \partial x_j} \in \bR^d$, $\nabla^2 T_t( \cdot ) \in \bR^{d \times d \times d}$, and $\mathrm{Tr}[\nabla^2 T_t( \cdot ) G] \in \bR^d$ are obtained by broadcasting the corresponding operator to each entry in $T_t(\cdot)$.

Matching \cref{dXildet} and \cref{dXtilde-2}, we must have
\begin{align*}
    & \frac{\partial T_t}{\partial t}(x) + \nabla T_t(x) \widehat{\mu} + \operatorname{Tr}\left[\nabla^2 T_t (x) \widehat{G} \right] = 0, \\
    & \nabla T_t(x) \sqrt{ \widehat{G} } = I_d
\end{align*}
which implies
\begin{align*}
    &\widehat{G}(x,t) = [ \nabla T_t(x) ]^{-2} \\
    &\widehat{\mu}(x,t) = \sqrt{ \widehat{G} } \left\{ - \frac{\partial T_t}{\partial t}(x) - \operatorname{Tr}\left[\nabla^2 T_t (x) \widehat{G}\right] \right\}.
\end{align*}
Plugging in $\frac{\partial T_t}{\partial t}(x)$, $\nabla T_t(x)$, and $\nabla^2 T_t (x)$ completes the proof. \qed

\subsection{Proof of \cref{lem:interpolation-hybrid} -- Interpolation SDE for the Hybrid Algorithm}\label{app:interpolation-hybrid}
Observe that the next iterate of \eqref{eq:update-hybrid}, $\widehat{X}_{(k+1)h}$,  is the value at time $t = h$ of the stochastic process $( \widehat{X}_t^{(k)} )_{t \in [0, h]}$ given by
\begin{equation}
    \widehat{X}_t^{(k)} = \widehat{X}_0^{(k)} + t \widehat{X}_0^{(k)} + 2t \nabla \ln p_{T-kh-t}( \widehat{X}_t^{(k)} ) + \sqrt{2}W_t, \quad  \widehat{X}_0^{(k)} = \widehat{X}_{kh} .
    \label{interpolation-process-hybrid}
\end{equation}
For $t \geq 0$, let
\begin{equation*}
    \tilde{X}_t = \widehat{X}_t^{(k)} - t \widehat{X}_0^{(k)} - 2 t \nabla \ln p_{T-kh-t}( \widehat{X}_t^{(k)} )
\end{equation*}
so $\tilde{X}_0 = \widehat{X}_0^{(k)}$ and $\tilde{X}_t^{(k)} = \tilde{X}_0^{(k)} + \sqrt{2} W_t.$

The rest of the argument is analogous to the proof of \cref{lem:interpolation} (see \cref{app:interpolation-hybrid}), but with $T_t(x) = x - t \widehat{X}_0^{(k)} - 2 t \nabla \ln p_{T-kh-t}(x) \in \bR^d$. \qed

\subsection{Proof of~\cref{lem:second-moment} -- Second Moment of OU Process}\label{app:second-moment}
This is a standard property of the OU process. We include a proof here for completeness. Along the OU process \cref{OU}, we have
\begin{align*}
    X_t \overset{d}{=} e^{-t} X_0 + \sqrt{1 - e^{-2t}} Z, \quad Z\sim \cN(0, I).
\end{align*}
Therefore, 
\begin{align*}
    \bE\left[ \|X_t\|^2 \right] 
    &= \bE \left[ \left \| e^{- t} X_0 + \sqrt{1-e^{-2 t}} Z \right \|^2 \right] \\
    &\le 2 e^{- 2  t} \bE\left[  \left\| X_0 \right\|^2 \right] + 2 (1-e^{-2 t}) \bE \left[ \left\| Z \right\|^2 \right] \\
    &= 2 e^{- 2  t} \bE \left[  \| X_0\|^2 \right ] +  2(1-e^{-2 t}) d \\
    &\le 2 \bE \left[  \| X_0\|^2 \right] +  2 d 
\end{align*}
where the second line follows by Young's inequality, completing the proof of~\cref{lem:second-moment}.
\qed

\subsection{Proof of~\cref{lem:bound-init} -- Bound on Initialization Mismatch}\label{app:bound-init}

\begin{proof}
Recall that $p_t$ denotes the density of the OU process at time $t$, thus $p_T 
 = \pzb$. We have
\begin{align*}
    \mathrm{KL} ( p_t \| q_0 ) 
    &= \int_{\bR^d} p_t(x) \ln p_t(x) dx - \int_{\bR^d} p_t(x) \ln q_0 (x) dx \\
    &= \int_{\bR^d} p_t(x) \ln p_t(x) dx + \int_{\bR^d} p_t(x) \left(\frac{\|x\|^2}{2} + \frac{d}{2} \ln (2\pi)\right) dx \\
    &\le \int_{\bR^d} p_t(x) \ln p_t(x) dx + \frac{1}{2} \bE_{p_t}[\|x\|^2] + \frac{d}{2} \ln (2\pi).
\end{align*}
For the first term,  letting \(p_{t|0}\) be the conditional density of \(X_t\) given \(X_0\), we have
\begin{align*}
    \int_{\bR^d} p_t(x) \ln p_t(x) \, dx  
    &= \int_{\bR^d} \left( \int_{\bR^d} p_{t|0}(x|y) p_0(y) dy  \right) 
    \ln \left( \int_{\bR^d} p_{t|0}(x|y) \, p_0(y) dy \right) dx \\
    &\le \int_{\bR^d} \left[ \int_{\bR^d} p_{t|0}(x|y) \ln p_{t|0}(x|y) \, p_0(y) dy \right] dx \\
    &=\int_{\bR^d} \left( \int_{\bR^d} p_{t|0}(x|y) \ln p_{t|0}(x|y) \, dx \right) p_0(y) dy
\end{align*}
where the inequality is by Jensen's inequality and that \(x \mapsto x \ln x\) is a convex function for \(x > 0\). Since $X_t \mid X_0 \sim \cN(e^{-t}X_0, (1-e^{-2t})I)$, by entropy of Gaussian distributions, we have
\begin{align*}
\int_{\bR^d} p_{t|0}(x|y) \ln p_{t|0}(x|y) \, \mathrm{d}x 
= -\frac{d}{2} \ln(2\pi (1-e^{-2t})) - \frac{d}{2}.
\end{align*}
Therefore,
\begin{align*}
\mathrm{KL}(p_t \| q_0 ) 
&\le -\frac{d}{2} \ln(2\pi (1-e^{-2t})) - \frac{d}{2} + \frac{1}{2} \bE_{p_t}[\|x\|^2] + \frac{d}{2} \ln (2\pi) \\
&= \frac{1}{2} \bE_{p_t}[\|x\|^2] + \frac{d}{2} \left( \ln\frac{1}{1-e^{-2t}} - 1 \right) \\
&\le M_2 + d + \frac{d}{2} \left( \ln\frac{1}{1-e^{-2t}} - 1 \right)
\end{align*}
where the last inequality is by \cref{lem:second-moment}.
Taking $t'=\frac{1}{2} \ln (\frac{e}{e-1}) \approx 0.229$, we have $\ln \frac{1}{1 - e^{-2t'}} = 1$, thus
\begin{align*}
\mathrm{KL}(p_{t'} \|   q_0 ) 
&\le M_2 + d.
\end{align*}
By exponential convergence of the OU process, we have for all $T \ge t'$,
\begin{align*}
    \mathrm{KL}(p_T \| q_0 ) \le e^{-2(T-t')} \mathrm{KL}(p_{t'} \| q_0 ) 
    \le e^{-2(T-t')} \cdot (M_2 + d) < 2 e^{-2T} (M_2 + d)
\end{align*}
as desired.

\end{proof}

\subsection{Proof of~\cref{lem:bounds-G-backwards} -- Bounds on the Diffusion Matrix $\widehat{G}_k$ for Backward Algorithm} \label{app:proof-bounds-G-backwards}

    Since $p_{T-kh-t}$ is $L$-smooth,
    \begin{equation*}
        -L I \preceq \nabla^2 \ln p_{T-kh-t} (x) \preceq L I.
    \end{equation*}
    So 
    \begin{equation*}
        (1 - t - 2tL) I \preceq  (1-t)I_d - 2t \nabla^2 \ln p_{T-kh-t} (x)  \preceq (1 - t + 2tL) I.
    \end{equation*}
    Check that $0 < 1-t-2tL \leq 1-t+2tL$ since $t \leq \frac{1}{2(2L + 1)}$. Therefore,
    \begin{equation*}
        (1 - t + 2tL)^{-2} I \preceq  \widehat{G}_k  =  [(1-t)I_d - 2t \nabla^2 \ln p_{T-kh-t} (x)]^{-2}  \preceq (1 - t - 2tL)^{-2} I
    \end{equation*}
    and
    \begin{equation*}
        (1 - t + 2tL)^{-1} I \preceq  \sqrt{ \widehat{G}_k }  =  [(1-t)I_d - 2t \nabla^2 \ln p_{T-kh-t} (x)]^{-1}  \preceq (1 - t - 2tL)^{-1} I .
    \end{equation*}
    We collect some useful facts: 
    \begin{align*}
        & (1-x)^{-2} \leq 1+6x \quad \text{if }0 \leq x \leq \frac{1}{2}, \\
        & (1-x)^{-2} \geq 1+2x \quad \text{if }x < 1, \\
        & (1-x)^{-1} \leq 1+2x \quad \text{if }0 \leq x \leq \frac{1}{2}, \\
        & (1-x)^{-1} \geq 1+x \,\,\, \quad \text{if } x < 1.
    \end{align*}
    Since $t-2tL \leq \frac{1}{2}$, $0 \leq t+2tL \leq \frac{1}{2}$, and $L\geq1$,
    \begin{equation*}
           (1 - 4tL) I \preceq [1 + 2t (1-2L)] I \preceq  \widehat{G}_k  \preceq [1 + 6t(1+2L)] I \preceq (1+18tL) I
    \end{equation*}
    and
    \begin{equation*}
        (1-2tL) I \leq [1 + t(1-2L)] I \preceq \sqrt{ \widehat{G}_k } \preceq [1 + 2t(1+2L)]I \preceq (1 + 6tL) I
    \end{equation*}
    as desired.

\subsection{Proof of \cref{lem:bounds-G-hybrid} -- Bounds on the Diffusion Matrix $\widehat{G}_k$ for Hybrid Algorithm}
\label{app:proof-bounds-G-hybrid}
The proof is analogous to that of~\cref{lem:bounds-G-backwards} (see \cref{app:proof-bounds-G-backwards}). The resulting bound is identical up to changes in constant factors.

\subsection{Proof of~\cref{lem:bound-drift-difference} -- Bound on the Drift Difference $\mu_k - \widehat{\mu}_k$ for Backward Algorithm}\label{app:proof-drift-difference}
We start by establishing a few auxiliary lemmas (\cref{lem:bound-mu-auxilliary-eq,lem:bound-mu-auxilliary-Tr,lem:bound-mu-auxilliary-ineq}). %
We let $[b_i]_{i=1}^d \in \bR^d$ denote a vector whose $i$-th entry is $b_i$.

\begin{lemma}
\label{lem:bound-mu-auxilliary-eq}
The following two equalities hold true.
\begin{align*}
\frac{\partial}{\partial t} \ln p_t 
&= \|\nabla \ln p_t\|^2 +  \langle \nabla \ln p_t, x \rangle  +  \Delta (\ln p_t) + d, \quad \text{and} \\
    \frac{\partial}{\partial t} \nabla \ln p_{T-kh-t} (x) &= - \mathrm{Tr}(\nabla^3 \ln p_{T-kh-t} (x) \widehat{G}_k )   -\nabla \left( \|\nabla \ln p_{T-kh-t}\|^2 \right)  \\
    &\quad\quad - \nabla \left( \langle \nabla \ln p_{T-kh-t}, x \rangle \right) + \left[ \mathrm{Tr} (\nabla_i \nabla^2 \ln p_{T-kh-t}(x) ( \widehat{G}_k - I) ) \right]_{i=1}^{d}.
\end{align*}

\end{lemma}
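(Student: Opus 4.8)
The plan is to obtain both identities from the Fokker--Planck equation of the OU forward process \eqref{OU}. Since the drift is $-x$ and the diffusion coefficient is $\sqrt{2}$, the marginal density satisfies $\partial_t p_t = \nabla\cdot(x\,p_t) + \Delta p_t = d\,p_t + \langle x,\nabla p_t\rangle + \Delta p_t$. Dividing through by $p_t$ and using $\nabla\ln p_t = \nabla p_t/p_t$ reduces the task to rewriting the term $\Delta p_t/p_t$. Differentiating the identity $\partial_i p_t = p_t\,\partial_i\ln p_t$ once more and summing over $i$ gives $\Delta p_t = p_t(\|\nabla\ln p_t\|^2 + \Delta\ln p_t)$, i.e.\ $\Delta p_t/p_t = \|\nabla\ln p_t\|^2 + \Delta\ln p_t$. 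Substituting this yields exactly the first displayed equality. This part is a short computation; the only thing to keep track of is that the Fokker--Planck equation and these manipulations are licensed by the smoothness of $\ln p_t$ from \cref{assump:data-distr} (and the fact that, for the OU semigroup, $p_t$ is smooth and strictly positive).

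For the second equality I would differentiate the first one in $x$. Writing $s = T - kh - t$ so that $\partial_t = -\partial_s$, I apply the just-proved identity at time $s$, then take $\nabla_x$ of both sides and interchange $\nabla_x$ with $\partial_s$ (equality of mixed partials, valid under the $C^3$ regularity). Since $\nabla d = 0$, this produces $\partial_s \nabla\ln p_s = \nabla(\|\nabla\ln p_s\|^2) + \nabla(\langle\nabla\ln p_s,x\rangle) + \nabla(\Delta\ln p_s)$. The key remaining observation is the elementary bookkeeping identity $\nabla(\Delta f) = [\operatorname{Tr}(\nabla_i\nabla^2 f)]_{i=1}^d = \operatorname{Tr}(\nabla^3 f\, I)$ for any $C^3$ scalar $f$ (it is just $\partial_i\sum_j\partial_j^2 f = \sum_j\partial_i\partial_j^2 f$), applied with $f = \ln p_s$, recalling the tensor/trace conventions of \cref{sec:notations}.

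Finally I multiply by $-1$ (because $\partial_t = -\partial_s$) and rewrite $\operatorname{Tr}(\nabla^3\ln p_s\, I)$ by adding and subtracting $\operatorname{Tr}(\nabla^3\ln p_s\,\widehat{G}_k)$, i.e.\ $\operatorname{Tr}(\nabla^3\ln p_s\, I) = \operatorname{Tr}(\nabla^3\ln p_s\,\widehat{G}_k) - \operatorname{Tr}(\nabla^3\ln p_s\,(\widehat{G}_k - I))$, whose $i$-th component is $\operatorname{Tr}(\nabla_i\nabla^2\ln p_s\,(\widehat{G}_k - I))$. Combined with the previous displays this gives precisely the claimed expression for $\partial_t\nabla\ln p_{T-kh-t}(x)$. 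I do not anticipate a genuine obstacle here: the mathematical content is the Fokker--Planck computation plus the identity $\nabla\Delta = \operatorname{Tr}(\nabla^3(\cdot)\,I)$, and the only points requiring care are justifying the interchange of $\nabla_x$ and $\partial_s$ (regularity of $p_t$), keeping straight the broadcasting conventions for $\operatorname{Tr}(\nabla^3 \varphi\, G)$, and tracking the sign introduced by the reparametrization $s = T-kh-t$ together with the purely cosmetic insertion of $\widehat{G}_k$.
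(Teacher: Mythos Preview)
Your proposal is correct and matches the paper's own proof essentially line for line: the paper also derives the first identity from the Fokker--Planck equation of the OU process (written there as $\partial_t p_t = \nabla\cdot[p_t(\nabla\ln p_t + x)]$, which is your $\nabla\cdot(x p_t)+\Delta p_t$), and obtains the second by taking the spatial gradient, flipping the sign via $t'\mapsto T-kh-t$, recognizing $\nabla(\Delta\ln p) = [\operatorname{Tr}(\nabla_i\nabla^2\ln p)]_i$, and then adding/subtracting $\operatorname{Tr}(\nabla^3\ln p\,\widehat G_k)$.
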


\begin{proof}

Applying the Fokker-Planck equation to \cref{OU} yields
$
\frac{\partial p_t}{\partial t} = \nabla \cdot \left[ p_t \left( \nabla \ln p_t + x \right) \right].$
Therefore,
\begin{align*}
    \frac{\partial}{\partial t} \ln p_t 
    &= \frac{1}{p_t} \frac{\partial p_t}{\partial t} \\
    &= \frac{1}{p_t} \nabla \cdot \left[ p_t \left( \nabla \ln p_t + x \right) \right] \\
    &= \frac{1}{p_t} \left( \langle \nabla p_t, \nabla \ln p_t + x \rangle + p_t \nabla \cdot (\nabla \ln p_t + x) \right) \\
    &= \langle \nabla \ln p_t, \nabla \ln p_t \rangle + \langle \nabla \ln p_t, x \rangle + \Delta (\ln p_t) + d
\end{align*}
where the second equality uses $\nabla \cdot (fG) = \langle \nabla f, G \rangle + f\,\nabla \cdot G$, for $f\colon \bR^d \to \bR$ and $G\colon \bR^d \to \bR^d$. We have shown the first result.

Then, note that
\begin{align*}
\frac{\partial}{\partial t} \ln p_{T-kh-t} (x) 
&= - \frac{\partial}{\partial t'} \ln p_{t'} (x) \bigg|_{t' = T-kh-t} \\
&= - \|\nabla \ln p_{T-kh-t}\|^2  -  \langle \nabla \ln p_{T-kh-t}, x \rangle  -  \Delta (\ln p_{T-kh-t}) - d.
\end{align*}
Therefore, 
\begin{align*}
\nabla \left( \frac{\partial}{\partial t} \ln p_{T-kh-t}(x) \right) 
&= -\nabla \left( \|\nabla \ln p_{T-kh-t}\|^2 \right) - \nabla \left( \langle \nabla \ln p_{T-kh-t}, x \rangle \right) - \nabla \left( \Delta (\ln p_{T-kh-t}) \right) \\
&=  -\nabla \left( \|\nabla \ln p_{T-kh-t}\|^2 \right) - \nabla \left( \langle \nabla \ln p_{T-kh-t}, x \rangle \right) - \left[ \mathrm{Tr} (\nabla_i \nabla^2 \ln p_{T-kh-t} ) \right]_{i=1}^{d}.
\end{align*}
Recall that
\begin{align*}
\mathrm{Tr}(\nabla^3 \ln p_{T-kh-t} (x) \widehat{G}_k ) 
&= \left[ \mathrm{Tr}(\nabla_i \nabla^2 \ln p_{T-kh-t}(x) \widehat{G}_k ) \right]_{i=1}^d.
\end{align*}
Adding the two equations above yields the second result.
\end{proof}

\begin{lemma}
\label{lem:bound-mu-auxilliary-Tr}
The following inequality holds.
\begin{align*}
    \left\| \left[ \mathrm{Tr} (\nabla_i \nabla^2 \ln p_{T-kh-t}(x) ( \widehat{G}_k - I ) ) \right]_{i=1}^{d} \right\|^2 \le 324 t^2 d^3 L^2 H_{T-kh-t}(x)^2.
\end{align*}

\end{lemma}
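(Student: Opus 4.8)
The plan is to bound each scalar entry $\mathrm{Tr}\bigl(\nabla_i \nabla^2 \ln p_{T-kh-t}(x)\,(\widehat{G}_k - I)\bigr)$ individually and then sum the squares over $i = 1, \dots, d$. Throughout, write $A_i := \nabla_i \nabla^2 \ln p_{T-kh-t}(x) \in \bR^{d\times d}$ and $B := \widehat{G}_k - I \in \bR^{d\times d}$; both are symmetric (the former because $\nabla_i\nabla^2\ln p_{T-kh-t}$ has entries $\partial^3_{ijk}\ln p_{T-kh-t}$, the latter because $\widehat{G}_k$ is an inverse square of a symmetric matrix). For symmetric matrices $\mathrm{Tr}(A_iB)$ is exactly the Frobenius inner product $\langle A_i, B\rangle$, so the Cauchy--Schwarz inequality gives $|\mathrm{Tr}(A_iB)| \le \|A_i\|_{\mathrm{F}}\,\|B\|_{\mathrm{F}}$.

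Next I would control the two Frobenius norms. For the first factor, the elementary bound $\|\cdot\|_{\mathrm{F}} \le \sqrt{d}\,\opnorm{\cdot}$ together with the definition of $H_{T-kh-t}(x)$ in \eqref{def-H_f} yields $\|A_i\|_{\mathrm{F}} \le \sqrt{d}\,\opnorm{A_i} \le \sqrt{d}\, H_{T-kh-t}(x)$ for every $i$. For the second factor, \cref{lem:bounds-G-backwards} gives $-4tL\,I \preceq B \preceq 18tL\,I$, so the eigenvalues of $B$ lie in $[-4tL, 18tL]$ and hence $\opnorm{B} \le 18tL$; consequently $\|B\|_{\mathrm{F}} \le \sqrt{d}\,\opnorm{B} \le 18\sqrt{d}\, tL$. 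Combining the three bounds gives, for each $i$,
\[
\bigl|\mathrm{Tr}(A_i B)\bigr| \le \sqrt{d}\,H_{T-kh-t}(x)\cdot 18\sqrt{d}\,tL = 18\,t d L\, H_{T-kh-t}(x).
\]
Finally I would assemble the vector norm by squaring and summing over $i$:
\[
\Bigl\| \bigl[ \mathrm{Tr}(A_i B) \bigr]_{i=1}^{d} \Bigr\|^2 = \sum_{i=1}^d \bigl|\mathrm{Tr}(A_i B)\bigr|^2 \le d\cdot\bigl(18\,t d L\, H_{T-kh-t}(x)\bigr)^2 = 324\, t^2 d^3 L^2\, H_{T-kh-t}(x)^2,
\]
which is precisely the claimed bound.

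There is no genuine obstacle here: the statement is a direct consequence of standard matrix-norm inequalities. The only points requiring care are $(i)$ using the \emph{Frobenius} form of Cauchy--Schwarz so that the two factors of $\sqrt{d}$ coming from $\|\cdot\|_{\mathrm{F}} \le \sqrt{d}\,\opnorm{\cdot}$ combine correctly (producing the $d^3 = d\cdot d\cdot d$, with one extra $d$ from summing $d$ entries), and $(ii)$ extracting the operator-norm estimate $\opnorm{\widehat{G}_k - I} \le 18tL$ from the two-sided bound in \cref{lem:bounds-G-backwards}, noting that the upper deviation $18tL$ dominates the lower deviation $4tL$. Tracking the constant $18$ through the squaring gives the factor $324$.
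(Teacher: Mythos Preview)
Your proof is correct and follows essentially the same route as the paper's: bound each entry via the Frobenius Cauchy--Schwarz inequality $|\mathrm{Tr}(A_iB)|\le\|A_i\|_{\mathrm{F}}\|B\|_{\mathrm{F}}$, pass to operator norms via $\|\cdot\|_{\mathrm{F}}\le\sqrt{d}\,\opnorm{\cdot}$, invoke \cref{lem:bounds-G-backwards} for $\opnorm{\widehat{G}_k-I}\le 18tL$, and then sum the $d$ squared entries. The symmetry remark you make is harmless but not strictly needed, since $|\mathrm{Tr}(AB)|\le\|A\|_{\mathrm{F}}\|B\|_{\mathrm{F}}$ holds for arbitrary square matrices.
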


\begin{proof}
For each entry, we have
\begin{align*}
    \left| \mathrm{Tr} (\nabla_i \nabla^2 \ln p_{T-kh-t}(x) ( \widehat{G}_k -I) ) \right|^2 
    &\le \| \nabla_i \nabla^2 \ln p_{T-kh-t}(x) \|_{\mathrm{F}}^2 \| \widehat{G}_k -I \|_{\mathrm{F}}^2 \\
    &\le d^2  \opnorm{ \nabla_i \nabla^2 \ln p_{T-kh-t}(x) }^2 \opnorm{ \widehat{G}_k -I }^2 \\
    &\le 324 t^2 d^2 L^2 H_{T-kh-t}(x)^2.
\end{align*}
Summing over the entries yields the desired result.
\end{proof}

\begin{lemma}
\label{lem:bound-mu-auxilliary-ineq}
The following inequality holds.
\begin{align*}
\left\| \frac{\partial}{\partial t} \nabla \ln p_{T-kh-t} (x) + \mathrm{Tr}(\nabla^3 \ln p_{T-kh-t} (x) \widehat{G}_k) \right\| 
\le 
(2L+1) \left\| \nabla \ln p_{T-kh-t} \right\| + L \|x\| + 18 t d^{\frac{3}{2}} L H_{T-kh-t}(x).
\end{align*}
\end{lemma}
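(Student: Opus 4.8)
The plan is to start from the second identity in Lemma~\ref{lem:bound-mu-auxilliary-eq}, which is tailored precisely to this quantity: adding $\mathrm{Tr}(\nabla^3 \ln p_{T-kh-t}(x)\widehat{G}_k)$ to both sides cancels the leading term and leaves
\[
\frac{\partial}{\partial t}\nabla \ln p_{T-kh-t}(x) + \mathrm{Tr}\!\left(\nabla^3 \ln p_{T-kh-t}(x)\widehat{G}_k\right) = -\nabla\!\left(\|\nabla \ln p_{T-kh-t}\|^2\right) - \nabla\!\left(\langle \nabla \ln p_{T-kh-t}, x\rangle\right) + \left[\mathrm{Tr}\!\left(\nabla_i\nabla^2 \ln p_{T-kh-t}(x)(\widehat{G}_k - I)\right)\right]_{i=1}^d.
\]
From here I would apply the triangle inequality and bound each of the three terms on the right-hand side separately.

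For the first term, I would use the chain rule $\nabla\!\left(\|\nabla \ln p_t\|^2\right) = 2\,\nabla^2 \ln p_t\,\nabla \ln p_t$ together with $L$-smoothness (so $\opnorm{\nabla^2 \ln p_t}\le L$), giving the bound $2L\,\|\nabla \ln p_{T-kh-t}\|$. For the second term, a componentwise computation shows $\nabla\!\left(\langle \nabla \ln p_t, x\rangle\right) = \nabla^2 \ln p_t\,x + \nabla \ln p_t$; hence, again by the triangle inequality and $L$-smoothness, its norm is at most $L\,\|x\| + \|\nabla \ln p_{T-kh-t}\|$. For the third term, I would simply invoke Lemma~\ref{lem:bound-mu-auxilliary-Tr}, whose squared bound $324\,t^2 d^3 L^2 H_{T-kh-t}(x)^2$ yields $18\,t\,d^{3/2} L\,H_{T-kh-t}(x)$ after taking square roots. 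Summing the three contributions produces exactly $(2L+1)\,\|\nabla \ln p_{T-kh-t}\| + L\,\|x\| + 18\,t\,d^{3/2} L\,H_{T-kh-t}(x)$, as claimed.

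I do not anticipate a genuine obstacle here: the argument is a routine combination of the differential identities from Lemma~\ref{lem:bound-mu-auxilliary-eq}, elementary chain- and product-rule computations, and the already-established Lemma~\ref{lem:bound-mu-auxilliary-Tr}. The only place demanding a little care is the componentwise differentiation of $\langle \nabla \ln p_t, x\rangle$ — one must keep track of which index is being differentiated so that the ``extra'' $\nabla \ln p_t$ term (which is what contributes the $+1$ to the coefficient $2L+1$) appears correctly — and, secondarily, making sure the numerical constants from the auxiliary lemmas are carried through without slack.
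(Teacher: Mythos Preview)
Your proposal is correct and follows essentially the same approach as the paper: invoke the second identity of Lemma~\ref{lem:bound-mu-auxilliary-eq}, apply the triangle inequality, then bound the three pieces via the chain rule on $\|\nabla \ln p_t\|^2$, the product rule on $\langle \nabla \ln p_t, x\rangle$, and Lemma~\ref{lem:bound-mu-auxilliary-Tr} for the trace term. The constants and intermediate computations match the paper's proof exactly.
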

\begin{proof}
By \cref{lem:bound-mu-auxilliary-eq}, we have 
\begin{align*}
    &\left\| \frac{\partial}{\partial t} \nabla \ln p_{T-kh-t} (x) + \mathrm{Tr}(\nabla^3 \ln p_{T-kh-t} (x) \widehat{G}_k ) \right\| \\ 
    &\quad \le 
    \left\|\nabla \left( \|\nabla \ln p_{T-kh-t}\|^2 \right) \right\| 
    + \left\| \nabla \left( \langle \nabla \ln p_{T-kh-t}, x \rangle \right) \right\|  + \left\| \left[ \mathrm{Tr} (\nabla_i \nabla^2 \ln p_{T-kh-t}(x) ( \widehat{G}_k -I) ) \right]_{i=1}^{d} \right\|.
\end{align*}
From this point onward, we let $\nu_t = p_{T-kh-t}$ for ease of notation. Next, we bound the three terms in this sum
{For the first term,} we have
\[
\nabla \left( \|\nabla \ln \nu_t\|^2 \right) = \nabla \left( \langle \nabla \ln \nu_t, \nabla \ln \nu_t \rangle \right) = 2 \nabla^2 \ln \nu_t \nabla \ln \nu_t
\]
where the second equality uses $\nabla (\langle f, g \rangle) = J_f^T g + J_g^T f$, where $f, g\colon \bR^n \to \bR^m$, and $J_f$  %
is the Jacobian of $f$. Therefore, 
\[
\left\| \nabla \left( \|\nabla \ln \nu_t\|^2 \right) \right\| \leq 2 \opnorm{\nabla^2 \ln \nu_t} \|\nabla \ln \nu_t\| \leq 2 L \|\nabla \ln \nu_t\|.
\]
{To bound the second term,} note that
$
\nabla \left( \langle \nabla \ln \nu_t, x \rangle \right) = \nabla \ln \nu_t + \nabla^2 \ln \nu_t\,x.
$
So
\[
\left\| \nabla \left( \langle \nabla \ln \nu_t, x \rangle \right) \right\| \leq \|\nabla \ln \nu_t\| + \opnorm{\nabla^2 \ln \nu_t} \|x\| \leq \|\nabla \ln \nu_t\| + L \|x\|.
\]
\cref{lem:bound-mu-auxilliary-Tr} gives a bound on the third term. Combining the three upper bounds finishes the proof of~\cref{lem:bound-mu-auxilliary-ineq}.
\end{proof}
We are ready to prove \cref{lem:bound-drift-difference}.
We start by decomposing $\widehat{\mu}_k - \mu_k$ as 
\begin{align*}
     \widehat{\mu}_k - \mu_k &= \left( \sqrt{ \widehat{G}_k } - I \right) \left( x + 2 \nabla \ln p_{T-kh-t}(x) \right) \\
     &\quad + 2t \sqrt{ \widehat{G}_k }\left( \frac{\partial}{\partial t} \nabla \ln p_{T-kh-t}(x) + \mathrm{Tr}\left( \nabla^3 \ln p_{T-kh-t}(x) \widehat{G}_k \right) \right).
\end{align*}
Next, we bound the two terms on the right-hand side. {For the first term}, we have
\begin{align*}
    \left\| \left(\sqrt{ \widehat{G}_k } - I\right)\left( x + 2 \nabla \ln p_{T-kh-t}(x) \right) \right\|
    &\le \opnorm{\sqrt{ \widehat{G}_k } - I} \left\| x + 2 \nabla \ln p_{T-kh-t}(x) \right\| \\
    &\le 6tL \left( \|x\| + 2 \|\nabla \ln p_{T-kh-t}(x)\| \right)
\end{align*}
where the second inequality uses the bound \(\opnorm{\sqrt{G} - I} \le 6tL\) by \cref{lem:bounds-G-backwards}.

{For the second term}, we have
\begin{align*}
    &\left \| 2t \sqrt{ \widehat{G}_k }\left( \frac{\partial}{\partial t} \nabla \ln p_{T-kh-t}(x) + \mathrm{Tr}\left( \nabla^3 \ln p_{T-kh-t}(x) \widehat{G}_k \right) \right) \right\| \\
     & \le
    2t \opnorm{\sqrt{ \widehat{G}_k }} \left \|  \frac{\partial}{\partial t} \nabla \ln p_{T-kh-t}(x) + \mathrm{Tr}\left( \nabla^3 \ln p_{T-kh-t}(x) \widehat{G}_k \right)  \right\| \\
    &\le
    5t \left[ (2L+1) \left\| \nabla \ln p_{T-kh-t} \right\| + L \|x\| + 18 t d^{\frac{3}{2}} L H_{T-kh-t}(x) \right].
\end{align*}
where the second inequality is by \cref{lem:bound-mu-auxilliary-ineq} and \(\opnorm{\sqrt{ \widehat{G}_k }} \le \frac{5}{2}\) from \cref{lem:bounds-G-backwards}.

The desired bound follows from combining the above, noting that $L\ge 1$ and using $(a+b+c)^2 \le 3 a^2 + 3 b^2 + 3 c^2$ for $a, b, c \in \bR$.
\qed

\subsection{Proof of~\cref{lem:bound-drift-difference-hybrid} -- Bound on the Drift Difference $\mu_k - \widehat{\mu}_k$ for Hybrid Algorithm}\label{app:proof-drift-difference-hybrid}
The proof follows the same structure as that of \cref{lem:bound-drift-difference} (see \cref{app:proof-drift-difference}).
We start by decomposing $\widehat{\mu}_k - \mu_k$ as 
\begin{align*}
    \widehat{\mu}_k - \mu_k & = (\sqrt{\widehat{G}_{k}} - I)\left( x + 2 \nabla \ln p_{T-kh-t}(x) \right)  \\
    & \quad+ 2t \sqrt{ \widehat{G}_{k} } \left( \frac{\partial}{\partial t} \nabla \ln p_{T-kh-t}(x) + \mathrm{Tr}\left( \nabla^3 \ln p_{T-kh-t}(x) \widehat{G}_{k} \right) \right) \\
    &\quad \quad + \sqrt{ \widehat{G}_{k} }\left( a - x \right) .
\end{align*}
The first two terms are bounded analogously to the proof of \cref{lem:bound-drift-difference} (see \cref{app:proof-drift-difference}). The last term is bounded by $\opnorm{\sqrt{\widehat{G}_k}} \le 2$ thanks to \cref{lem:bounds-G-hybrid}. This completes the proof.

\subsection{Proof of Lemma~\ref{lem:bound-div-diffusion} -- Bound on the Divergence of Diffusion Matrix $\nabla \cdot \widehat{G}_k$ for Backward Algorithm}\label{app:proof-div-diffusion}

In this section, we prove \cref{lem:bound-div-diffusion}, which provides a bound on $\| \nabla \cdot \widehat{G}_k \|$.
The derivation in this section is largely inspired by \cite[Proof of Lemma 6]{wibisono2019proximal}, but we improved the technique to obtain a tighter bound in terms of $d$ and with weaker assumptions.

Before proving \cref{lem:bound-div-diffusion}, we show a few general lemmas related to matrix perturbation (\cref{lem:mtx-inv-square-diff,lem:mtx-inv-square-derivative}) and divergence of matrix-valued functions (\cref{lem:bound-div-by-derivative}). These results are then used to prove \cref{lem:bound-div-diffusion}.
The next two lemmas are folklore; we include their proofs for completeness. Recall that a matrix norm $\|\cdot\|$ is sub-multiplicative if for all $A, B$ we have 
$$\|AB\| \leq \|A\|\|B\|.$$
\begin{lemma}
\label{lem:mtx-inv-square-diff}
    Let $A$ and $B$ be real invertible matrices. Let $\mnorm{\cdot}$ be a sub-multiplicative matrix norm.
    Then, the following three inequalities hold true
    \begin{align*}
    \mnorm{A^2 - B^2} &\le 2 \mnorm{A-B} ( \mnorm{A} + \mnorm{B} ), \\
    \mnorm{A^{-1} - B^{-1}} &\le \mnorm{A-B} \mnorm{A^{-1}} \mnorm{B^{-1}},\quad \text{and} \\
            \mnorm{A^{-2} - B^{-2}} &\le 2 \mnorm{A-B} ( \mnorm{A} + \mnorm{B} ) \mnorm{A^{-2}} \mnorm{B^{-2}}.
    \end{align*}
\end{lemma}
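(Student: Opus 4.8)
The plan is to prove each of the three inequalities by rewriting the relevant matrix difference as a product one of whose factors is $A-B$ (or $A^2-B^2$), and then applying sub-multiplicativity together with the triangle inequality. No eigenvalue or spectral arguments enter, so every step works verbatim for an arbitrary sub-multiplicative matrix norm, in particular for the operator and Frobenius norms used elsewhere. For the first inequality I would use the (non-commutative) telescoping identity $A^2 - B^2 = A(A-B) + (A-B)B$, valid without assuming $AB = BA$. Sub-multiplicativity and the triangle inequality then give $\mnorm{A^2-B^2} \le \mnorm{A}\mnorm{A-B} + \mnorm{A-B}\mnorm{B} = \mnorm{A-B}(\mnorm{A}+\mnorm{B})$, which is in fact sharper than the stated bound; the extra factor of $2$ is carried only for convenience when the inequality is invoked downstream.

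For the second inequality, the key identity is $A^{-1} - B^{-1} = A^{-1}(B-A)B^{-1}$, which holds since $A$ and $B$ are invertible. Applying sub-multiplicativity twice and using $\mnorm{B-A} = \mnorm{A-B}$ yields $\mnorm{A^{-1}-B^{-1}} \le \mnorm{A-B}\mnorm{A^{-1}}\mnorm{B^{-1}}$. For the third inequality I would combine the previous two observations: first write $A^{-2} - B^{-2} = A^{-2}(B^2 - A^2)B^{-2}$, i.e.\ the same conjugation identity with $A^2, B^2$ in place of $A, B$, so that sub-multiplicativity gives $\mnorm{A^{-2}-B^{-2}} \le \mnorm{A^{-2}}\mnorm{B^2-A^2}\mnorm{B^{-2}}$; then bound $\mnorm{B^2-A^2} = \mnorm{A^2-B^2} \le 2\mnorm{A-B}(\mnorm{A}+\mnorm{B})$ by the first inequality and assemble the pieces.

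The argument is entirely routine, so there is essentially no obstacle; the only points requiring care are (i) not inadvertently assuming $A$ and $B$ commute when manipulating $A^2-B^2$ or $A^{-2}-B^{-2}$ (hence the explicit non-commutative identities above), and (ii) noting that only sub-multiplicativity of the norm is used, so the lemma applies to all the norms appearing later in the paper.
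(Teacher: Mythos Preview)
Your proposal is correct. The second and third inequalities are handled exactly as in the paper: the identity $A^{-1}-B^{-1}=A^{-1}(B-A)B^{-1}$ followed by sub-multiplicativity, and then a composition of the first two results for $A^{-2}-B^{-2}$.

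For the first inequality your route differs from the paper's and is in fact cleaner. The paper expands $(B+(A-B))^2-B^2=(A-B)^2+B(A-B)+(A-B)B$, bounds this by $\mnorm{A-B}^2+2\mnorm{A-B}\mnorm{B}$, invokes symmetry to replace $\mnorm{B}$ by $\min(\mnorm{A},\mnorm{B})$, and only then absorbs the quadratic term via $\mnorm{A-B}\le\mnorm{A}+\mnorm{B}$ to reach the stated bound with the factor $2$. Your telescoping identity $A^2-B^2=A(A-B)+(A-B)B$ skips the quadratic term entirely and yields $\mnorm{A^2-B^2}\le\mnorm{A-B}(\mnorm{A}+\mnorm{B})$ in one line---strictly sharper, as you note. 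Either way the downstream application (Lemma~\ref{lem:mtx-inv-square-derivative}) only needs the constant up to a factor, so nothing is lost, but your version is the more direct argument.
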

\begin{proof}
We start with the first inequality. By the triangular inequality and sub-multiplicativity, we have
\begin{align*}
\mnorm{A^2 - B^2} &= \mnorm{(B+A-B)^2 - B^2} \\
&= \mnorm{(A-B)^2 + B(A-B) + (A-B)B} \\
&\le \mnorm{A-B}^2 + 2 \mnorm{A-B} \mnorm{B}.
\end{align*}
A combination of symmetry and the triangle inequality  yields
\begin{align*}
\mnorm{A^2 - B^2} &\le \mnorm{A-B}^2 + 2 \mnorm{A-B} \min \left( \mnorm{A}, \mnorm{B} \right) \\
&\le \mnorm{A-B} (  \mnorm{A} + \mnorm{B} + 2 \min ( \mnorm{A}, \mnorm{B} ) ) \\
&\le 2 \mnorm{A-B} ( \mnorm{A} + \mnorm{B} )
\end{align*}
as desired.

The second inequality follows from the identity $A^{-1} - B^{-1} = A^{-1} (B-A) B^{-1}$ and the sub-multiplicativity of the operator norm.

Iteratively applying the first and second results gives the final inequality
    \begin{align*}
         \mnorm{A^{-2} - B^{-2}} &\le \mnorm{A^2 - B^2} \mnorm{A^{-2}} \mnorm{B^{-2}} \le  2 \mnorm{A-B} ( \mnorm{A} + \mnorm{B} ) \mnorm{A^{-2}} \mnorm{B^{-2}} ,
    \end{align*}
this concludes the proof of Lemma~\ref{lem:mtx-inv-square-diff}.
\end{proof}
\begin{lemma}
\label{lem:mtx-inv-square-derivative}
Let $A \colon \bR^n \to \bR^{m \times m}$ be a matrix-valued function. Assume at a point $x$, $A$ is continuous and differentiable, and $[A(x)]^{-2}$ exists within a neighborhood of $x$. Let $\mnorm{\cdot}$ be a sub-multiplicative matrix norm.
Let $\varphi (x) = A(x)^{-2}$ and fix a norm-one vector $v\in\bR^n$, then
\begin{align*}
\mnorm{ \nabla_v \varphi(x)} \le 4 \mnorm{A(x)} \mnorm{[A(x)]^{-2}}^2 \mnorm{\nabla_v A(x)}.
\end{align*}
\end{lemma}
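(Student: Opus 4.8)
The plan is to estimate the difference quotient defining $\nabla_v \varphi(x)$ and pass to the limit, using the third inequality of Lemma~\ref{lem:mtx-inv-square-diff} as the key estimate. First I would observe that the hypotheses guarantee $\varphi$ is differentiable at $x$: since $A$ is differentiable at $x$ and $[A(x)]^{-2}$ exists on a neighborhood of $x$, the matrix $A(y)$ is invertible for all $y$ in some ball around $x$, and because matrix squaring and matrix inversion are smooth on the open set of invertible matrices, $\varphi = A^{-2}$ is differentiable at $x$. Hence the one-sided limit $\nabla_v \varphi(x) = \lim_{s\downarrow 0}(\varphi(x+sv)-\varphi(x))/s$ exists, and by continuity of the norm, $\mnorm{\nabla_v\varphi(x)} = \lim_{s\downarrow 0}\mnorm{(\varphi(x+sv)-\varphi(x))/s}$.

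Next, for $s>0$ small enough that $A(x+sv)$ is invertible, I would apply the third bound in Lemma~\ref{lem:mtx-inv-square-diff} with the substitution $A \gets A(x+sv)$, $B \gets A(x)$, and then divide through by $s$, to obtain
\[
\mnorm{\tfrac{\varphi(x+sv)-\varphi(x)}{s}} \le 2\,\mnorm{\tfrac{A(x+sv)-A(x)}{s}}\bigl(\mnorm{A(x+sv)}+\mnorm{A(x)}\bigr)\mnorm{[A(x+sv)]^{-2}}\,\mnorm{[A(x)]^{-2}}.
\]
Finally, I would let $s \downarrow 0$ in this inequality. By the definition of the directional derivative together with continuity of the norm, $\mnorm{(A(x+sv)-A(x))/s} \to \mnorm{\nabla_v A(x)}$; by continuity of $A$ and of the norm, $\mnorm{A(x+sv)}+\mnorm{A(x)} \to 2\mnorm{A(x)}$; and since matrix inversion is continuous on the invertible matrices, $\mnorm{[A(x+sv)]^{-2}} \to \mnorm{[A(x)]^{-2}}$. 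Passing to the limit through the product of the four factors on the right-hand side and collecting constants yields $\mnorm{\nabla_v\varphi(x)} \le 4\,\mnorm{A(x)}\,\mnorm{[A(x)]^{-2}}^2\,\mnorm{\nabla_v A(x)}$, as claimed.

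I do not expect a genuine obstacle here. The only points that require care are the local invertibility of $A$ near $x$ (which is exactly the standing hypothesis "$[A(x)]^{-2}$ exists within a neighborhood of $x$") and the legitimacy of interchanging the $s\downarrow 0$ limit with the product of the four factors, which is routine once one records that each factor converges individually; the whole argument is a one-line application of Lemma~\ref{lem:mtx-inv-square-diff} followed by a limiting argument.
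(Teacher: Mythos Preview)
Your proposal is correct and follows essentially the same argument as the paper: both apply the third inequality of Lemma~\ref{lem:mtx-inv-square-diff} to the difference quotient $\varphi(x+sv)-\varphi(x)$ and pass to the limit $s\downarrow 0$ using continuity of $A$, of matrix inversion, and of the norm. Your write-up is in fact a bit more careful than the paper's in justifying why the limit can be taken factor by factor, but the substance is identical.
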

\begin{proof}
Let $\varphi(x) = [\varphi(x)]^{-2}$. We have
\begin{align*}
\mnorm{ \nabla_v \varphi(x) } 
&= \mnorm{ \lim_{u \to 0} \frac{\varphi(x + uv) - \varphi(x)}{u}  } \\
&= \lim_{u \to 0} \frac{ \mnorm{ \varphi(x + uv) - \varphi(x) } }{u} \\
&\le \lim_{u \to 0} \frac{ 2 \mnorm{A(x+uv) - A(x)} (\mnorm{A(x+uv)} + \mnorm{A(x) }) \mnorm{\varphi(x+uv)} \mnorm{\varphi(x)} }{u} \\
&= 4 \mnorm{A(x)} \mnorm{\varphi(x)}^2 \lim_{u \to 0} \frac{\mnorm{A(x+uv) - A(x)}}{u} \\
&= 4 \mnorm{A(x)} \mnorm{\varphi(x)}^2 \mnorm{\nabla_v A(x)}
\end{align*}
where the inequality is by \cref{lem:mtx-inv-square-diff}.
\end{proof}

Next, we establish a general result that bounds the divergence of a matrix-valued function by the norm of its derivatives.
\begin{lemma}
\label{lem:bound-div-by-derivative}
Let $M \colon \bR^d \to \bR^{d \times d}$ be a matrix-valued function. (Note that $M(x)$ does not need to be a symmetric matrix.) Then,
    $\left\| \nabla \cdot M(x) \right\|^2 \le d \sum_{j=1}^d \opnorm{ \nabla_j M(x) }^2.$
\end{lemma}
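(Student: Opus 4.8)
The plan is to recognize the divergence vector $\nabla \cdot M(x)$ as a sum of column-selections of the directional-derivative matrices $\nabla_j M(x)$, and then finish with the triangle inequality and Cauchy--Schwarz. First I would unpack the definitions. By the definition of $\nabla \cdot$ broadcast to the rows of $M$, the $i$-th component of $\nabla \cdot M(x)$ is $\sum_{j=1}^d \partial M_{ij}(x)/\partial x_j$. On the other hand, $\nabla_j M(x)$ is the $d \times d$ matrix with $(i,k)$-entry $\partial M_{ik}(x)/\partial x_j$, so that its $j$-th column, namely $(\nabla_j M(x))\, e_j$, is precisely the vector whose $i$-th entry is $\partial M_{ij}(x)/\partial x_j$. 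Matching these expressions componentwise gives the key identity
\[
\nabla \cdot M(x) = \sum_{j=1}^d (\nabla_j M(x))\, e_j .
\]

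With this identity in hand, I would estimate directly. Using the triangle inequality, followed by $\mnorm{A e_j} \le \opnorm{A}\,\mnorm{e_j} = \opnorm{A}$ (submultiplicativity of the operator norm together with $\mnorm{e_j}=1$), one obtains
\[
\mnorm{\nabla \cdot M(x)} \;\le\; \sum_{j=1}^d \mnorm{(\nabla_j M(x))\, e_j} \;\le\; \sum_{j=1}^d \opnorm{\nabla_j M(x)} .
\]
Squaring both sides and applying the Cauchy--Schwarz inequality in the form $\bigl(\sum_{j=1}^d a_j\bigr)^2 \le d \sum_{j=1}^d a_j^2$ with $a_j = \opnorm{\nabla_j M(x)}$ yields
\[
\mnorm{\nabla \cdot M(x)}^2 \;\le\; d \sum_{j=1}^d \opnorm{\nabla_j M(x)}^2,
\]
which is the claimed bound.

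There is no substantial obstacle here: the only point requiring a moment's thought is the bookkeeping that identifies the divergence of a matrix-valued function with the sum of the $j$-th columns of its coordinate derivatives, after which the inequality is two textbook steps. The factor $d$ is the (unavoidable, in this argument) cost of Cauchy--Schwarz, and it is harmless for the downstream use of the lemma in bounding $\mnorm{\nabla \cdot \widehat{G}_k}$.
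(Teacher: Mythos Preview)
Your proof is correct and follows essentially the same route as the paper: both arguments rest on the two ingredients that each column of $\nabla_j M(x)$ has $\ell_2$ norm at most $\opnorm{\nabla_j M(x)}$, and that Cauchy--Schwarz (equivalently $\|u\|_1^2 \le d\|u\|_2^2$) produces the factor $d$. The only cosmetic difference is that you package the divergence via the identity $\nabla \cdot M(x) = \sum_j (\nabla_j M(x)) e_j$ before applying the triangle inequality, whereas the paper works componentwise first and then swaps the order of summation.
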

\begin{proof}
Denote the $i,j$-th entry of a matrix $A$ by $A_{ij}$. Recall that the $\ell_2$ norm of each row or column vector of a matrix is upper bounded by the operator norm of the matrix. Thus,
\begin{align*}
    \sum_{i=1}^d \left( \frac{\partial M_{ij}(x)}{\partial x_j} \right)^2 
    &= \sum_{i=1}^d \left(\nabla_j M(x)\right)_{ij}^2 \le \opnorm{ \nabla_j M(x) }^2.
\end{align*}
Hence, applying $\|u\|_1^2 \leq d\|u\|_2^2$ for all $u \in \bR^d$ yields
\begin{align*}
    \left\| \nabla \cdot M(x) \right\|^2 
    = \sum_{i=1}^d \left( \sum_{j=1}^d \frac{\partial M_{ij}(x)}{\partial x_j} \right)^2 &\le d \sum_{i=1}^d  \sum_{j=1}^d \left( \frac{\partial M_{ij}(x)}{\partial x_j} \right)^2 %
    \le d \sum_{j=1}^d \opnorm{ \nabla_j M(x) }^2,
\end{align*}
as desired. This concludes the proof of~\cref{lem:bound-div-by-derivative}.
\end{proof}
Finally, we prove \cref{lem:bound-div-diffusion}.
\begin{proof}[Proof of \cref{lem:bound-div-diffusion}]
By \cref{lem:bound-div-by-derivative}, we have
$\|\nabla \cdot \widehat{G}_k \|^2 \le d \sum_{j=1}^d \opnorm{\nabla_j \widehat{G}_k}^2.$
We now claim that for each $j=1, \dots , d$,
\[
\opnorm{\nabla_j \widehat{G}_k } \le 364 \, t H_{T-kh-t} (x)
\]
which will imply the desired bound $\| \nabla \cdot \widehat{G}_k \| \le 364 \, t d H_{T-kh-t} (x)$.  Let 
$T(x) = (1 - t)I_d - 2 t \nabla^2 \ln p_{T-kh-t}(x)$
so that $\widehat{G}_k(x) = [T(x)]^{-2}$. 
We invoke \cref{lem:mtx-inv-square-derivative} with $v=e_j$ (note that $\opnorm{\cdot}$ is sub-multiplicative) to obtain
\begin{align*}
\opnorm{ \nabla_j \widehat{G}_k (x) } \le 4 \opnorm{ T(x) } \opnorm{ \widehat{G}_k (x) }^2 \opnorm{ \nabla_j T(x) }.
\end{align*}
Note that both $\opnorm{T(x)}$ and $\| \widehat{G}_k (x)\|_{\textrm{op}}$ are bounded by constants. On the one hand, \cref{lem:bounds-G-backwards} ensures $ \|\widehat{G}_k \|_{\textrm{op}} \le \frac{11}{2}$. 
    On the other, since $\opnorm{\nabla^2 \ln p_{T-kh-t}} \le L$ we have
\[
\opnorm{T(x)} \le  1 - t + 2tL \le 1+2tL < \frac{3}{2}
\]
where the last inequality follows since $t\le \frac{1}{2(2L+1)} $ and so $ tL < 1/4$. Plugging in the bounds for $\opnorm{T(x)}$ and $\| \widehat{G}_k (x)\|_{\textrm{op}}$ gives
\begin{align*}
\opnorm{\nabla_j \widehat{G}_k (x)} 
&\le 182 \opnorm{\nabla_j T(x)} = 364 \, t \opnorm{ \nabla_j \nabla^2 \ln p_{T-kh-t}(x) } \le 364 \, t H_{T-kh-t} (x)
\end{align*}
as desired; concluding the proof~\cref{lem:bound-div-diffusion}.
\end{proof}

\subsection{Proof of \cref{lem:bound-div-diffusion-hybrid} -- Bound on the Divergence of Diffusion Matrix $\nabla \cdot \widehat{G}_k$ for Hybrid Algorithm}\label{app:proof-div-diffusion-hybrid}

The proof follows the same structure as that of \cref{lem:bound-div-diffusion} (see \cref{app:proof-div-diffusion}). The key difference is that we let
\[T(x) = I_d - 2 t \nabla^2 \ln p_{T-kh-t}(x)\]
so that $\widehat{G}_k(x) = [T(x)]^{-2}$. The resulting bound only differs in the constant factor.

\subsection{Proof of \cref{lem:hybrid-one-step-aux-ou2}}\label{app:proof-hybrid-one-step-aux-ou2}
For ease of notation, we let $\nu_0 = \pzb^{(k)}$, $\nu_t = \ptb^{(k)}$, $\nu_{t \mid 0} = \pb{t \mid 0}^{(k)}$ and $\nu_{0 \mid t} = \pb{0 \mid t}^{(k)}$ within this proof.

By Bayes rule, we have
\begin{align*}
    \nabla_x \ln \nu_{t \mid 0} (x \mid a) 
    &= \nabla_x \left[ \ln \nu_{0 \mid t} (a \mid x) + \ln \nu_t (x) \right].
\end{align*}
Let $g(z; \mu, \Sigma)$ denote the density of $\cN(\mu, \Sigma)$ at $z$. Then,
\begin{align*}
    \nu_{0 \mid t} (a \mid x) &= p_{T-kh \mid T-kh-t} (a \mid x) \\
    &= g(a; e^{-t} x, (1 - e^{-2t}) I).
\end{align*}
Therefore, 
\begin{align*}
    \nabla_x \ln \nu_{0 \mid t} (a \mid x) &= \nabla_x \left[ -\frac{\|a - e^{-t}x\|^2}{2 (1-e^{-2t})} \right] \\
    &= \frac{e^{-t} (a - e^{-t}x)}{ 1-e^{-2t} }.
\end{align*}
Plugging the above into the Bayes rule identity, we have
\begin{align*}
    \nabla_x \ln \nu_{t \mid 0} (x \mid a) 
    &= \frac{e^{-t} (a - e^{-t}x)}{ 1-e^{-2t} } + \nabla_x \ln \nu_t (x) .
\end{align*}

Therefore, 
\begin{align*}
    \bE_{(a, x) \sim (\nu_0, \nu_t)} \left[ \left\| \nabla_x \ln \nu_{t \mid 0} (x \mid a) \right\|^2 \right]
    &= \bE_{(a, x) \sim (\nu_0, \nu_t)} \left[ \left\|\frac{e^{-t} (a - e^{-t}x)}{ 1-e^{-2t} } + \nabla_x \ln \nu_t (x) \right\|^2 \right]  \\
    &\lesssim  \bE_{(a, x) \sim (\nu_0, \nu_t)} \left[ \left\| \frac{e^{-t} (a - e^{-t}x)}{ 1-e^{-2t} } \right\|^2 + \|\nabla_x \ln \nu_t (x) \|^2\right] \\
    &=  \frac{e^{-2t}}{ (1-e^{-2t})^2 } \bE_{(a, x) \sim (\nu_0, \nu_t)} \left[ \left\| a - e^{-t}x \right\|^2 \right] + \bE_{\nu_t} \left[ \|\nabla_x \ln \nu_t (x) \|^2 \right] .
\end{align*}
Then, since $\nu_{0 \mid t} (\cdot \mid x) = \cN(e^{-t}x , (1- e^{-2t})I )$, we have
\begin{align*}
    \bE_{(a, x) \sim (\nu_0, \nu_t)} \left[ \left\| a - e^{-t}x \right\|^2 \right] &= \bE_{ x \sim \nu_t } \bE_{a \sim \nu_{0\mid t}(\cdot \mid x)} \left[ \left\| a - e^{-t}x \right\|^2 \right] \\
    &= d (1 - e^{-2t}) .
\end{align*}
Therefore,
\begin{align*}
    t^2 L^2\, \bE_{(a, x) \sim (\nu_0, \nu_t)} \left[ \| \nabla \ln \nu_{t \mid 0} (x \mid a) \|^2 \right] 
    &\lesssim   t^2 L^2\, \left\{ \frac{d e^{-2t}}{ 1-e^{-2t} }  + \bE_{\nu_t} \left[ \|\nabla_x \ln \nu_t (x) \|^2 \right] \right\}  \\
    &= d L^2  \frac{ t^2 e^{-2t}}{ 1-e^{-2t} }  +  t^2 L^2 \bE_{\nu_t} \left[ \|\nabla_x \ln \nu_t (x) \|^2 \right] 
\end{align*}
Note that for $0 \le t \le h \le \frac{1}{8L} \le \frac{1}{8}$, it holds that $1 - e^{-2t} \ge t$. Therefore,
\begin{align*}
    \frac{t^2e^{-2t}}{1-e^{-2t}}  \le   \frac{t^2}{t}  =  t .
\end{align*}
The claim has been proved.

\subsection{A General Lemma for Time Derivative of KL}
\label{app:dKL/dt}
We restate the following lemma from prior work \citep[Section 8.1.2]{vempala2019rapid}; see also \cite{wibisono2019proximal, chen2022improved}, and include a proof for completeness.
\begin{lemma}[Time derivative of KL]
\label{lem:dKL/dt}
Let $p_t, q_t \colon \bR^d \to \bR$ be probability density functions and $t \ge 0$. 
Suppose the following conditions hold.
\begin{enumerate}[leftmargin=0.2cm]    
    \item There is an integrable function $\theta \colon \bR^d \to \bR$, i.e. $\int \theta(x) dx < \infty$, such that $\left| \frac{\partial}{\partial t} \left( p_t \ln \frac{p_t}{q_t} \right) \right| \le \theta(x) $ for all $t$ and almost every $x$.
    \item There is an integrable function $\kappa \colon \bR^d \to \bR$, i.e. $\int \kappa(x) dx < \infty$, such that $\left| \frac{\partial}{\partial t} p_t  \right| \le \kappa(x) $ for all $t$ and almost every $x$.
\end{enumerate}
Then,
\begin{align*}
\frac{d}{dt} \mathrm{KL}(p_t \| q_t) = \int \left( \frac{\partial p_t}{\partial t} \ln \frac{p_t}{q_t} 
- \frac{\partial q_t}{\partial t} \frac{p_t}{q_t} \right) dx.
\end{align*}
\end{lemma}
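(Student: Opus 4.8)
The plan is to differentiate the integral defining $\mathrm{KL}(p_t \| q_t)$ under the integral sign, expand the integrand with the product and chain rules, and kill the leftover term $\int \partial_t p_t\,dx$ using that each $p_t$ is a probability density.

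First I would write $\mathrm{KL}(p_t \| q_t) = \int_{\bR^d} p_t(x) \ln\frac{p_t(x)}{q_t(x)}\,dx$ and invoke the Leibniz integral rule (differentiation under the integral sign). Since, by hypothesis~1, $t \mapsto p_t(x)\ln\frac{p_t(x)}{q_t(x)}$ is differentiable with $\left|\partial_t\!\left(p_t \ln\frac{p_t}{q_t}\right)\right| \le \theta(x)$ for an integrable $\theta$ independent of $t$, we may exchange $\frac{d}{dt}$ and $\int$:
\[
\frac{d}{dt}\mathrm{KL}(p_t\|q_t) = \int_{\bR^d} \frac{\partial}{\partial t}\!\left( p_t \ln\frac{p_t}{q_t}\right) dx.
\]

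Next I would expand the integrand. Writing $p_t\ln(p_t/q_t) = p_t \ln p_t - p_t \ln q_t$ and differentiating gives
\[
\frac{\partial}{\partial t}\!\left( p_t \ln \frac{p_t}{q_t}\right) = \frac{\partial p_t}{\partial t}\ln\frac{p_t}{q_t} + \frac{\partial p_t}{\partial t} - \frac{p_t}{q_t}\frac{\partial q_t}{\partial t},
\]
where the middle term comes from the derivative of $\ln p_t$ hitting $p_t$. Substituting back and splitting the integral,
\[
\frac{d}{dt}\mathrm{KL}(p_t\|q_t) = \int \frac{\partial p_t}{\partial t}\ln\frac{p_t}{q_t}\,dx + \int \frac{\partial p_t}{\partial t}\,dx - \int \frac{p_t}{q_t}\frac{\partial q_t}{\partial t}\,dx.
\]
It remains to show the middle term vanishes: by hypothesis~2, $|\partial_t p_t| \le \kappa(x)$ with $\kappa$ integrable, so a second application of the Leibniz rule yields $\int \partial_t p_t\,dx = \frac{d}{dt}\int p_t\,dx = \frac{d}{dt} 1 = 0$. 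This leaves exactly the claimed identity.

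The only delicate point, and the main obstacle, is the rigorous justification of the two interchanges of differentiation and integration; the rest is the elementary product rule. This is precisely what the two domination hypotheses are tailored for: condition~1 dominates the $t$-derivative of the full integrand $p_t\ln(p_t/q_t)$, licensing the first swap, and condition~2 dominates $\partial_t p_t$, licensing the second. One should also keep in mind the standing assumptions that $p_t, q_t > 0$ (so $\ln(p_t/q_t)$ is defined) and that the pointwise $t$-derivatives exist, as in the cited treatments \citep{vempala2019rapid,wibisono2019proximal,chen2022improved}.
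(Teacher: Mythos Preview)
Your proof is correct and follows essentially the same route as the paper's: swap $\tfrac{d}{dt}$ and $\int$ via Leibniz using hypothesis~1, expand $\partial_t\!\left(p_t\ln\tfrac{p_t}{q_t}\right)$ by the product rule into the three terms you wrote, and kill $\int \partial_t p_t\,dx$ via a second Leibniz swap using hypothesis~2. The only cosmetic difference is that the paper applies the product rule to $p_t\cdot\ln(p_t/q_t)$ directly rather than splitting as $p_t\ln p_t - p_t\ln q_t$, but the resulting terms and their justifications are identical.
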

\begin{proof}
We have
\begin{align*}
\frac{d}{dt} \mathrm{KL}(p_t \| q_t) &= \frac{d}{dt} \int p_t \ln \frac{p_t}{q_t} \, dx \\
&= \int \frac{\partial}{\partial t} \left( p_t \ln \frac{p_t}{q_t} \right) \, dx \\
&= \int \left( \ln \frac{p_t}{q_t} \frac{\partial p_t}{\partial t} + p_t \frac{\partial}{\partial t} \ln \frac{p_t}{q_t} \right) \, dx \\
&= \int \left( \ln \frac{p_t}{q_t} \frac{\partial p_t}{\partial t} 
- \frac{p_t}{q_t} \frac{\partial q_t}{\partial t} + \frac{\partial p_t}{\partial t} \right) dx \\
&= \int \left( \frac{\partial p_t}{\partial t} \ln \frac{p_t}{q_t} 
- \frac{\partial q_t}{\partial t} \frac{p_t}{q_t} \right) dx.
\end{align*}
Here, the second equality uses the Leibniz integral rule, which holds because of the first domination condition. The last step is by
\begin{align*}
    \int \frac{\partial p_t}{\partial t} dx = \frac{d}{dt} \int p_t dx = 0
\end{align*}
which applies the Leibniz integral rule on $p_t$ thanks to the second domination condition.
\end{proof}

\section{Method and Experimental Details}

\begin{figure}[h]
    \centering
    \includegraphics[width=.45\linewidth]{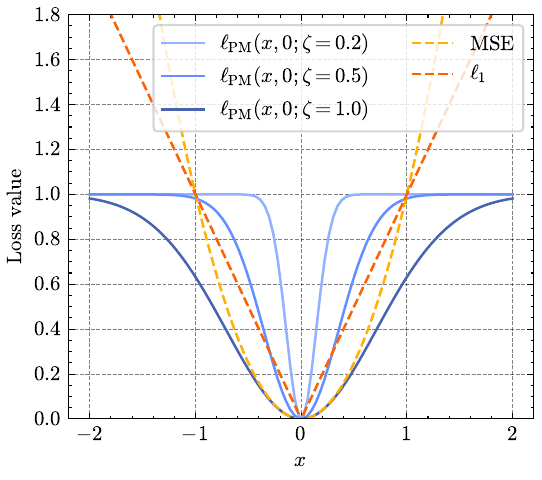}
    \caption{\camera{One-dimensional visualization of the proximal matching loss $\ell_{\text{PM}}$, in comparison with the mean squared error (MSE) and mean absolute error ($\ell_1$) losses.}}
    \label{fig:prox-matching-loss}
\end{figure}

\camera{
\subsection{Pseudo-code for Proximal Matching Training}
\begin{algorithm}[H]
\caption{\camera{Proximal Matching Training}}
\label{alg:training}
\begin{algorithmic}[1]
\Require Forward SDE noise schedule $\beta(t)$ (c.f.~\cref{eq:general-forward}); sampling distribution $q(t,\lambda)$ over $(t,\lambda)$; data distribution $p_0$; annealing schedule $\mathrm{Anneal}_\zeta(k)$ for $\zeta$; optimization algorithm \texttt{OptimizationStep} (e.g., SGD, Adam)
\State \textbf{Initialize} network parameters $\theta_0$
\Repeat
  \State $k \gets k + 1$
  \State $\zeta_k \gets \mathrm{Anneal}_\zeta(k)$ \Comment{Update $\zeta$ according to the annealing schedule}
  \State Sample $t, \lambda \sim q(t,\lambda)$, $X_0 \sim p_0$, and $\eta, \varepsilon \sim \mathcal{N}(0, I)$ independently
  \State $\alpha_t \gets \exp\!\left(-\int_0^{t} \beta(s)\,ds\right)$
  \State $X_t \gets \sqrt{\alpha_t}\, X_0 + \sqrt{1-\alpha_t}\, \eta$ \Comment{Generate $X_t \sim p_t$}
  \State $Y \gets X_t + \sqrt{\lambda}\, \varepsilon$
  \State Compute loss $\ell_{\mathrm{PM}} \gets \ell_{\mathrm{PM}} \left(\epsilon_{\theta_k}(Y; t, \lambda),\, \varepsilon;\, \zeta_k\right)$ \Comment{Proximal matching loss (cf.~Eq.~\eqref{eq:pm-epsilon})}
  \State $\theta_{k+1} \gets \texttt{OptimizationStep}\!\left(\theta_k,\; \nabla_{\theta} \ell_{\mathrm{PM}}\right)$
\Until{convergence}
\Ensure $f_\theta(x; t, \lambda) = x - \sqrt{\lambda}\,\epsilon_\theta(x; t, \lambda)$ \Comment{Approximated $\prox_{-\lambda \ln p_t}(x)$}
\end{algorithmic}
\end{algorithm}
}

\subsection{Sampling $t$ and $\lambda$}
\label{sec:sampling-t-and-lambda}
We provide more details of the sampling strategy for $t$ and $\lambda$ used for proximal matching. To motivate our design, \cref{fig:lambda_vs_t} displays the $(t, \lambda)$ pairs encountered by \PDM-hybrid under different numbers of sampling steps: 5, 10, 100, and 1000. The $t$ values are evenly distributed in $[0,1]$ regardless of the step count, whereas the range of $\lambda$ varies by the number of steps. Based on this observation, we design the following sampling scheme. We first sample a step number $N$ from a predefined set of candidates. Then, for the chosen $N$, we uniformly sample one of the possible $(t, \lambda)$ pairs associated with that step number. This procedure is repeated independently for each sample in the training batch and is performed on-the-fly. For \PDM-hybrid, the step number candidates are $\{5, 10, 20, 50, 100, 1000\}$. For \PDM-backward, we exclude 5 and 10 due to the instability associated with small step numbers. To promote balanced learning across different step numbers, we apply weighted sampling, with weights proportional to $\log(N)$ for MNIST and $N^{1/3}$ for CIFAR-10.
\begin{figure}
    \centering
    \includegraphics[width=0.5\linewidth]{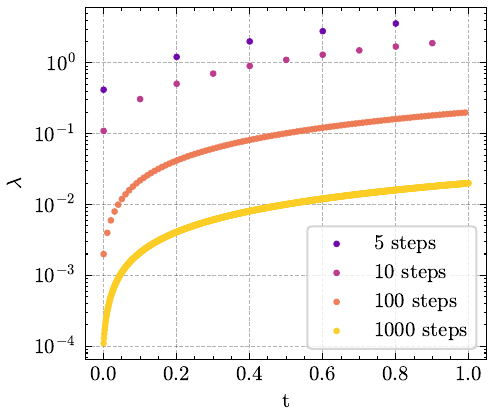}
    \caption{Values of $t$ and $\lambda$ used by \PDM-hybrid under varying numbers of sampling steps. Here, we set $\beta(t) = \beta_{\mathrm{min}} + (\beta_{\mathrm{max}} - \beta_{\mathrm{min}}) \, t$, with $\beta_{\mathrm{min}} = 0.1, \beta_{\mathrm{max}} = 20$, as used in experiments.}
    \label{fig:lambda_vs_t}
\end{figure}

\subsection{Image Generation}
\label{sec:details-image-generation}
For CIFAR-10, we adopt the same U-Net architecture as in \cite{ho2020denoising}. The \PDM is trained for a total of 375k iterations with a batch size of 512. We use the $\ell_1$ loss during the first 75k iterations as a pretraining step, followed by proximal matching loss with $\zeta=2$ for 150k iterations, and then $\zeta=1$ for the last 150k iterations. The learning rate is set to $10^{-4}$.
For score models, we follow the standard practice \citep{ho2020denoising,song2019generative,song2020score} of using a batch size of 128, and train for 1.5M iterations to ensure the number of training samples (i.e., effective training epochs) matches that of the proximal model. We set the learning rate to $2 \times 10^{-4}$ and use gradient clipping and learning rate warm-up, following \cite{ho2020denoising,song2020score}. For both score and proximal models, the last checkpoint is used, without model selection.

For MNIST, we halve the number of filters in the U-Net, following \cite{song2019generative}. The \PDM is trained for a total of 225k iterations with a batch size of 512, using $\ell_1$ loss with learning rate of $10^{-4}$ during the first 75k iterations, followed by proximal matching loss with $\zeta=1$ and $\zeta=0.5$ for 75k iterations each, with a learning rate of $10^{-5}$.
For the score model, we use a batch size of $128$ following the standard practice \citep{ho2020denoising,song2019generative,song2020score} and train for 900k iterations to ensure the number of training samples (i.e., effective training epochs) matches that of the proximal model, using a learning rate of $2 \times 10^{-4}$. 

\paragraph{Precision and recall metrics.} Following \cite{kynkaanniemi2019improved}, precision measures the proportion of generated samples that fall within the manifold of real data. Specifically, real images are first embedded into a feature space via a pre-trained VGG-16 network \citep{simonyan2015very}. Then, a hypersphere is defined around each feature vector with radius equal to the distance to its $k$th nearest neighbor, forming a volume that approximates the data manifold. Precision is then computed as the fraction of generated samples located within this volume, intuitively capturing sample quality and fidelity. Analogously, recall is defined as the fraction of real samples that lie within the manifold of generated samples, reflecting the diversity and data coverage of the generation.

\section{Extension of \PDM to Other SDEs and ODEs}
\label{app:prox-extension}
In this section, we demonstrate conceptually how the proximal diffusion model can be extended to other SDEs and ODEs, such as the variance exploding SDE and probability flow ODE \citep{song2020score}. We leave the theoretical and empirical study of these extensions to future work.

Besides the variance preserving (VP) SDE in \eqref{eq:general-forward}, an alternative forward process used in diffusion models is the variance exploding (VE) SDE \citep{song2020score}, given by
\begin{align}
    dX_t = dW_t.
    \label{eq:ve-sde}
\end{align}
Here we omit the time-dependent diffusion term for simplicity. This corresponds to the standard Brownian motion initialized at the data distribution. The associated reverse-time SDE takes the form
\begin{align*}
    dX_t = -\nabla \ln p_t (X_t) dt + d \bar{W}_t
\end{align*}
where time flows backwards from $T$ to $0$.

Given a time grid $\{0 = t_0 < t_1 < \dots < t_{N} = T\}$, applying forward discretization yields the following score-based algorithms widely used in generative models \citep{song2019generative,Song2020-gy}
\begin{align*}
    X_{k-1} = X_k + \gamma_k \ln p_{t_k} (X_k) + \sqrt{\gamma_k} z
\end{align*}
where $\gamma_k = t_k - t_{k-1}$ and $z \sim \cN(0, I)$. Analogous to the derivation of \PDM for the VP SDE, we can apply backward discretization to the reverse-time VE SDE:
\begin{align*}
    X_{k-1} = X_k + \gamma_k \ln p_{t_{k-1}} (X_{k-1}) + \sqrt{\gamma_k} z,
\end{align*}
and obtain a new proximal sampling scheme:
\begin{align*}
    X_{k-1} = \prox_{\gamma_k \ln p_{t_{k-1}}} \left( X_k + \sqrt{\gamma_k} z \right).
\end{align*}

Furthermore, consider a general forward SDE of the form
\begin{align*}
    dX_t = f(X_t, t) dt + g(t) dW_t,
\end{align*}
where $f \colon \bR^d \times \bR \to \bR^d$ denotes the drift coefficient and $g \colon \bR \to \bR$  the diffusion coefficient. \cite{song2020score} introduce the probability flow (PF) ODE, given by
\begin{align*}
    dX_t = \left[ f(X_t, t) - \frac{1}{2} g(t)^2 \nabla \ln p_{t} (X_t) \right] dt,
\end{align*}
where time flows backwards from $T$ to $0.$
The PF ODE shares the same marginal probability densities as the forward process, and therefore, admits sampling algorithms analogous to those based on the reverse-time SDE. Notably, the DDIM sampler \citep{song2020denoising} has been shown to correspond to a discretization of the PF ODE \citep{salimans2022progressive,Lu2022-hr}. 

In particular, for the VP SDE \eqref{eq:general-forward}, we have $f(x, t) = - \frac{1}{2} \beta(t) x$ and $g(t) = \sqrt{\beta(t)}$, which leads to the following PF ODE
\begin{align*}
    dX_t = \left[ - \frac{1}{2} \beta(t) X_t - \frac{1}{2} \beta(t) \nabla \ln p_{t} (X_t) \right] dt.
\end{align*}
Applying backward discretization yields
\begin{align*}
    X_{k-1} = X_{k} + \frac{\gamma_k}{2} X_{k-1} + \frac{\gamma_k}{2}  \nabla \ln p_{t_{k-1}} (X_{k-1}),
\end{align*}
where $\gamma_k = \int_{t_{k-1}}^{t_k} \beta(s) ds$. This, in turn, gives rise to a proximal-based ODE sampler:
\begin{align*}
    X_{k-1} = \prox_{- \frac{\gamma_k}{2 - \gamma_k} \ln p_{t_{k-1}}} \left( \frac{2}{2 - \gamma_k} X_k \right).
\end{align*}

\section{Connection between Backward Discretization and Proximal Algorithms}
\label{app:backward-proximal}
In this section, we provide the detailed derivations for previously explored connections between proximal algorithms and backward discretization, using two continuous-time processes as examples: gradient flow and Langevin dynamics.

First, consider the gradient flow ODE
\begin{align*}
    dx_t = -\nabla f(x_t) dt
\end{align*}
whose forward (Euler) discretization gives the gradient descent algorithm
\begin{align*}
    x^+ = x - \Delta t \, \nabla f(x)
\end{align*}
where $x$ and $x^+$ denote the current and the next iterates, respectively. On the other hand, backward discretization yields the proximal point method \citep{rockafellar1976monotone}:
\begin{align*}
    x^+ = x - \Delta t \, \nabla f(x^+)\quad 
    \longrightarrow \quad x^+ = \prox_{\Delta t \, f} (x).
\end{align*}

In the context of sampling, forward discretization of the Langevin dynamics
\begin{align*}
    d X_t = -\nabla f( X_t) dt + \sqrt{2} dW_t.
\end{align*}
leads to the unadjusted Langevin algorithm
\begin{align*}
    X^+ = X - \Delta t \nabla f (X) + \sqrt{2 \Delta t} z, \quad z \sim \cN(0, I)
\end{align*}
while backward discretization leads to the proximal Langevin algorithm \citep{pereyra2016proximal,Bernton2018-dd,durmus2018efficient,wibisono2019proximal}:
\begin{align*}
X^+ = X - \Delta t \nabla f (X^+) + \sqrt{2 \Delta t} z
\quad \longrightarrow \quad
X^{+} &= \prox_{\Delta t \, f} \left( X + \sqrt{2\Delta t} z \right).
\end{align*}

\section{Additional Results}

\subsection{Ablation on Smoothness of Target Distribution in 1D}
\label{sec:smoothness-controlled-experiment}
\camera{
We further investigate how the smoothness of the target distribution affects sampling performance. Specifically, we consider a one-dimensional target distribution given by
$p(x) \propto (0.5 + x)^\alpha (0.5 - x)^\alpha, x \in [-0.5, 0.5],$
with varying smoothness parameters $\alpha \in \{0, 0.5, 1.0, 2.0, 5.0\}$. This family starts from a uniform distribution over $[0,1]$ with non-smooth boundaries ($\alpha = 0$) and becomes increasingly smooth as $\alpha$ grows. The score-based sampler uses the canonical Euler--Maruyama discretization (without the additional final denoising step), while the proximal sampler follows the hybrid update in \cref{alg:ProxDM}. As shown in \cref{fig:1d-alpha}, both samplers recover the target distribution when given sufficient sampling steps. The dependence of sample quality on the smoothness parameter $\alpha$ is relatively subtle—likely due to the smoothing effect inherent in the forward diffusion process. Nevertheless, \PDM provides samples that consistently fall within the true support and achieves better alignment with the target density, especially at smaller step counts.
}

\begin{figure}[h]
\centering
\subcaptionbox{5 steps}{\includegraphics[width=\textwidth]{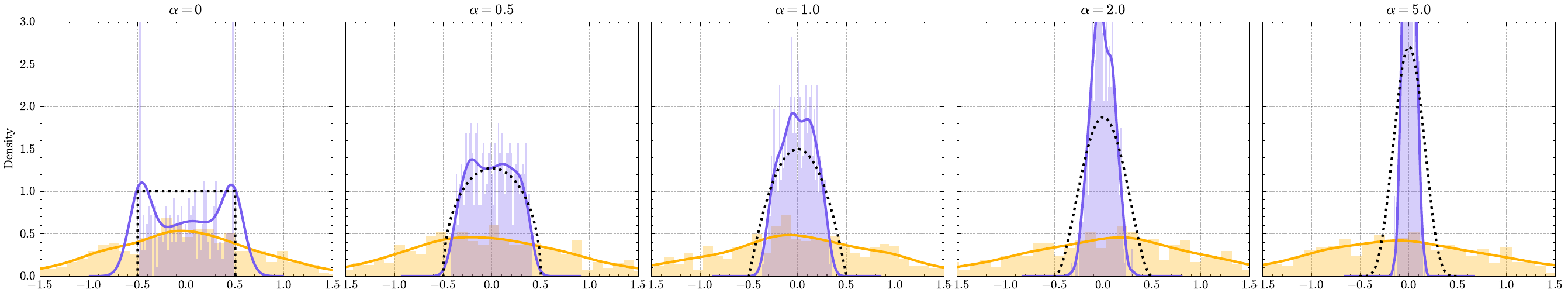}}
\subcaptionbox{10 steps}{\includegraphics[width=\textwidth]{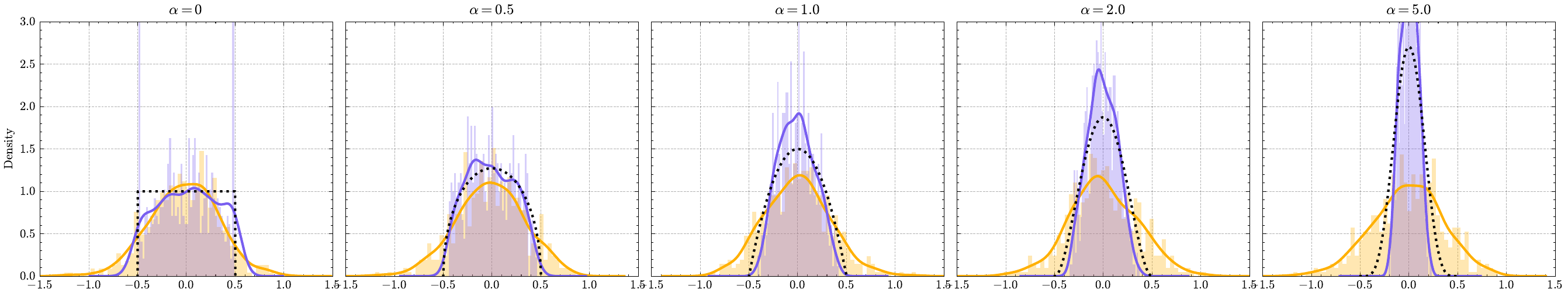}}
\subcaptionbox{100 steps}{\includegraphics[width=\textwidth]{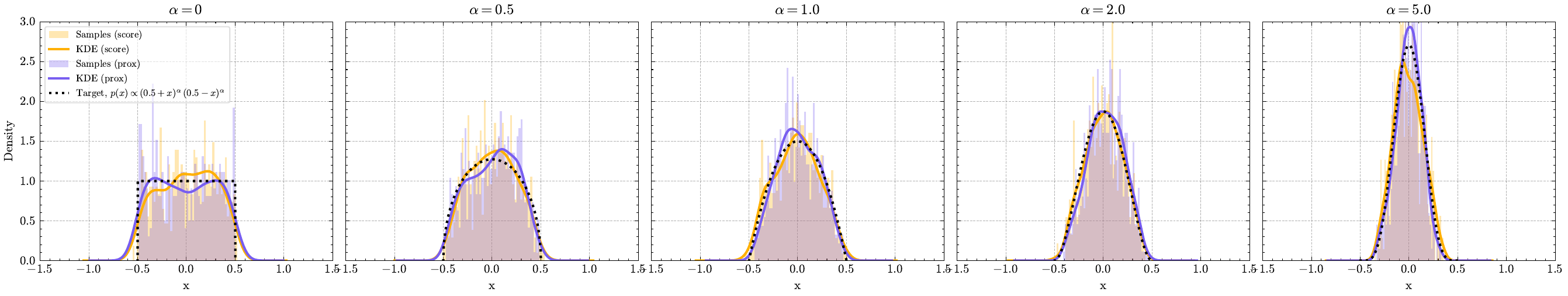}}
\caption{\camera{Samples from score-based and proximal samplers using $5$, $10$, and $100$ sampling steps for distributions with varying level of smoothness at the boundary (controlled by $\alpha$).}}
\label{fig:1d-alpha}
\end{figure}

\subsection{Additional Results}
\label{sec:additional-results}

\begin{figure}[H]
    \centering
    \includegraphics[width = .75\textwidth]{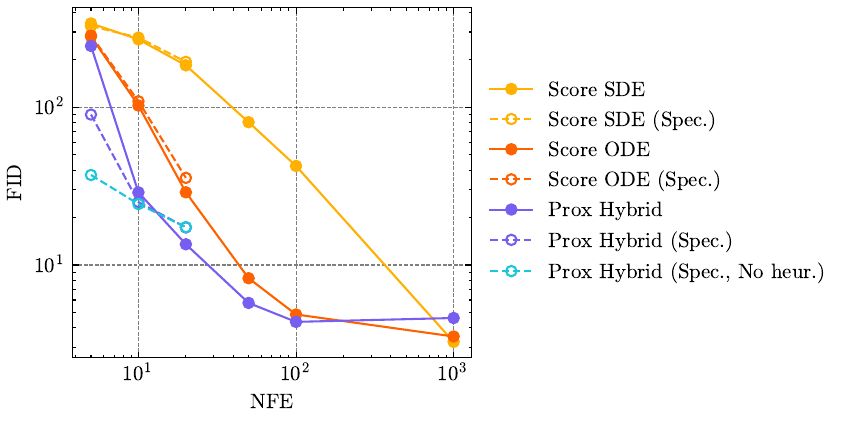}
    \caption{{FID vs. number of sampling steps (or number of function evaluations, NFE) on CIFAR10. Dashed lines indicate models trained specifically for 5, 10 and 20 steps (``Spec.''), while solid lines represent full-range models. The result labeled ``Prox Hybrid (Spec., No Heur.)'' is obtained without applying the heuristics for network parameterization and objective balancing described in \cref{sec:practical}.}}
    \label{fig:fid-cifar10-no-heur}
\end{figure}

\begin{figure}[H]
    \centering
    \subcaptionbox{MNIST Precision\label{fig:mnist-precision}}{%
        \includegraphics[width=0.45\textwidth]{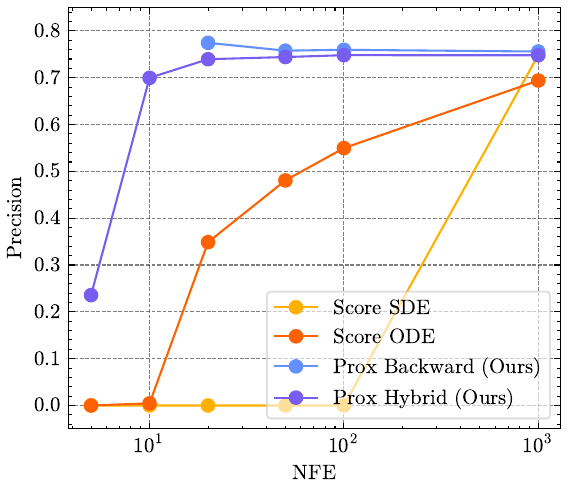}}
    \hfill
    \subcaptionbox{MNIST Recall\label{fig:mnist-recall}}{%
        \includegraphics[width=0.45\textwidth]{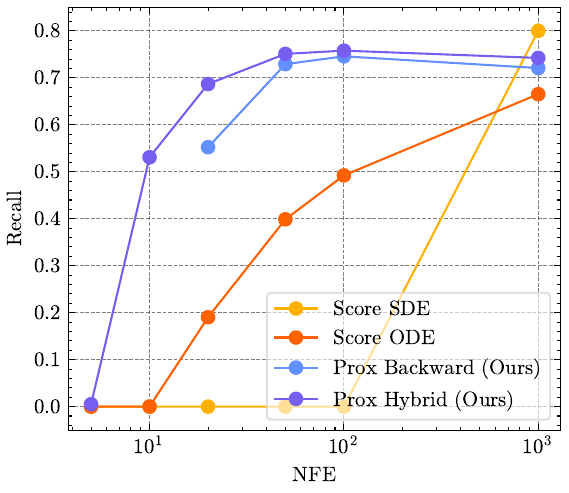}}

    \vspace{0.5em}

    \subcaptionbox{CIFAR-10 Precision\label{fig:cifar10-precision}}{%
        \includegraphics[width=0.45\textwidth]{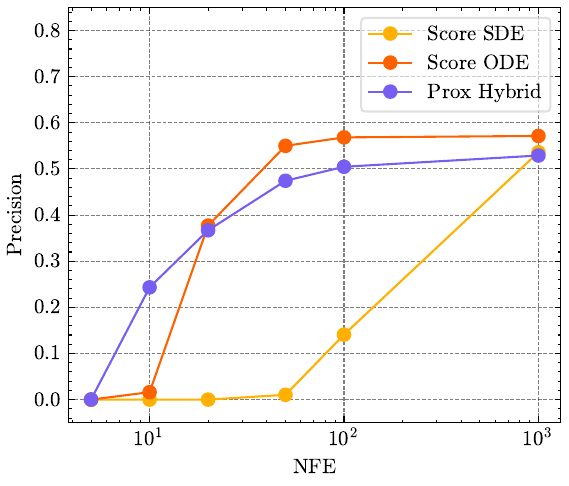}}
    \hfill
    \subcaptionbox{CIFAR-10 Recall\label{fig:cifar10-recall}}{%
        \includegraphics[width=0.45\textwidth]{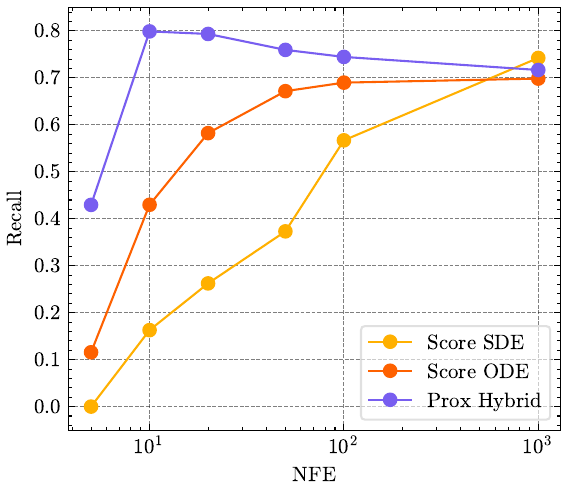}}

    \caption{\camera{Precision and recall metrics \citep{kynkaanniemi2019improved} for MNIST and CIFAR-10 datasets. Precision measures the sample quality while recall reflects the coverage of the data distribution. Across most sampling steps (NFE), \PDM achieves higher precision and recall than score-based baselines, indicating improved sample quality and diversity.}}
    \label{fig:precision-recall-mnist-cifar10}
\end{figure}

\begin{figure}[H]
    \centering
    \includegraphics[width = .45\textwidth]{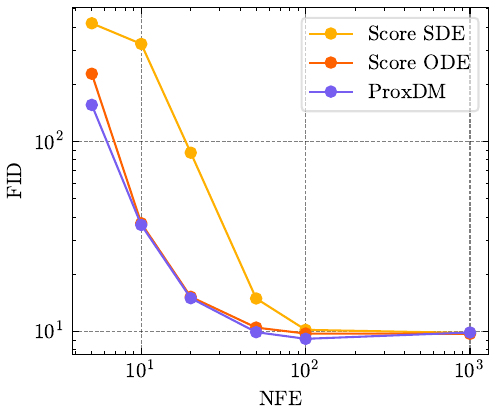}
    \caption{\camera{FID vs.\ number of sampling steps (NFE) on CelebA-HQ ($256\times256$) from score SDE, score ODE and hybrid \PDM samplers. FID is computed over $30{,}000$ generated samples.}}
    \label{fig:fid-celebahq}
\end{figure}

\begin{figure}[H]
    \centering
    \subcaptionbox{100 steps\label{fig:celebahq-samples-100}}{%
        \includegraphics[width=\textwidth]{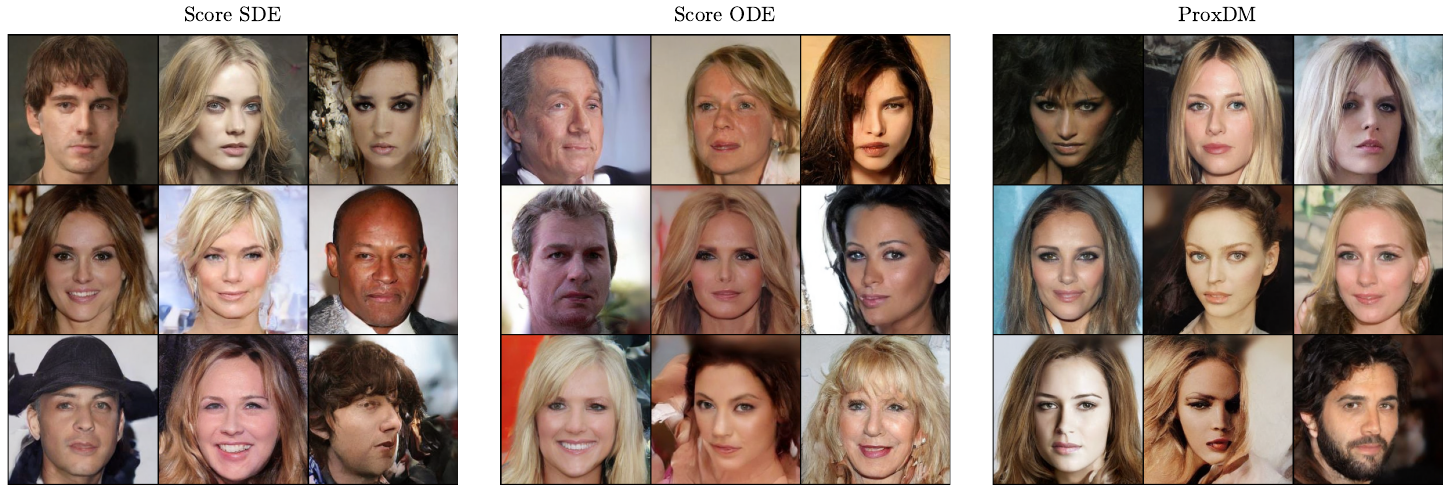}}
    \subcaptionbox{20 steps\label{fig:celebahq-samples-20}}{%
        \includegraphics[width=\textwidth]{figures/celebahq_samples_steps20.pdf}}
    \subcaptionbox{10 steps\label{fig:celebahq-samples-10}}{%
        \includegraphics[width=\textwidth]{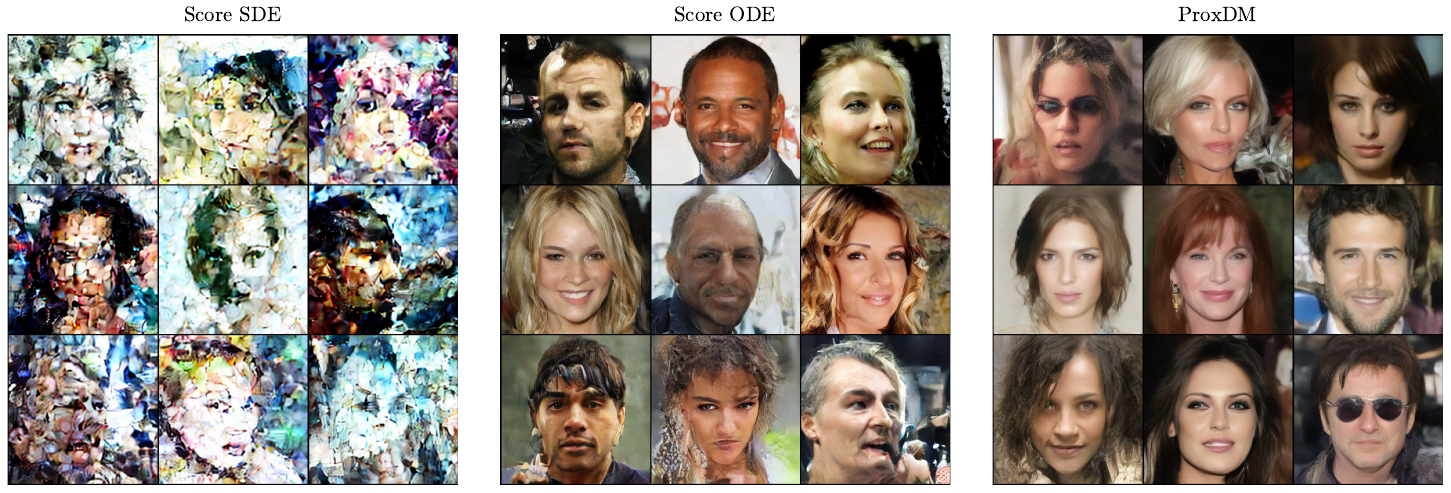}}
    \caption{\camera{Uncurated CelebA-HQ ($256\times256$) samples from score SDE, score ODE and hybrid \PDM samplers. \PDM produces cleaner and more coherent samples with fewer sampling steps.}}
    \label{fig:celebahq-samples}
\end{figure}

 \fi
\end{document}